\newcommand{\Real}[1]{ { {\mathbb R}^{#1} } }
\newcommand{\field}[1]{\mathbb{#1}}
\newcommand{\R}{\field{R}}
\newcommand{\E}{\field{E}}
\newcommand{\One}[1]{ \mathbbm{1}_{#1} }
\renewcommand{\epsilon}{\varepsilon}
\newcommand{\eps}{\epsilon}
\newcommand{\bigO}[1]{{ \scalebox{1.3}{$O$} \left( {#1} \right) }}
\newcommand{\calX}{\mathcal{X}}
\newcommand{\calH}{\mathcal{H}}
\newcommand{\calP}{\mathcal{P}}
\newcommand{\calR}{\mathcal{R}}
\newcommand{\proj}{\mathrm{proj}}
\newcommand{\scZ}{\mathcal{Z}}
\newcommand{\bS}{{\boldsymbol{S}}}
\newcommand{\bz}{{\boldsymbol{z}}}
\newcommand{\bx}{{\boldsymbol{x}}}
\newcommand{\by}{{\boldsymbol{y}}}
\newcommand{\bk}{{\boldsymbol{k}}}
\newcommand{\bphi}{{\boldsymbol{\phi}}}
\newcommand{\btheta}{{\boldsymbol{\theta}}}
\renewcommand{\Pr}{{\mathbb{P}}}
\newcommand{\loss}{\ell}
\newcommand{\algo}{\mathcal{A}}
\newcommand{\risk}{\mathsf{R}}
\newcommand{\ms}{\mathsf{ms}}
\newcommand{\co}{\mathsf{c}}
\newcommand{\dd}{\mathrm{d}}
\newcommand{\ee}{\mathrm{e}}
\newcommand{\qed}{\hfill $\star$}
\newcommand{\fmap}{\varphi}
\newcommand{\rotgeq}{\mathbin{\rotatebox[origin=c]{90}{$\leq$}}}
\newcommand{\SVC}{\scriptscriptstyle{\mathrm{SVM}}}
\newcommand{\SVR}{\scriptscriptstyle{\mathrm{SVR}}}
\newcommand{\GEM}{\scriptscriptstyle{\mathrm{GEM}}}
\newtheorem{prop}[theorem]{Proposition}
\newtheorem{exa}[theorem]{Example}
\newtheorem{assumption}[theorem]{Assumption}
\newtheorem{property}[theorem]{Property}
\begin{document}
	
\title{Compression, Generalization and Learning}

\author{\name Marco C. Campi \email marco.campi@unibs.it \\
	\addr Department of Information Engineering \\
	University of Brescia \\
	via Branze 38, 25123 Brescia, Italy
	\AND
	\name Simone Garatti \email simone.garatti@polimi.it \\
	\addr Dipartimento di Elettronica, Informazione e Bioingegneria \\
	Politecnico di Milano\\
	piazza L. da Vinci 32, 20133 Milano, Italy}

\editor{Manfred Warmuth}

\maketitle

\begin{abstract}%
	A \emph{compression function} is a map that slims down an observational set into a subset of reduced size, while preserving its informational content. In multiple applications, the condition that one new observation makes the compressed set change is interpreted that this observation brings in extra information and, in learning theory, this corresponds to misclassification, or misprediction. In this paper, we lay the foundations of a new theory that allows one to keep control on the probability of change of compression (which maps into the statistical ``risk'' in learning applications). Under suitable conditions, the cardinality of the compressed set is shown to be a consistent estimator of the probability of change of compression (without any upper limit on the size of the compressed set); moreover, unprecedentedly tight finite-sample bounds to evaluate the probability of change of compression are obtained under a generally applicable condition of \emph{preference}. All results are usable in a fully \emph{agnostic} setup, i.e., without requiring any \emph{a priori} knowledge on the probability distribution of the observations. Not only these results offer a valid support to develop trust in observation-driven methodologies, they also play a fundamental role in learning techniques as a tool for hyper-parameter tuning.  
\end{abstract}

\begin{keywords}
	Compression Schemes, Statistical Risk, Statistical Learning Theory, Scenario Approach 
\end{keywords}

\section{Introduction} \label{section-intro}

\emph{Compression} is an established topic in theoretical learning, and various generalization bounds have been proven for compression schemes. \\

According to a definition introduced in \cite{LittlestoneWarmuth1986}, a compression scheme consists of i. a \emph{compression function} $\co$, which maps any list of observed examples $S = ((x_1,y_1), \ldots, (x_N,y_N))$ ($x_i$ is called an ``instance'' and $y_i$ a ``label'') into a sub-list $\co(S)$, and ii. a \emph{reconstruction function} $\rho$, which maps any list of examples $S$ into a classifier $\rho(S)$. An important feature of a classifier is its \emph{risk} and, in the context of compression schemes, one is interested in the risk associated to the classifier $\rho(\co(S))$. The concept of risk finds a natural definition in \emph{statistical learning} where one assumes that examples $(x,y)$ are generated according to a random mechanism: the risk of a generic classifier $f$ is then defined as $\risk(f) = \Pr\{f(\bx) \neq \by\}$ (throughout, boldface indicates random quantities). In loose terms, most results in compression schemes establish that low cardinality of the compression implies low risk for the ensuing classifier. A bit more precisely, let $\bS = ((\bx_1,\by_1), \ldots, (\bx_N,\by_N))$ be a list of independent random examples all sharing the same distribution (which coincides with the distribution of $(\bx,\by)$ in the definition of risk). In the example-consistent framework (i.e., the bound on the risk is only given for lists of examples $S$ for which $\rho(\co(S))$ returns the corresponding label $y_i$ for any $x_i$ in $S$ -- in this case $\rho(\co(S))$ is said to be ``consistent'' with $S$) and under the assumption that the maximum cardinality of $\co(\bS)$ is bounded by an integer $d$ ($d$ is called the ``size'' of the compression scheme), \cite{LittlestoneWarmuth1986} and \cite{FloydWarmuth1995} establish results of the type: with high probability $1 \! - \! \delta$ with respect to the generation of the list of examples $\bS$, if $\rho(\co(\bS))$ is consistent with $\bS$, then the risk of $\co(\bS)$ is below a known bound that depends on $N$, $d$ and $\delta$ only (in particular, the bound does not depend on the distribution by which examples are generated). Results in this vein have been subsequently extended to the non-consistent framework and to compression schemes with unbounded size, in which context the best known results are given in \cite{GraepelHerbrichShawe-Taylor2005}.\\

In a series of recent papers, compression schemes have been studied under a \emph{stability} condition, a notion that is natural in many contexts and that has its roots in \cite{VapnikChervonenkis1974}. For the example-consistent framework and compression schemes with bounded size, \cite{pmlr-v125-bousquet20a} succeeded in removing a $\log(N)$ term in the expression of the bound for the risk as compared to the formulation given in the above referenced papers; when applied to Support Vector Machines, this result resolves a long-standing issue that was posed in -- and remained open since -- \cite{VapnikChervonenkis1974}. Later, the scope of \cite{pmlr-v125-bousquet20a} has been significantly broadened by \cite{HannekeKontorovich_2021}, where the non-consistent framework with no upper bounds on the size of the compression scheme has been considered. Since the stable case is most relevant to the present paper, we shall come back to these latter contributions with a more detailed discussion and comparison at the end of Section \ref{section-learning}. \\

In the present paper, we make a paradigm shift: since we are interested in compression as a general tool applicable across various domains, we are well-advised to adopt a ``purist'' approach in which compression functions are studied in isolation (without a reconstruction function). Our essential goal is to study how the probability of change of compression relates to the size of the compressed set. The corresponding results can be applied to supervised learning, unsupervised learning and, in addition, to any other contexts where compression functions are in use. Our findings are summarized at the end of this section, we start with introducing the formal elements of the problem.

\subsection{Mathematical setup and notation}
\label{section-mathsetup}

Examples $z$ are elements from a set $\scZ$ (for instance, in supervised learning $z$ are pairs $(x,y)$). The compression functions we study are permutation invariant. Correspondingly, given any $n=0,1,2,\ldots$ and a list of examples $(z_1,\dots,z_n)$,\footnote{Note that here symbol $n$ indicates the size of a generic list, while $N$ is in use throughout to indicate the actual number of observed examples. The distinction between the two is necessary to accommodate various needs in theoretical developments.} we introduce the associated multiset written as $\ms(z_1,\dots,z_n)$, where the operator ``$\ms$'' removes the ordering in the list, while maintaining repetitions. The set operations $\cup,\cap,\setminus$ (union, intersection, and set difference) are easily extended to multisets using the notion of multiplicity function $\mu_U$ for a multiset $U$, which counts how many times each element of $\scZ$ occurs in $U$. Then, $\mu_{U \cup U'}(z) = \mu_U(z)+\mu_{U'}(z)$, $\mu_{U \cap U'}(z) = \min\big\{\mu_U(z),\mu_{U'}(z)\big\}$, and $\mu_{U \setminus U'}(z) = \max\big\{0,\mu_U(z)-\mu_{U'}(z)\big\}$. Moreover, $U \subseteq U'$ means that $\mu_U(z) \leq \mu_{U'}(z)$ for all $z$, and $|U|$ stands for the cardinality of a multiset where each example is counted as many times as is its multiplicity. Throughout this paper, multisets have always finite cardinality and, any time a multiset is introduced, it is tacitly assumed that it has finitely many elements. A compression function $\co$ is a map from any multiset of examples $U$ to a sub-multiset: $\co(U) \subseteq U$. We write $\co(z_1,\ldots,z_n)$ as a shortcut for $\co(\ms(z_1,\ldots,z_n))$. Also, given a multiset $U$ and one more example $z$, $\co(U,z)$ stands for $\co(U \cup \ms(z))$. Similar notations apply to other maps having a multiset as argument. Throughout, an example is modeled as a realization of a random element defined over a probability space $(\Omega,\mathcal{F},\Pr)$; moreover, a list of $n$ examples is the realization of the first $n$ elements of an independent and identically distributed (i.i.d.) sequence $\bz_1,\bz_2,\ldots$. A training set is a multiset generated from a list of observed examples. When dealing with problems in machine learning, a learning algorithm $\algo$ is a map from training sets to a hypothesis $h$ in a set $\calH$ (in supervised binary classification, $h$ is a concept; in supervised learning with continuous label, $h$ is a predictor; in unsupervised learning, $h$ can, e.g., be a collection of clusters; etc.). According to the above notation, we write $\algo(z_1,\dots,z_N)$ to denote the hypothesis generated by $\algo$ when the input is the training set $\ms(z_1,\dots,z_N)$. We use a $\{0,1\}$-valued function $\loss(h,z)$ to indicate whether or not a hypothesis $h$ is \emph{appropriate} for an example $z$: $\loss(h,z) = 0$ signifies that $h$ is \emph{appropriate} for $z$, while $\loss(h,z) = 1$ corresponds to \emph{inappropriateness} (for instance, in supervised classification, $\loss(h,z) = \One{h(x) \neq y}$, where $\One{}$ is the indicator function). The statistical risk of $h$ is $\risk(h) = \Pr\{\loss(h,\bz) = 1\}$, where $\bz$ is a random element distributed as each $\bz_i$.

\subsection{Main contributions}

The contributions of this paper are summarized in the following three points. \\

{\bf (i)} Under the property of \emph{preference},\footnote{While stated differently, the \emph{preference} property is equivalent to the property of \emph{stability}, see Section \ref{section-compression} for an explanation of our terminology.} Theorem \ref{th:compression_1} establishes a new bound to the probability of change of compression as a function of the cardinality of the compressed multiset. For a finite size of the training set, the bound is informative and useful in applications (see Figure \ref{function-upperepsk}). When the size of the training set $N$ tends to infinity, the bound tends to the ratio $k/N$ (where $k$ is the cardinality of the compressed multiset), uniformly in $k \in \{0,1,\ldots,N\}$ (the fact that the range for $k$ arrives at $N$ means that the compressed multiset has no upper limits other than the size of the training set itself), see Proposition \ref{th:bounds4asympt}. No lower bounds to the probability of change of compression are possible under the sole \emph{preference} property. Under an additional \emph{non-associativity} property and a condition of \emph{non-concentrated} probabilistic mass, Theorem \ref{th:compression_2} establishes a lower bound (see Figure \ref{function-upperepsk-lowerepsk}). This lower bound also converges to $k/N$ uniformly in $k \in \{0,1,\ldots,N\}$. Hence, under the assumptions of Theorem \ref{th:compression_2}, the probability of change of compression is in sandwich between two bounds that merge one on top of the other as $N \to \infty$ (see Proposition \ref{th:bounds4asympt}). This entails that the cardinality of the compressed multiset is a highly informative statistics to evaluate the probability of change of compression.  
\\ 

{\bf (ii)} In Section \ref{section-learning}, the results in (i) are put at work to study classical compression schemes in the presence of a reconstruction function. It is shown that a \emph{preferent} compression scheme augmented with the examples that are misclassified preserves the \emph{preference} property. From this, one finds that the risk can be evaluated without resorting to an incremental approach (as it was customary in previous contributions) in which the empirical risk is incremented with an estimate of the mismatch between empirical and actual risk. The resultant evaluations of the risk, established in Theorem \ref{theorem above-learning-2}, are unprecedentedly sharp. Under the additional conditions of \emph{non-associativity} and of \emph{non-concentrated} mass, if certain \emph{coherence} properties hold, then one obtains bounds on the risk that are valid both from above and from below, which provides a statistically consistent evaluation of the risk. Empirical demonstrations complement the theoretical study and show that the bounds cover tightly the actual stochastic dispersion of the risk. \\ 

{\bf (iii)} As examples of application, the achievements in point (ii) are applied in Section \ref{section-application} to various support vector methods, including the Support Vector Machine (SVM) and the Support Vector Regression (SVR), and to the Guaranteed Error Machine (GEM). This study shows that, in various learning contexts, one can identify statistics of the data from which consistent estimates of the risk can be obtained without resorting to validation or testing.

\subsection{Relation with previous contributions and a more general perspective on this work}

The scientific background in which this work has matured lies in some fifteen years of work by its authors in the field of data-driven optimization. In a group of papers, whose forefathers are \cite{CalCam:05,CalCam:06,CamGa:08} and that include \cite{CamGa:11,Campi_Care:13,CarGarCam2015,CampiGarattiRamponi2018,GarCamCare2019,GarCamCare2023,GarCam2023}, they laid with co-authors the foundations of the so-called ``scenario approach'', a vast body of methods and algorithms to obtain data-driven, theoretically-certified, solutions to uncertain optimization problems. The scenario approach has spurred quite a bit of work also done by others, as witnessed by a large number of theoretical contributions, of which we here only mention the most significant ones: \cite{Welsh_Rojas:09,Welsh_Kong:11,PagnReichCam:12,SchiFaMo:13,SchiFaFrMo2014,MarGouLy:14,MarPraLyl:14,ZhangEtal:2015,EsfSutLy:15,CreKenGie2015,GrammaticoEtal:16,crespo2016interval,LacerdaCrescpoACC2017,MaFaGaPr2018,CreColKenGie2019,FALSONE2019108537}. Recently, the studies on scenario optimization have culminated in the works \cite{CamGa2018,GarCa2019}, which are conceptually linked to the present contribution by the fact that the generalization properties of the solution are evaluated from an observable called ``complexity'' (complexity parallels the size of the compressed multiset of this paper). As compared with all this previous literature, the present contribution introduces two major elements of novelty: \\ 

{\bf (a)} compression takes center stage, beyond any contextualization. By this purist approach, we aim to lay the groundwork for a new theory of wide applicability, to machine learning \emph{in primis}, but also across the other multiple data science fields in which compression finds application; \\

{\bf (b)} by a novel, powerful, theoretical apparatus, this paper establishes bounds on the risk that fare beyond the domain of previous contributions; in particular, they allow one to drop any condition of \emph{non-degeneracy}, which was a standing and limiting assumption in previous works, e.g., \cite{CamGa2018,GarCa2019}. \\

We hope that the findings presented in this paper will open a new era of exploration and discovery in an important subarea of data-driven methods that is centered around the notion of compression. As previously mentioned, we here already consider support vector methods and improve the results in \cite{CampiGaratti2021} by eliminating all assumptions on the distribution of the examples for the problem of obtaining upper bounds on the risk. We also study a generalized version of the so-called Guaranteed Error Machine, which was introduced in \cite{Campi2010} under a limiting condition on the complexity of the classifier. Beyond the applications discussed in this paper, we expect that our results will prove useful in various fields where the scenario approach is applicable (including robust optimization, with its multiform applications to diverse contexts). We feel like to also mention that the authors of this paper are at present actively exploring a wide range of example-driven computer science algorithms in which the application of the compression theory of this paper is made possible through an \emph{importance procedure} for example selection, even in cases where the original algorithm lacks any compression (see \cite{PacCamGar:23} for a study in the context of machine learning). For a broader discussion on the increasing importance of establishing well-founded risk theories for data-driven decision processes, particularly in today's time in which the use of data is becoming pervasive, the reader is referred to the recent position paper \cite{CampiCareGaratti2021}. 

\subsection{Structure of the paper}

The main results on compression schemes are presented in a unified treatment in the next Section \ref{section-compression}, which also includes a discussion on the asymptotic behavior of the bounds. Section \ref{section-learning} presents a rapprochement with classical compression schemes in statistical learning that incorporate a reconstruction function, along with some other more general results useful for machine learning problems. Specific machine learning schemes are considered in Section \ref{section-application}. The proofs of the main results are deferred till Section \ref{section-proofs}.

\section{New generalization results for compression schemes}
\label{section-compression}

Our interest lies in quantifying the probability with which a change of compression occurs. As it was mentioned in the introduction, and it will be further explored in Section \ref{section-application}, this probability has important implications in relation to learning schemes. We start with a formal definition of probability of change of compression. 
\begin{definition}[probability of change of compression] 
The \emph{probability of change of compression} is defined as 
\label{probability-of-change}
$$
\phi(\bz_1,\dots,\bz_n) = \Pr \{ \co(\co(\bz_1,\ldots,\bz_n),\bz_{n+1}) \neq \co(\bz_1,\ldots,\bz_n) | \bz_1,\ldots,\bz_n \}. \footnote{This means that $\phi(\bz_1,\dots,\bz_n)$ is any version of the conditional probability on the right-hand side.}
$$
\qed
\end{definition}
\noindent 
On the right-hand side a new element $\bz_{n+1}$ is added to the compression of $\ms(\bz_1,\dots,\bz_n)$~\footnote{It is not unimportant that $\bz_{n+1}$ is added to the compression, not to the initial multiset.} and it is tested whether this makes the compression change. This gives an event, and our interest lies in the probability of this event. However, in view of its use in applications, what matters is not the probability \emph{tout court}, rather, we take a more fine-grained standpoint by conditioning on $\bz_1,\ldots,\bz_n$, so as to capture the variability of the probability of change of compression as determined by the examples. This makes $\phi(\bz_1,\dots,\bz_n)$ into a random variable. In what follows, we shall often use the symbol $\bphi_n$ as a shorthand for the random variable $\phi(\bz_1,\dots,\bz_n)$. \\

Before delving into the mathematical developments, we are well-advised to digress a moment to discuss the nature of the results we mean to reach. Let $N$ be the size of the multiset at hand, the one of which we want to study the probability of change of compression. $\bphi_N$ has a probability distribution of its own. Arguably, this distribution may vary significantly with the distribution by which the $z_i$'s are generated. This fact has an important implication: any result that describes the distribution of $\bphi_N$ without referring to some prior knowledge on the distribution of the $\bz_i$'s is bound to stay on the conservative side and is therefore poorly informative. While this may seem to set fundamental limitations to obtaining \emph{distribution-free} results on $\bphi_N$ (i.e., results valid without any \emph{a priori} knowledge on the distribution of the $\bz_i$'s), nevertheless it turns out that this conclusion is hasty and incorrect: indeed, one can instead move along a different path and take a bi-variate standpoint, as next explained. Let $|\co(\bz_1,\ldots,\bz_N)|$ be the cardinality of the compressed multiset $\co(\bz_1,\ldots,\bz_N)$. We consider the pair $(|\co(\bz_1,\ldots,\bz_N)|,\bphi_N)$ and identify conditions of general interest under which its bi-variate distribution concentrates in a slender, lenticular-shaped, region (see Figure \ref{function-upperepsk-lowerepsk}). The implications are quite notable: within the lenticular-shaped region, the distribution of $(|\co(\bz_1,\ldots,\bz_N)|,\bphi_N)$ does exhibit a strong variability depending on the problem (which also translates into the variability of the marginal distribution of $\bphi_N$). However, given the realization of $\bz_1,\ldots,\bz_N$ at hand, one can compute the value of $|\co(\bz_1,\ldots,\bz_N)|$ and intersect the vertical line corresponding to this value with the lenticular-shaped region to obtain an interval for the probability of the change of compression. The so-formulated evaluation is tight and informative even for small values of $N$ and offers an useful assessment tool for applications. Importantly, the corresponding theory retains the characteristic of being distribution-free. This finding is stated below as Theorem \ref{th:compression_2} and it holds under two properties called \emph{preference} (Property \ref{preference}) and \emph{non-associativity} (Property \ref{non-associativity}), besides a condition that rules out \emph{concentrated masses} (Property \ref{no-concentrated-mass}). Interestingly, under the sole \emph{preference} property, only the lower bound of the lenticular-shaped region is lost while the upper bound maintains its validity (Theorem \ref{th:compression_1}), which provides a result broadly applicable to evaluate an upper limit on the probability of change of compression as a function of the cardinality of the compressed multiset. \\

Moving towards the mathematical results, we first formalize the concept of \emph{preference}. 
\begin{property}[preference]
\label{preference}
For any multisets $U$ and $V$ such that $V \subseteq U$, if $V \neq \co(U)$, then $V \neq \co(U,z)$ for all $z \in \scZ$. \qed
\end{property}
\noindent 
Hence, if a sub-multiset is not chosen as the compressed multiset, then it cannot become the compressed multiset at a later stage after augmenting the multiset with a new example.\footnote{This property is not new and is called ``stability'' in the literature, see, e.g., \cite{pmlr-v125-bousquet20a} where the formulation is slightly different but, provably, equivalent. Our introducing a change of terminology is in the interest of clarity as we believe that ``stability'' may convey the erroneous idea of \emph{absence of change} or, what is germane to the field of systems theory, the idea that a small input variation can only cause a small output variation. We chose ``preference'' because we feel that this term rightly conveys the idea that a multiset cannot be selected -- and hence preferred -- at a later stage if it had not been preferred earlier when it was already available.} \\

The following lemma provides a useful reformulation of the preference property.
\begin{lemma}
\label{lemma_fund}
A compression function $\co$ satisfies the \emph{preference} property if and only if $\co(V) = \co(U)$ for all multisets $U,V$ such that $\co(U) \subseteq V \subseteq U$. \qed
\end{lemma}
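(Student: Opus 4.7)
The plan is to prove the two implications separately. The reverse direction will be a short proof by contradiction that directly invokes the hypothesized invariance; the forward direction will require an incremental ``peeling'' argument in which the examples of $U \setminus V$ are stripped off one at a time, with the \emph{preference} property used at each step in its contrapositive form.

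For the ($\Leftarrow$) direction, I would assume the invariance $\co(V) = \co(U)$ holds whenever $\co(U) \subseteq V \subseteq U$, and consider $V \subseteq U$ with $V \neq \co(U)$. To show $V \neq \co(U,z)$ for every $z$, I would argue by contradiction: if $V = \co(U,z)$, then setting $\tilde U := U \cup \ms(z)$ one has $\co(\tilde U) = V \subseteq U \subseteq \tilde U$, so the invariance applied to the pair $(U,\tilde U)$ gives $\co(U) = \co(\tilde U) = V$, contradicting the assumption that $V \neq \co(U)$.

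For the ($\Rightarrow$) direction, I would enumerate the elements of the multiset $U \setminus V$ as $z_1,\ldots,z_m$ (with $m=|U|-|V|$), define the chain $V_j := V \cup \ms(z_1,\ldots,z_j)$ so that $V = V_0 \subseteq V_1 \subseteq \cdots \subseteq V_m = U$, and prove $\co(V_j) = \co(U)$ for all $j = m, m-1, \ldots, 0$ by backward induction. The base case $j=m$ is trivial. For the step, observe that $\co(U) \subseteq V \subseteq V_j$, so \emph{preference} applied with inner multiset $V_j$, candidate $\co(U)$, and new element $z_{j+1}$ (used in its contrapositive form: if $\co(U) = \co(V_j, z_{j+1})$ then $\co(U) = \co(V_j)$) promotes the inductive hypothesis $\co(V_{j+1}) = \co(V_j, z_{j+1}) = \co(U)$ to $\co(V_j) = \co(U)$. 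Setting $j=0$ yields $\co(V) = \co(U)$.

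The main delicacy is applying Property \ref{preference} in its contrapositive form and checking that the multiset inclusion $\co(U) \subseteq V_j$ required to invoke it is maintained throughout the induction; the latter is automatic from $\co(U) \subseteq V \subseteq V_j$. Permutation invariance of $\co$ ensures that the choice of enumeration of $U \setminus V$ is immaterial, and the rest is routine multiset bookkeeping.
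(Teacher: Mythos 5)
Your proposal is correct and essentially mirrors the paper's own argument: both directions use the chain $V = V_0 \subseteq V_1 \subseteq \cdots \subseteq V_m = U$ and peel off one element of $U \setminus V$ at a time, invoking the \emph{preference} property in contrapositive form at each step. The only cosmetic difference is that you phrase the forward direction as a clean backward induction, whereas the paper phrases it as a proof by contradiction locating the first index at which $\co(S_i)$ switches to $\co(U)$; the reverse direction is the same construction $\tilde U = U \cup \ms(z)$ in both.
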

\begin{proof}
Assume $\co$ satisfies the \emph{preference} property and let $z_1,\ldots,z_n$ be the elements in $U \setminus V$ where $U$ and $V$ are multisets such that $\co(U) \subseteq V \subseteq U$. Let
$S_0 = V$ and $S_i = S_{i-1} \cup \ms(z_i)$ for $i=1,\ldots,n$ so that $S_n = U$. Now suppose that $\co(V) \neq \co(U)$. Since $\co(S_0) = \co(V)$ and $\co(S_n) = \co(U)$, then it must be that $\co(S_{i-1}) \neq \co(U)$ and $\co(S_{i-1},z_i) = \co(U)$ for some $i \in \{1,\ldots,n\}$. However, since $\co(U) \subseteq S_{i-1}$, this contradicts the assumption that $\co$ satisfies the \emph{preference} property.\\
For the other direction, assume that the \emph{preference} property does not hold. Then, we can find $U,V,z$ such that $V \subseteq U$ and $\co(U) \neq V = \co(U,z)$. This implies $\co(U,z) \subseteq U \subseteq U \cup \ms(z)$ and $\co(U) \neq \co(U,z)$, contradicting the statement that $\co(V') = \co(U')$ for all multisets $U',V'$ such that $\co(U') \subseteq V' \subseteq U'$.
\end{proof}
An immediate consequence of Lemma~\ref{lemma_fund} is that $\co\big(\co(U)\big) = \co(U)$ whenever $c$ satisfies the \emph{preference} property.\\

The statement of our first theorem is better enunciated by introducing the following functions $\Psi_{k,\delta}: (0,1) \to \R$, which are indexed by $k=0,1,\ldots,N-1$ and by the confidence parameter $\delta\in (0,1)$: 
$$
\Psi_{k,\delta}(\alpha) = \frac{\delta}{N}\sum_{m=k}^{N-1} \frac{\binom{m}{k}}{\binom{N}{k}} (1-\alpha)^{-(N-m)}~.
$$
For any $k$ and any $\delta$, the equation $\Psi_{k,\delta}(\alpha) = 1$ admits one and only one solution in $(0,1)$. Indeed, $\Psi_{k,\delta}(\alpha)$ is strictly increasing, continuous, and $\Psi_{k,\delta}(\alpha) \leq \delta < 1$ when $\alpha \to 0$, while it grows to $+\infty$ when $\alpha \to 1$ (see Figure~\ref{fig:f-special}(a)). 
\begin{figure}[t]
\centering
\includegraphics[width=1\columnwidth]{./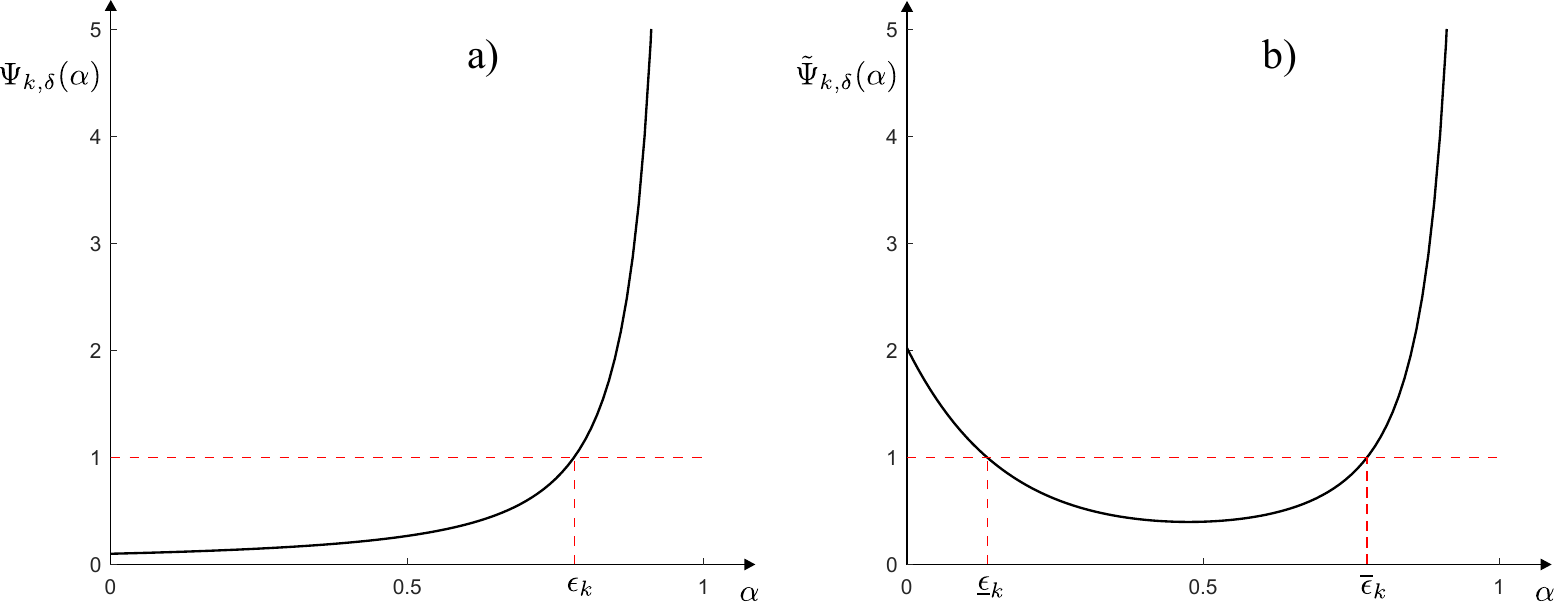}
\caption{(a) Function $\Psi_{k,\delta}(\alpha)$: it starts below $\delta$ when $\alpha \to 0$ and tends to $+ \infty$ when $\alpha \to 1$; (b) Function $\tilde{\Psi}_{k,\delta}(\alpha)$: it tends to $+ \infty$ as $\alpha \to 1$ or $\alpha \to - \infty$ and takes a value below $1$ in a point in $(-\infty, 1)$.}
\label{fig:f-special}
\end{figure}
Define\footnote{$\eps_k$ can be computed via a bisection algorithm. An efficient and ready-to-use \textsf{MATLAB} code is provided in Appendix \ref{appendix-bisection-algo1}.} 
\begin{equation}
\label{epsilonk}
\eps_k =
\begin{cases}
\mbox{solution to } \Psi_{k,\delta}(\alpha) = 1, & k=0,1,\ldots,N-1; \\
1, & k = N.
\end{cases}
\end{equation}
\begin{theorem}
\label{th:compression_1}
Assume the \emph{preference} Property \ref{preference}. For any $\delta \in (0,1)$, it holds that
\begin{equation}
\label{result-th-1}
	\Pr \{ \bphi_N > \eps_\bk \} \leq \delta,
\end{equation}
where $\eps_\bk$ is the random variable obtained by the composition of $\bk := |\co(\bz_1,\ldots,\bz_N)|$ (the cardinality of the multiset $\co(\bz_1,\ldots,\bz_N)$) with the function $\eps_k$ given in \eqref{epsilonk} (in other words, it is $\eps_k$ evaluated at the random value $\bk$). \qed
\end{theorem}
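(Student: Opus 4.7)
The plan is to adapt a subsampling argument in the style of scenario optimization. Start from the decomposition
$$\Pr\{\bphi_N > \eps_\bk\} \;=\; \sum_{k=0}^{N-1}\Pr\{\bk = k,\ \bphi_N > \eps_k\},$$
whose $k = N$ contribution vanishes because $\eps_N = 1$. The goal is to bound this sum by $\delta$.

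The crux is the intermediate inequality
$$\Pr\{\bk = k,\ \bphi_N > \eps_k\} \;\le\; \frac{\binom{N}{k}}{\binom{M}{k}}\,(1-\eps_k)^{N-M}\,\Pr\{\bk_M = k,\ \bphi_M > \eps_k\},\quad M = k,\ldots,N-1,$$
where $\bk_M = |\co(\bz_1,\ldots,\bz_M)|$ and $\bphi_M = \phi(\bz_1,\ldots,\bz_M)$. I would prove it by introducing a uniformly random size-$M$ subset $I \subseteq \{1,\ldots,N\}$, independent of the sample, and estimating the probability of the event $E = \{\co(\bz_I) = \co(\bz_1,\ldots,\bz_N),\ \bk = k,\ \bphi_N > \eps_k\}$ in two ways. \emph{From above:} on $E$ one has $|\co(\bz_I)| = k$ and $\phi(\bz_I) = \bphi_N$, because $\phi(U)$ depends on $U$ only through $\co(U)$; conditional on $\bz_I$, the remaining $N-M$ i.i.d.~samples fail to change the compression with probability at most $(1-\phi(\bz_I))^{N-M}$, proved by induction (at each no-change step, both $\co$ and $\phi$ retain their prior values, so the next step has the same conditional bound). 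Taking expectation and using that $\bz_I$ is identically distributed to $\bz_1,\ldots,\bz_M$ gives $\Pr(E) \le (1-\eps_k)^{N-M}\Pr\{\bk_M = k,\bphi_M > \eps_k\}$. \emph{From below:} by Lemma~\ref{lemma_fund}, $\co(\bz_I) = \co(\bz_1,\ldots,\bz_N)$ is equivalent to $\co(\bz_1,\ldots,\bz_N) \subseteq \bz_I$ (multiset containment), and on $\{\bk = k\}$ the conditional probability of the latter, over the random $I$, is at least $\binom{N-k}{M-k}/\binom{N}{M}$. Hence $\Pr(E) \ge (\binom{N-k}{M-k}/\binom{N}{M})\Pr\{\bk = k,\bphi_N > \eps_k\}$. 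Combining these two estimates and invoking the identity $\binom{N-k}{M-k}/\binom{N}{M} = \binom{M}{k}/\binom{N}{k}$ yields the intermediate inequality.

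Multiplying both sides of the intermediate inequality by $(\binom{M}{k}/\binom{N}{k})(1-\eps_k)^{M-N}$ and summing over $M = k,\ldots,N-1$, the coefficient in front of $\Pr\{\bk = k,\bphi_N > \eps_k\}$ becomes $\sum_{M=k}^{N-1}(\binom{M}{k}/\binom{N}{k})(1-\eps_k)^{M-N}$, which by the defining equation $\Psi_{k,\delta}(\eps_k) = 1$ equals $N/\delta$. Summing the resulting inequality over $k = 0,\ldots,N-1$ and swapping the order of summation gives a double sum bounded above by $\sum_{M=0}^{N-1}\sum_{k=0}^{M}\Pr\{\bk_M = k\} = N$. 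Dividing by $N/\delta$ yields the desired bound $\Pr\{\bphi_N > \eps_\bk\} \le \delta$.

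The principal difficulty is the combinatorial lower bound $\Pr\{\co(\bz_1,\ldots,\bz_N) \subseteq \bz_I \mid \bz_1,\ldots,\bz_N,\ \bk = k\} \ge \binom{N-k}{M-k}/\binom{N}{M}$ in the multiset setting: with distinct observations this is tight (a unique support), but in general the support is not uniquely defined. Fortunately, any one specific choice of $k$ indices in $\{1,\ldots,N\}$ whose observations jointly reproduce $\co(\bz_1,\ldots,\bz_N)$ already furnishes the stated lower bound, and alternative choices only increase the probability. A secondary check is the induction underlying the iterated no-change estimate, which rests on the fact that $\phi(U)$ depends on $U$ only through $\co(U)$, so the probability at each successive step stays $\le 1-\phi(\bz_I)$.
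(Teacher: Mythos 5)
Your proof is correct, and it takes a genuinely different---and substantially more elementary---route than the paper's.

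The paper first reduces (under a temporary non-concentrated-mass hypothesis removed only at the very end via an auxiliary randomization $\bz'_i = (\bz_i,\btheta_i)$) the event probability to a sum of measures $\mathfrak{m}^+_{k,m}$, establishes the primal relations (i) (normalization) and (ii) (the monotonicity $\mathfrak{m}^+_{k,m+1}-(1-\alpha)\mathfrak{m}^+_{k,m}\in\mathcal{M}^-$) via conditional-expectation manipulations, and then sets up an infinite-dimensional linear program whose value it bounds through a Lagrangian \emph{strong}-duality argument (requiring Hahn--Banach, Prokhorov, a vanishing regularization $\varphi_{k,\tau}$, etc.) before exhibiting the feasible dual point $\lambda_m=\delta/N$. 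Your proof bypasses nearly all of this. Your intermediate inequality
$$\Pr\{\bk=k,\ \bphi_N>\eps_k\}\;\le\;\frac{\binom{N}{k}}{\binom{M}{k}}(1-\eps_k)^{N-M}\,\Pr\{\bk_M=k,\ \bphi_M>\eps_k\}$$
is precisely the paper's relation (ii) iterated $N-M$ times and then expressed through the probabilities $\binom{m}{k}\mathfrak{m}^+_{k,m}(\cdot)=\Pr\{\bk_m=k,\bphi_m\in\cdot\}$; but you derive it directly by double-counting the event $E=\{\co(\bz_I)=\co(\bz_1,\dots,\bz_N),\bk=k,\bphi_N>\eps_k\}$ with a uniformly random size-$M$ index set $I$, using Lemma~\ref{lemma_fund} for both the from-above step (containment in a product of conditionally i.i.d.\ no-change events) and the from-below step (compression-equality $\Leftrightarrow$ multiset containment $\co(\bz_1,\dots,\bz_N)\subseteq\ms(\bz_I)$, covered by $I$ with probability at least $\binom{N-k}{M-k}/\binom{N}{M}=\binom{M}{k}/\binom{N}{k}$). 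You then close the argument by multiplying by $\frac{\binom{M}{k}}{\binom{N}{k}}(1-\eps_k)^{M-N}$, summing over $M$, invoking $\Psi_{k,\delta}(\eps_k)=1$, swapping sums, and using the crude bound $\sum_{k=0}^{M}\Pr\{\bk_M=k\}=1$. In effect this is a \emph{weak}-duality argument made explicit: the weights $\delta/N$ appear as multipliers, but you never need the hard direction of duality that the paper's Appendix~C is devoted to. A further advantage of your approach is that it handles repeated observations natively---one specific index set reproducing $\co(\bz_1,\dots,\bz_N)$ suffices for the combinatorial lower bound---so you never need the paper's augmentation step that lifts the non-concentrated-mass hypothesis at the end of Section~\ref{proof-Theorem-1}.

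One small suggestion for the write-up: in the from-above step, rather than phrasing the $(1-\phi(\bz_I))^{N-M}$ bound as an ``induction,'' it is cleaner to observe, via Lemma~\ref{lemma_fund} applied with $U=\ms(\bz_1,\dots,\bz_N)$ and $V=\co(\bz_I)\cup\ms(\bz_j)$ for each $j\notin I$, that $\{\co(\bz_I)=\co(\bz_1,\dots,\bz_N)\}\subseteq\bigcap_{j\notin I}\{\co(\co(\bz_I),\bz_j)=\co(\bz_I)\}$, and that conditional on $(I,\bz_I)$ the latter intersection has probability exactly $(1-\phi(\bz_I))^{N-M}$ by independence. That avoids any ambiguity about what ``the next step has the same conditional bound'' means.
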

\begin{proof}
The proof of Theorem \ref{th:compression_1} is given in Section \ref{proof-Theorem-1}.
\end{proof}
In the theorem, parameter $\delta$ is called the ``confidence parameter'' and it is normally selected to a very small value, say $10^{-5}$ or $10^{-6}$. The theorem claims that $\bphi_N$, the probability of change of compression, is upper-bounded, with high confidence $1 - \delta$, by $\eps_\bk$, which is a known, deterministic, function $\eps_k$ evaluated in correspondence of the cardinality $\bk$ of the compressed multiset. Figure \ref{function-upperepsk} visualizes function $\epsilon_k$ for $N = 2000$ and various values of $\delta$. \\
\begin{figure}[t]
	\centering
	\includegraphics[width=0.6\columnwidth]{./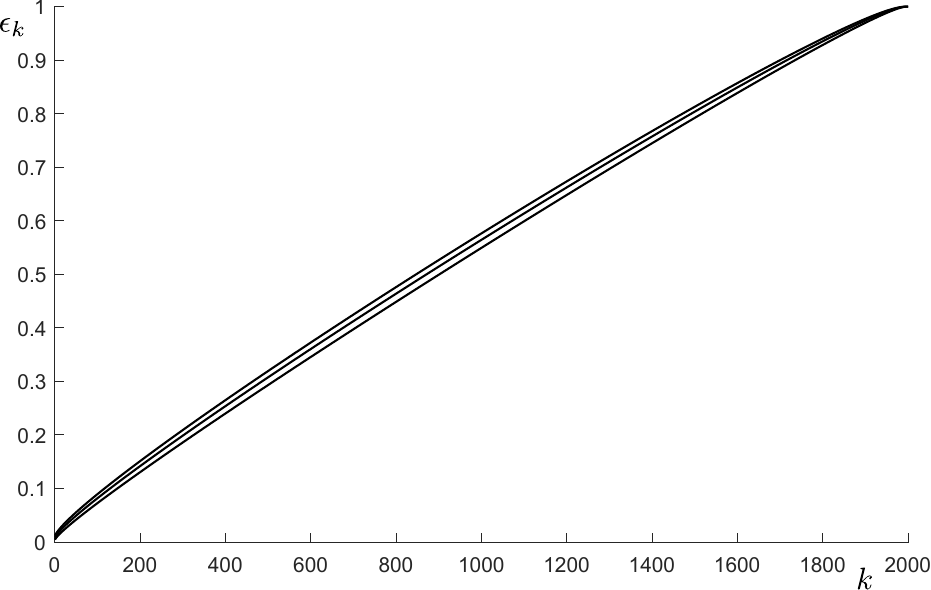}
	\caption{Curve $\eps_k$ against the value of $k$ for $N = 2000$ and various values of $\delta$ ($10^{-3}$, $10^{-6}$, and $10^{-9}$). As established in Theorem \ref{th:compression_1}, for \emph{preferent} compression functions this curve sets an upper bound (valid with confidence $1-\delta$) on the probability of change of compression as a function of the cardinality $k$ of the compressed multiset.}
	\label{function-upperepsk}
\end{figure}

In machine learning applications, the interest of Theorem \ref{th:compression_1}  stems from the fact that a change of compression occurs when an example is misclassified, or mispredicted. Hence, the theorem allows one to upper-bound the probability of misclassification, or misprediction, by using an observable, the cardinality of the compressed multiset. Importantly, the evaluation holds independently of the distribution of the $\bz_i$'s, hence the user can apply the result without positing any, possibly hazardous, conjecture on how data are generated. An ample space to the use of Theorem \ref{th:compression_1} in machine learning problems is given in Sections \ref{section-learning} and \ref{section-application}. \\ 

Lower and upper bounds on $\bphi_N$ are established under additional conditions, the \emph{non-associativity} Property \ref{non-associativity} and the Property \ref{no-concentrated-mass} of \emph{non-concentrated} mass, as described in the following. 

\begin{property}[non-associativity]
\label{non-associativity}
For any $n \geq 0$ and $p \geq 1$, 
$$
\Pr \big( E_1 \setminus  E_2 \big) = 0,
$$
where
$$
E_1 = \{ \co(\bz_1,\ldots,\bz_n,\bz_{n+i}) = \co(\bz_1,\ldots,\bz_n), \; i=1,\dots,p\},
$$
$$
E_2 = \{ \co(\bz_1,\ldots,\bz_n,\bz_{n+1},\ldots,\bz_{n+p}) = \co(\bz_1,\ldots,\bz_n) \}.
$$
\qed
\end{property}
In words, the \emph{non-associativity} property can be phrased as follows: if the compression does not change adding elements one at a time, then it does non change when they are added altogether, with the possible exception of an event whose probability is zero.\footnote{\emph{Non-associativity} is naturally satisfied in many contexts, including all cases in which the compression singles out the relevant observations in a robust optimization process (this is because adding multiple constraints that do not change the solution when considered in isolation -- viz., the current solution is feasible for the new constraints -- does not change the solution when all the constraints are introduced simultaneously). See Section \ref{section-application} for examples in the machine learning context.} The reader may have noticed that Property \ref{non-associativity} is given in probability unlike the \emph{preference} Property \ref{preference}, which was required to hold for any choice of the examples. The reason is that requiring the validity of the non-associativity property for all examples can be restrictive in some applications.  
\begin{property}[non-concentrated mass]
\label{no-concentrated-mass}
$$
\Pr\{ \bz_i = z \} = 0, \; \forall z \in \scZ.
$$
\qed
\end{property}
\noindent 
The property of \emph{non-concentrated} mass simply requires that any $z$ can only be drawn with probability zero and, hence, it excludes with probability $1$ that the same $z$ occurs twice or more times in a training set. \\ 

Theorem \ref{th:compression_2} is stated by means of the following functions $\tilde{\Psi}_{k,\delta}: (-\infty,1) \to \R$ indexed by $k=0,1,\ldots,N$ and by the confidence parameter $\delta\in (0,1)$: \\

\noindent
for $k = 0,\ldots,N-1$, let
\begin{equation}
\label{Psi-tilde}
\tilde{\Psi}_{k,\delta}(\alpha) = \frac{\delta}{2N} \sum_{m=k}^{N-1} \frac{{m \choose k}}{{N \choose k}} (1-\alpha)^{-(N-m)} + \frac{\delta}{6N} \sum_{m=N+1}^{4N} \frac{{m \choose k}}{{N \choose k}} (1-\alpha)^{m-N}, 
\end{equation}
\noindent
while, for $k = N$, let 
$$
\tilde{\Psi}_{N,\delta}(\alpha) = \frac{\delta}{6N} \sum_{m=N+1}^{4N}
{m \choose N} (1-\alpha)^{m-N}.
$$
In Appendix \ref{Appendix_Psi_tilde} it is shown that, for $k=0,1,\ldots,N-1$, equation $\tilde{\Psi}_{k,\delta}(\alpha) = 1$ admits two and only two solutions in $(-\infty,1)$, say $\underline{\alpha}_k$ and $\overline{\alpha}_k$, with $\underline{\alpha}_k < \frac{k}{N} < \overline{\alpha}_k$ (see Figure~\ref{fig:f-special}(b) for a graphical visualization of $\tilde{\Psi}_{k,\delta}(\alpha)$, $k < N$). Instead, equation $\tilde{\Psi}_{N,\delta}(\alpha) = 1$ admits only one solution in $(-\infty,1)$, which is denoted by $\underline{\alpha}_N$ (this is easy to verify because $\tilde{\Psi}_{N,\delta}(\alpha)$ is strictly decreasing and it tends to $0$ as $\alpha \to 1$ while it grows to $+\infty$ as $\alpha \to -\infty$). Define\footnote{See Appendix \ref{appendix-bisection-algo2} for a \textsf{MATLAB} code that efficiently computes $\underline{\eps}_k$ and $\overline{\eps}_k$.}
\begin{equation}
\label{underline_epsilonk}
\underline{\eps}_k = \max \{0,\underline{\alpha}_k \}, \quad k=0,1,\ldots,N, 
\end{equation}
and
\begin{equation}
\label{overline_epsilonk}
\overline{\eps}_k =
\begin{cases}
\overline{\alpha}_k, & k=0,1,\ldots,N-1; \\
1, & k = N.
\end{cases}
\end{equation}
\begin{theorem}
\label{th:compression_2}
Assume the \emph{preference} Property \ref{preference}, the \emph{non-associativity} Property \ref{non-associativity} and the \emph{non-concentrated mass} Property \ref{no-concentrated-mass}. For any $\delta \in (0,1)$, it holds that
\begin{equation} \label{result-th-2}
\Pr \{ \underline{\eps}_\bk \leq \bphi_N \leq \overline{\eps}_\bk \} \geq 1-\delta,
\end{equation}
where $\underline{\eps}_\bk$ is the random variable obtained by the composition of $\bk := |\co(\bz_1,\ldots,\bz_N)|$ with the function $\underline{\eps}_k$ given in \eqref{underline_epsilonk} and $\overline{\eps}_\bk$ is the random variable obtained by the composition of $\bk$ with the function $\overline{\eps}_k$ given in \eqref{overline_epsilonk}. \qed
\end{theorem}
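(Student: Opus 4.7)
The plan is to split the target event via a union bound into the upper-tail $\{\bphi_N > \overline{\eps}_\bk\}$ and the lower-tail $\{\bphi_N < \underline{\eps}_\bk\}$ and bound each by $\delta/2$. The upper-tail part is essentially free of extra work: the first summand of $\tilde\Psi_{k,\delta}$ equals $\tfrac12\Psi_{k,\delta}$ and the second summand is non-negative, so the larger root $\overline\alpha_k$ of $\tilde\Psi_{k,\delta}(\alpha)=1$ cannot exceed the Theorem \ref{th:compression_1} threshold evaluated at confidence $\delta/2$. Replaying the proof of Theorem \ref{th:compression_1} with $\delta\mapsto\delta/2$ then yields $\Pr\{\bphi_N > \overline\eps_\bk\}\le \delta/2$, and this half uses only the preference Property \ref{preference}.

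For the lower tail I would extend the sample with $3N$ auxiliary i.i.d.\ examples $\bz_{N+1},\dots,\bz_{4N}$ and track the compression of the enlarged multiset. By Property \ref{non-associativity}, the event $\{\co(\bz_1,\dots,\bz_{4N})=\co(\bz_1,\dots,\bz_N)\}$ contains, almost surely, the intersection over $i=1,\dots,3N$ of the single-addition events $\{\co(\bz_1,\dots,\bz_N,\bz_{N+i})=\co(\bz_1,\dots,\bz_N)\}$. These single-addition events are conditionally independent given $\bz_1,\dots,\bz_N$, each with probability $1-\bphi_N$, so
$$
\Pr\{\co(\bz_1,\dots,\bz_{4N})=\co(\bz_1,\dots,\bz_N)\mid \bz_1,\dots,\bz_N\} \;\geq\; (1-\bphi_N)^{3N}.
$$
Hence a small value of $\bphi_N$ forces, with at least exponential probability, the compression of the enlarged sample to equal that of the original sample, and in particular to have the same cardinality $\bk$.

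To convert this conditional inequality into a distributional bound I would then condition, for each $M\in\{N+1,\dots,4N\}$, on the unordered multiset $\ms(\bz_1,\dots,\bz_M)$ and on its compression size $m$. Property \ref{no-concentrated-mass} makes the $M$ points almost surely distinct, and preference (through Lemma \ref{lemma_fund}) identifies the ``essential'' points with the $m$ elements of $\co(\bz_1,\dots,\bz_M)$. Exchangeability then yields that the probability of the first $N$ positions capturing all $m$ essential points -- equivalently, $\co(\bz_1,\dots,\bz_N)=\co(\bz_1,\dots,\bz_M)$ with $\bk=m$ -- is a hypergeometric ratio that, after elementary rewriting, reproduces the coefficient $\binom{m}{k}/\binom{N}{k}$ in the second summand of $\tilde\Psi_{k,\delta}$. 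Combining these identities with the $(1-\bphi_N)^{3N}$ lower bound above and applying a Markov-type inequality to the complement event $\{\bphi_N<\alpha\}$ produces a tail estimate whose tightness condition is precisely $\tilde\Psi_{k,\delta}(\alpha)=1$; picking its smaller root $\underline\alpha_k$ delivers $\Pr\{\bphi_N<\underline\eps_\bk\}\le\delta/2$.

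The main obstacle I expect is exactly this combinatorial-probabilistic matching: one must simultaneously marginalize over the enlarged-sample cardinality $m$, invert the hypergeometric weights so they produce exactly the second summand of $\tilde\Psi_{k,\delta}$, and carefully discard the zero-probability exceptions from Property \ref{non-associativity} without losing any mass. Aligning the numerical constants so that the coefficients $1/(2N)$ and $1/(6N)$ inside $\tilde\Psi_{k,\delta}$, together with the cap at $4N$ in the second sum, are produced exactly by the $3N$-extension and by the Markov step -- and so that the two halves of the union bound close with the correct total confidence $\delta$ -- is the delicate bookkeeping.
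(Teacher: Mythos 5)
Your upper-tail argument has the monotonicity backwards, and this is a genuine gap. You correctly observe that $\tilde\Psi_{k,\delta}=\tfrac12\Psi_{k,\delta}+\nu_{k,\delta}$ with $\nu_{k,\delta}\geq 0$, and hence that $\overline\eps_k$ is \emph{at most} the Theorem~\ref{th:compression_1} threshold $\eps_k^{(\delta/2)}$ obtained by solving $\Psi_{k,\delta/2}(\alpha)=1$. But since $\overline\eps_k\leq\eps_k^{(\delta/2)}$ pointwise in $k$, the event $\{\bphi_N>\overline\eps_\bk\}$ is a \emph{superset} of $\{\bphi_N>\eps_\bk^{(\delta/2)}\}$, so Theorem~\ref{th:compression_1} applied with $\delta\mapsto\delta/2$ only yields $\Pr\{\bphi_N>\overline\eps_\bk\}\geq\Pr\{\bphi_N>\eps_\bk^{(\delta/2)}\}$, which is useless. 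In fact the upper bound $\overline\eps_k$ of Theorem~\ref{th:compression_2} is strictly sharper than anything Theorem~\ref{th:compression_1} can deliver at any confidence level: the sharpening is exactly what the \emph{non-associativity} assumption buys, since it turns the measure inequality in (ii) of the proof of Theorem~\ref{th:compression_1} into the equality (ii)$'$, enlarging the admissible class of Lagrange multipliers (from non-negative $\mu^+_{k,m}$ to sign-unrestricted $\mu_{k,m}$) and hence shrinking the dual optimal value. Your plan to bound the upper tail ``using only the preference property'' therefore cannot succeed.

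The paper does not split the two tails by a union bound at all: it runs one duality argument (the same machinery as Theorem~\ref{th:compression_1}, truncated at $M=4N$) in which a single feasible dual point $\lambda_m=\tfrac{\delta}{2N}$ ($m<N$), $\lambda_N=0$, $\lambda_m=\tfrac{\delta}{6N}$ ($N<m\leq 4N$) simultaneously certifies both tails via the constraint $\One{\alpha\in[0,\underline\eps_k)\cup(\overline\eps_k,1]}\leq\tilde\Psi_{k,\delta}(\alpha)$. The coefficients $1/(2N)$ and $1/(6N)$ and the cap at $4N$ that you hope to ``produce'' by a $3N$-extension are, in the paper's route, simply one (convenient, not canonical) feasible choice of $\lambda_m$ summing to $\delta$. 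Your lower-tail sketch does contain some of the right ingredients — the event $\{\co(\bz_1,\ldots,\bz_{4N})=\co(\bz_1,\ldots,\bz_N)\}$ a.s.\ containing the intersection of the single-addition events (via non-associativity), exchangeability, and the role of the non-concentrated mass property in identifying a unique compressed sub-multiset — but as written the ``Markov-type inequality to the complement event $\{\bphi_N<\alpha\}$'' and the hypergeometric matching are not argued, and the claim that the single-addition events are ``conditionally independent given $\bz_1,\ldots,\bz_N$'' (which is true, but only because each depends on a distinct $\bz_{N+i}$ and only through the compressed set) would need to be made precise. In any case, without a valid upper-tail argument the overall plan does not close.
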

\begin{proof}
The proof of Theorem \ref{th:compression_2} is given in Section \ref{proof-theorem-2}.
\end{proof}

With the additional properties of \emph{non-associativity} and \emph{non-concentrated mass}, Theorem~\ref{th:compression_2} assigns upper and lower bounds for the change of compression, as visualized in Figure \ref{function-upperepsk-lowerepsk}. 
\begin{figure}[t]
	\centering
	\includegraphics[width=0.6\columnwidth]{./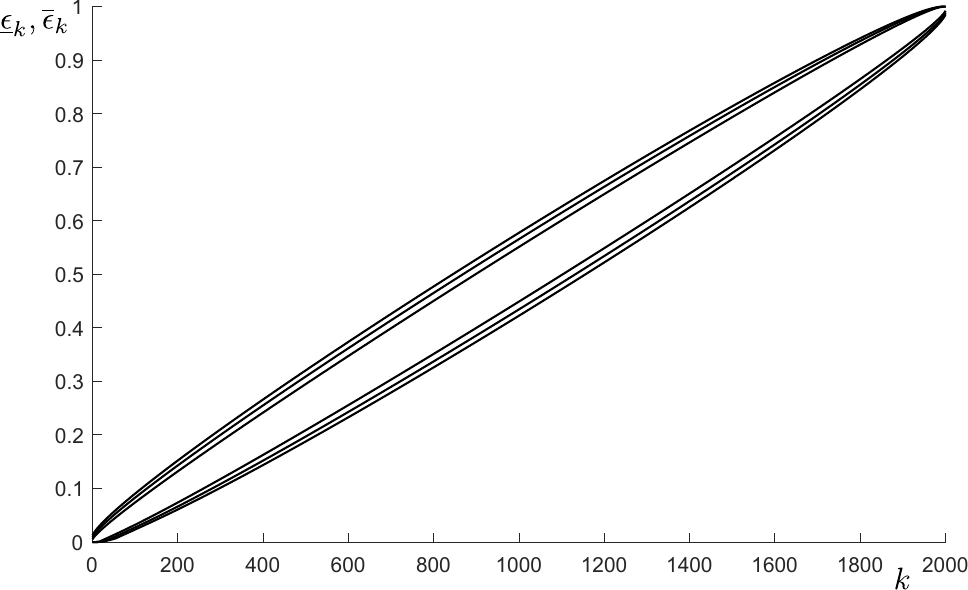}
	\caption{Region delimited by $\underline{\eps}_k$ and $\overline{\eps}_k$ for $N = 2000$ and various values of $\delta$ ($10^{-3}$, $10^{-6}$, and $10^{-9}$). Under the assumptions of Theorem \ref{th:compression_2}, this region contains with confidence $1-\delta$ the probability of change of compression as a function of the cardinality of the compressed multiset.}
	\label{function-upperepsk-lowerepsk}
\end{figure}
Strictly speaking, these additional requirements do depend on the underlying probability by which examples are generated and, hence, they cannot be labeled as being ``distribution-free''. Nonetheless, in various situations, Theorem \ref{th:compression_2} becomes applicable under a very limited knowledge on the distribution of the $\bz_i$'s, and we shall see examples of this in Section \ref{section-application}. We also note that dropping the assumption of \emph{non-concentrated mass} makes Theorem \ref{th:compression_2} false, as shown in the following counterexample.\footnote{While the \emph{non-concentrated mass} property is easy to state, which led us to prefer this formulation, Theorem \ref{th:compression_2} can still be proven under a slightly weaker condition, as briefly discussed in Remark \ref{rmk:weaker_conditions_for_th2} after the proof of the theorem.} Suppose that one single element $\bar{z}$ has probability $1$ to be selected (so that the \emph{non-concentrated mass} property is violated), fix any integer $M$ (for example $M = 100$) and consider the compression function that returns the initial multiset any time this multiset includes the element $\bar{z}$ less than $M$ times, while it trims the number of elements $\bar{z}$ to $M$ when $\bar{z}$ appears more than $M$ times in the initial multiset. It is easily seen that the \emph{preference} and \emph{non-associativity} properties hold. On the other hand, for $N \geq M$ the probability of change of compression is zero with probability $1$, so that no meaningful lower bounds can be assigned in this example. 

\subsection{Asymptotic behavior of $\eps_k$ and $\underline{\eps}_k$, $\overline{\eps}_k$} \label{section-asymptotic}

The purpose of this section is to establish explicit lower and upper bounds on $\eps_k$ and $\overline{\eps}_k$, $\underline{\eps}_k$ able to reveal the dependencies of these quantities on $k$, $N$, and $\delta$, and also to pinpoint convergence properties as $N$ tends to infinity. The main result is in Proposition \ref{th:bounds4asympt}, followed by some comments. We advise the reader that the explicit bounds in Proposition \ref{th:bounds4asympt} are in use to clarify various dependencies, but they are not meant for practical computation since they lead to conservative results if used in place of the numerical procedures given in Appendix~\ref{appendix:MATLAB_code}. 
\begin{prop} \label{th:bounds4asympt}
\begin{align}
	\overline{\eps}_k & \; \leq \;  \frac{k}{N} + 2 \frac{\sqrt{k+1}}{N}\left( \sqrt{\ln(k+1)} + 4 \right) + 2 \frac{ \sqrt{k+1}\sqrt{\ln \frac{1}{\delta}} }{N}  + \frac{\ln \frac{1}{\delta}}{N} \label{eq:bounds4asympt_up} \\ 
	\underline{\eps}_k & \; \geq \; \frac{k}{N} -3 \frac{\sqrt{k+1}}{N}\left( \sqrt{\ln(k+1)} + 2 \right) - 3 \frac{\sqrt{k+1}\sqrt{\ln \frac{1}{\delta}}}{N}. \label{eq:bounds4asympt_low}
\end{align}
Moreover, it holds that 
\begin{align}
	\frac{k}{N} \leq \eps_k & \; \leq \; \overline{\eps}_k \nonumber 
\end{align}
\end{prop}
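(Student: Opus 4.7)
My plan is to decompose the proposition into its three displayed inequalities and, for each, reduce the claim to a sign evaluation of $\Psi_{k,\delta}$ or $\tilde{\Psi}_{k,\delta}$ at a single, carefully chosen value of $\alpha$: strict monotonicity of $\Psi_{k,\delta}$ and the U-shape of $\tilde{\Psi}_{k,\delta}$ (both established in Appendix~\ref{Appendix_Psi_tilde}) then pin down the position of the relevant root. I would assume $k<N$; the case $k=N$ is immediate since $\overline{\eps}_N=\eps_N=1\geq k/N$.

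For the sandwich $k/N\leq\eps_k\leq\overline{\eps}_k$, the identity $\binom{m}{k}/\binom{N}{k}=\binom{N-k}{N-m}/\binom{N}{N-m}$ rewrites $\Psi_{k,\delta}(k/N)$, after the change of variable $j=N-m$, as $\frac{\delta}{N}\sum_{j=1}^{N-k}\prod_{i=0}^{j-1}\frac{N(N-k-i)}{(N-i)(N-k)}$. The $i=0$ factor equals $1$ and each later factor is $<1$ (since $(N-i)(N-k)-N(N-k-i)=ik\geq 0$), so every summand is $\leq 1$, the sum is $\leq N-k$ and $\Psi_{k,\delta}(k/N)\leq\delta<1$; strict monotonicity of $\Psi_{k,\delta}$ then gives $\eps_k>k/N$. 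The inequality $\eps_k\leq\overline{\eps}_k$ follows because $\eps_k>k/N$ places $\eps_k$ on the ascending right branch of $\tilde{\Psi}_{k,\delta}$, and the relation $\Psi_{k,\delta}(\eps_k)=1$ makes the first summand of $\tilde{\Psi}_{k,\delta}(\eps_k)$ exactly $1/2$, so only the residual bound
\[
\frac{\delta}{6N}\sum_{j=1}^{3N}\frac{\binom{N+j}{k}}{\binom{N}{k}}(1-\eps_k)^{j}\leq \frac{1}{2}
\]
remains; I would obtain it by combining $\binom{N+j}{k}/\binom{N}{k}\leq\prod_{i=0}^{k-1}(1+j/(N-i))$ with $(1-\eps_k)^{j}\leq(1-k/N)^{j}$ and summing the resulting truncated geometric progression.

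For the two explicit concentration bounds, set $\tau:=\frac{2\sqrt{k+1}}{N}\bigl(\ln\tfrac{1}{\delta}+\ln(k+1)+4\bigr)$, $\alpha_0:=k/N+\tau$, $\alpha_1:=k/N-\tau$; by the U-shape of $\tilde{\Psi}_{k,\delta}$ both inequalities reduce, respectively, to $\tilde{\Psi}_{k,\delta}(\alpha_0)\geq 1$ and $\tilde{\Psi}_{k,\delta}(\alpha_1)\geq 1$. For $\alpha_0$ I would drop the second sum of $\tilde{\Psi}_{k,\delta}$ and lower-bound the first: after Stirling on $\binom{m}{k}/\binom{N}{k}$ and the elementary inequality $(1-\alpha_0)^{-(N-m)}\geq\exp\bigl((N-m)\alpha_0\bigr)$, the summand has the form $\exp\{Nf(m/N)\}$ with $f(u)=(k/N)\ln u+(1-u)\alpha_0$ strictly concave and maximised at $u^{\star}=k/(N\alpha_0)$; a Taylor expansion of $f$ around $u^{\star}$ yields $Nf(u^{\star})\geq(N\tau)^2/(2k)\geq 2\bigl(\ln\tfrac{1}{\delta}+\ln(k+1)+4\bigr)^{2}$, and lower bounding the sum by the saddle-point block of width $\Theta(1/\sqrt{k+1})$ in $u$ (equivalently $\Theta(N/\sqrt{k+1})$ in $m$) produces a value $\gtrsim \frac{N}{\sqrt{k+1}}\exp\{(N\tau)^2/(2k)\}$, which after multiplication by $\delta/(2N)$ dominates $1$ by a comfortable margin. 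The argument for $\alpha_1$ is symmetric and uses instead the second sum of $\tilde{\Psi}_{k,\delta}$ (the one indexed by $m\in[N+1,4N]$), whose summand has saddle in $v=(N+j)/N$ at $v^{\star}=k/(N\alpha_1)$; when $v^{\star}\leq 4$ the Gaussian-block argument carries over verbatim, while when $v^{\star}>4$ (which happens for small $k$) the summand is monotonically increasing on $[1,4]$ and the edge term $j=3N$ alone already supplies an exponentially large contribution — this is the role played by the constant $4$ in the upper index of the second sum of $\tilde{\Psi}_{k,\delta}$.

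The main obstacle is the bookkeeping in the last step: matching the exact coefficients $2$, $4$ and the $\ln(k+1)$ correction inside $\tau$ requires careful control of the Stirling remainder in $\binom{m}{k}/\binom{N}{k}$, of the quadratic remainder in $\ln(1-\alpha)$ and of the Gaussian-integral prefactor; each of these contributes a $\ln(k+1)$-type term that must be absorbed into $\tau$. The edge regimes $k=O(1)$ and $k$ close to $N$ deserve a separate sanity check — in the latter $\underline{\alpha}_k$ may turn negative, which is precisely where the clipping $\max\{0,\underline{\alpha}_k\}$ in the definition of $\underline{\eps}_k$ becomes operative. No conceptually new ingredient is needed beyond a standard saddle-point estimate applied to each of the two sums that constitute $\tilde{\Psi}_{k,\delta}$.
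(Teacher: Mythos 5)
Your sandwich $\frac{k}{N} \leq \eps_k \leq \overline{\eps}_k$ follows the paper's own route: show $\Psi_{k,\delta}(k/N) \leq \delta < 1$ (hence $\eps_k > k/N$ by strict monotonicity), observe $\tilde{\Psi}_{k,\delta}(\eps_k) = \tfrac{1}{2}\Psi_{k,\delta}(\eps_k) + \nu_{k,\delta}(\eps_k) = \tfrac{1}{2} + \nu_{k,\delta}(\eps_k)$, and bound the residual $\nu_{k,\delta}$ below $\tfrac12$. Your ``truncated geometric progression'' step is not quite what is needed — the correct (and cleaner) observation, as in Appendix~\ref{Appendix_Psi_tilde}, is that each term $\frac{\binom{N+j}{k}}{\binom{N}{k}}(1-k/N)^{j}$ is individually $\leq 1$, so the $3N$-term sum gives $\nu_{k,\delta}(k/N)\leq\delta/2$, and $\nu_{k,\delta}$ is decreasing — but your reduction is in the right spirit.

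For \eqref{eq:bounds4asympt_up}–\eqref{eq:bounds4asympt_low}, however, your saddle-point route, which is genuinely different from the paper's (the paper converts the sums to a Beta$(k+1,4N+1-k)$ CDF, lower-bounds it via the mean--median inequality, and dominates the binomial factor by an exponential using a free parameter $c=1+1/\sqrt{k+1}$ in the spirit of Alamo et al.), rests on an incorrect intermediate claim. With $s := N\tau/k$ one has, from $u^\star = k/(N\alpha_0)$ and $\alpha_0 = k/N + \tau$, the exact identity
\[
Nf(u^\star) \;=\; k\bigl(s - \ln(1+s)\bigr),
\]
and the inequality you assert, $Nf(u^\star)\geq (N\tau)^2/(2k) = ks^2/2$, would require $s - \ln(1+s) \geq s^2/2$. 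This is false for every $s>0$: the Mercator series gives $s - \ln(1+s) = s^2/2 - s^3/3 + \cdots < s^2/2$. For small $k$ (where $s = 2\sqrt{k+1}L/k$ is large) the true value is roughly $ks \asymp \sqrt{k+1}\,L$, not $\asymp L^2$, so the chain ``$Nf(u^\star)\geq ks^2/2\geq 2L^2$'' breaks down precisely in the regime you would most need it. A correct version would have to lower-bound $k(s-\ln(1+s))$ directly against $\ln(\sqrt{k+1}/\delta)$ by a case split on $s$; the conclusion is plausibly still reachable, but your stated estimate does not establish it. A second, smaller, issue is the direction of the Stirling-type replacement: for $m<N$ each factor $(m-i)/(N-i)$ decreases in $i$, so $\binom{m}{k}/\binom{N}{k}\leq (m/N)^k$. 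Since you want a \emph{lower} bound on $\tilde{\Psi}_{k,\delta}(\alpha_0)$, replacing the ratio by $(m/N)^k$ points the wrong way; you would need a lower bound such as $\bigl((m-k+1)/(N-k+1)\bigr)^k$ and to propagate it through the saddle computation, which shifts the saddle and alters the exponent. Neither issue is pure bookkeeping, and neither arises in the paper's purely finite-$N$, non-asymptotic argument.
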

\begin{proof}
The proof of Proposition \ref{th:bounds4asympt} is given in Section \ref{proof_bounds4asympt}. 
\end{proof}
In both \eqref{eq:bounds4asympt_up} and \eqref{eq:bounds4asympt_low}, the dependence on $\delta$ is inversely logarithmic, which shows that ``confidence is cheap'': very small values of $\delta$ can be enforced without significantly affecting the results and, thereby, the width of the interval $[\underline{\eps}_k,\overline{\eps}_k]$ (see again Figure \ref{function-upperepsk-lowerepsk}). For any fixed $k$, we see that $\eps_k$ and $\overline{\eps}_k$, $\underline{\eps}_k$ tend to $k/N$ as $O(1/N)$, while for $k$ that grows at the same rate as $N$ (say $k/N$ = constant) $\eps_k$ and $\overline{\eps}_k$, $\underline{\eps}_k$ converge towards $k/N$ as $O(\sqrt{\ln (N)}/\sqrt{N})$. This is just marginally slower than the convergence rate for the law of large numbers, as given by the central limit theorem. 
\begin{figure}[t]
	\centering
	\includegraphics[width=\columnwidth]{./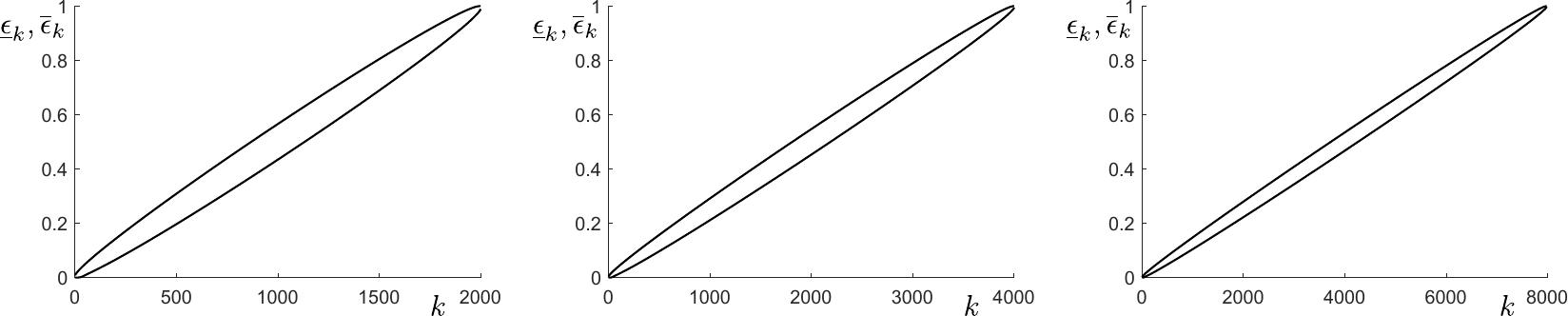}
	\caption{Graph of $\overline{\eps}_k$, and $\underline{\eps}_k$ as functions of $k$ for $\delta = 10^{-6}$ and $N= 2000$, $4000$, and $8000$. 
	}
	\label{fig:epsLU_N=2_4_8_x1000_bet=1e-6}
\end{figure}
The evolution of  $\overline{\eps}_k$, $\underline{\eps}_k$ as $N$ grows can be seen in Figure \ref{fig:epsLU_N=2_4_8_x1000_bet=1e-6}. \\

Reading the results of this section in the light of Theorem \ref{th:compression_1}, one concludes that, for all compression functions satisfying the \emph{preference} Property \ref{preference}, the bi-variate distribution of $\bk = 
|\co(\bz_1,\ldots,\bz_N)|$ and $\bphi_N$ all lies below the line $k/N$ plus an offset whose size goes to zero as $O(\sqrt{\ln (N)}/\sqrt{N})$ with the exception of a slim tail whose probabilistic mass is no more than $\delta$. If, additionally, Properties \ref{non-associativity} and \ref{no-concentrated-mass} hold, Theorem \ref{th:compression_2} shows that the bi-variate distribution of $\bk = |\co(\bz_1,\ldots,\bz_N)|$ and $\bphi_N$ all lies in a strip around $k/N$ whose size goes to zero as $O(\sqrt{\ln (N)}/\sqrt{N})$ but a slim tail. In this latter case, Theorem \ref{th:compression_2} also carries the very important implication that the ratio $\bk/N$ is a strongly consistent estimator of $\bphi_N$ irrespective of the problem at hand (it can be proven that Theorem \ref{th:compression_2} and Proposition \ref{th:bounds4asympt} together imply that $|\bk/N-\bphi_N|$ converges to zero both in the mean square sense and almost surely). 

\section{Compression schemes for machine learning}
\label{section-learning}

The aim of this section is to connect the theory of Section \ref{section-compression} to that of statistical risk in learning algorithms. Our findings will be compared with existing results at the end of this section. Refer to Section \ref{section-mathsetup} for the mathematical setup and notation. \\ 

Given a learning algorithm $\algo$, suppose that there exists a compression function $\co$ that ties in with the loss function $\loss$ according to the following property.
\begin{property}[coherence -- part I]
\label{mis_implies_change}
For any $n \geq 0$ and any choice of $z_1,\ldots,z_n,z_{n+1} \in \scZ$, if $\loss\big(\algo(z_1,\ldots,z_n),z_{n+1}\big) = 1$, then $\co\big(\co(z_1,\ldots,z_n),z_{n+1}\big) \neq \co(z_1,\ldots,z_n)$. \qed
\end{property}
Under Property \ref{mis_implies_change}, $\risk(\algo(\bz_1,\ldots,\bz_N)) \leq \bphi_N$ holds with probability $1$,\footnote{The reason why the inequality holds with probability $1$ and not always is that $\bphi_N$ is just a version of the conditional probability in Definition \ref{probability-of-change} and various versions can differ over events having probability zero.} and Theorem \ref{th:compression_1} can be used to bound the risk of the hypothesis returned by the learning algorithm, as specified in the following theorem.
\begin{theorem}
\label{theorem above-learning}
Given a learning algorithm $\algo$, suppose that there exists a compression function $\co$ that satisfies the \emph{coherence -- part I} Property \ref{mis_implies_change}. Assume the \emph{preference} Property~\ref{preference}. For any $\delta \in (0,1)$, it holds that
\[
\Pr\Big\{ \risk\big(\algo(\bz_1,\ldots,\bz_N)\big) > \eps_\bk \Big\} \le \delta,
\]
where $\bk = |\co(\bz_1,\ldots,\bz_N)|$ and $\eps_k$ is given in \eqref{epsilonk}. \qed
\end{theorem}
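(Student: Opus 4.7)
The plan is to reduce the statement to Theorem~\ref{th:compression_1} by showing that the coherence property forces the risk to be dominated, almost surely, by the probability of change of compression $\bphi_N$. Once this dominance is in place, a simple event-inclusion argument together with Theorem~\ref{th:compression_1} closes the proof.

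First, I would fix a version of the conditional probability $\bphi_N = \phi(\bz_1,\dots,\bz_N)$ from Definition~\ref{probability-of-change} and, similarly, view $\risk(\algo(\bz_1,\dots,\bz_N))$ as (a version of) the conditional probability
\[
\Pr\big\{\loss(\algo(\bz_1,\dots,\bz_N),\bz_{N+1}) = 1 \,\big|\, \bz_1,\dots,\bz_N\big\},
\]
which is legitimate because $\bz_{N+1}$ is independent of $(\bz_1,\dots,\bz_N)$ and distributed as $\bz$. The coherence property (Property~\ref{mis_implies_change}), applied to the deterministic realizations, gives the pointwise inclusion of events
\[
\{\loss(\algo(\bz_1,\dots,\bz_N),\bz_{N+1}) = 1\} \subseteq \{\co(\co(\bz_1,\dots,\bz_N),\bz_{N+1}) \neq \co(\bz_1,\dots,\bz_N)\}.
\]
Taking conditional probability with respect to $\bz_1,\dots,\bz_N$ on both sides yields $\risk(\algo(\bz_1,\dots,\bz_N)) \le \bphi_N$ with probability one (modulo the usual null-set caveats coming from the choice of versions).

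Next, I would observe that on the almost-sure event $\{\risk(\algo(\bz_1,\dots,\bz_N)) \le \bphi_N\}$, the inclusion
\[
\{\risk(\algo(\bz_1,\dots,\bz_N)) > \eps_\bk\} \subseteq \{\bphi_N > \eps_\bk\}
\]
holds, since $\eps_\bk \ge 0$ is a function of $\bz_1,\dots,\bz_N$ only. Therefore
\[
\Pr\{\risk(\algo(\bz_1,\dots,\bz_N)) > \eps_\bk\} \le \Pr\{\bphi_N > \eps_\bk\},
\]
and Theorem~\ref{th:compression_1}, whose hypotheses are exactly the \emph{preference} property already assumed here, bounds the right-hand side by $\delta$.

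The argument is essentially a wrapper around Theorem~\ref{th:compression_1}, so I do not expect any substantive obstacle. The only mild subtlety is the need to handle the fact that both $\bphi_N$ and $\risk(\algo(\bz_1,\dots,\bz_N))$ are identified only up to versions of a conditional probability, which explains why the coherence-based domination is asserted with probability one rather than deterministically; this null-set technicality propagates harmlessly into the final inequality.
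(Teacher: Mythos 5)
Your proposal is correct and matches the paper's argument: the paper also observes that coherence -- part I gives $\risk(\algo(\bz_1,\ldots,\bz_N)) \leq \bphi_N$ with probability $1$ (since the misclassification event is pointwise contained in the change-of-compression event, and taking conditional probabilities preserves the inequality up to a null set), and then invokes Theorem~\ref{th:compression_1}. The version-of-conditional-probability caveat you flag is exactly the one the paper addresses in the footnote accompanying its own statement.
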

We next provide a sufficient condition for Property \ref{mis_implies_change} to hold for the case in which the learning algorithm can be reconstructed from the compressed multiset.
\begin{definition}[reconstruction function]
\label{def-reconstruction}
Given a learning algorithm $\algo$ and a compression function $\co$, a reconstruction function $\rho$ is a map from multisets to hypotheses such that $\rho\big(\co(z_1,\ldots,z_n)\big) = \algo(z_1,\ldots,z_n)$ for any multiset $\ms(z_1,\dots,z_n)$. \qed
\end{definition}
We also need the following property, which requires that the examples in the training set for which the hypothesis chosen by the learning algorithm $\algo$ is \emph{inappropriate} are included in the compressed multiset.
\begin{property}[inclusion]
\label{inclusion}
For any multiset $\ms(z_1,\ldots,z_n)$ and for any $i = 1,2,\ldots,n$, if $\loss\big( \algo(z_1,\ldots,z_n),z_i \big) = 1$, then $z_i$ appears in $\co(z_1,\ldots,z_n)$ the same number of times as it appears in $\ms(z_1,\ldots,z_n)$. \qed
\end{property}
The following lemma shows that \emph{inclusion} implies \emph{coherence -- part I} whenever $\algo$ admits a reconstruction function for the given $\co$.
\begin{lemma}
\label{lemma-violation-gives-change}
Given a learning algorithm $\algo$ and a compression function $\co$ satisfying the \emph{inclusion} Property \ref{inclusion}, if there exists a reconstruction function for $(\algo,\co)$, then the \emph{coherence -- part I} Property \ref{mis_implies_change} holds. \qed
\end{lemma}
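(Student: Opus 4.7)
My plan is to argue by contradiction, pivoting on the reconstruction identity to transfer the inappropriateness at $z_{n+1}$ over to the augmented multiset, and then invoking \emph{inclusion} to expose a contradiction with the assumed equality of compressions.

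Concretely, suppose $\loss(\algo(z_1,\ldots,z_n), z_{n+1}) = 1$ but, for contradiction, $\co(\co(z_1,\ldots,z_n), z_{n+1}) = \co(z_1,\ldots,z_n)$. Set $U = \co(z_1,\ldots,z_n)$ and let $W = U \cup \ms(z_{n+1})$, which, by the notational convention stated in Section~\ref{section-mathsetup}, is the argument inside the outer compression in Property~\ref{mis_implies_change}. The supposition then reads $\co(W) = U$.

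Next I would exploit the reconstruction function twice. First, applied to the original training set: $\rho(U) = \rho(\co(z_1,\ldots,z_n)) = \algo(z_1,\ldots,z_n)$. Second, applied to $W$ (viewed as a multiset of examples in its own right): $\rho(\co(W)) = \algo(W)$. Combining these with $\co(W) = U$ gives $\algo(W) = \rho(U) = \algo(z_1,\ldots,z_n)$, so $\loss(\algo(W), z_{n+1}) = \loss(\algo(z_1,\ldots,z_n), z_{n+1}) = 1$.

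Now the \emph{inclusion} Property~\ref{inclusion}, applied to the multiset $W$ and its element $z_{n+1}$, forces $z_{n+1}$ to appear in $\co(W)$ exactly as many times as it appears in $W$, namely $\mu_U(z_{n+1}) + 1$ times. But $\co(W) = U$, in which $z_{n+1}$ appears only $\mu_U(z_{n+1})$ times, yielding $\mu_U(z_{n+1}) + 1 = \mu_U(z_{n+1})$, a contradiction. There are no subtle obstacles here; the only point that requires some care is keeping track of multiplicities (since multisets, not ordinary sets, are in use), so I would make the multiplicity-counting step explicit rather than informal.
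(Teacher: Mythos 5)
Your proof is correct and takes essentially the same route as the paper's: the paper argues the contrapositive (no change of compression implies $\loss = 0$) while you argue by contradiction (assume $\loss = 1$ and no change, derive a multiplicity clash), but the core steps are identical — apply $\rho$ to the compression equality to transfer the hypothesis, then invoke inclusion with a multiplicity count on $z_{n+1}$ in $\co(W)$ versus $W$. The multiplicity argument you make explicit is exactly the one the paper uses implicitly when it observes that $z_{n+1}$ appears in $\ms(\co(U),z_{n+1})$ one more time than in $\co(\co(U),z_{n+1})$.
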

\begin{proof}
We prove that, under \emph{inclusion} and existence of a reconstruction function, absence of change of compression is necessarily associated to \emph{appropriateness} (contrapositive of \emph{coherence -- part I} Property \ref{mis_implies_change}).

Let $U := \ms(z_1,\ldots,z_n)$ and suppose that for a new $z_{n+1}$ the compression does not change, i.e.,
\begin{equation} \label{k_not_change}
\co\big( \co(U),z_{n+1} \big) = \co(U).
\end{equation}
Applying $\rho$ to both sides of \eqref{k_not_change} and using the definition of reconstruction function gives
\begin{equation}
\label{fact1}
\algo\big( \co(U),z_{n+1} \big) = \algo(U).
\end{equation}
On the other hand, \eqref{k_not_change} implies that $z_{n+1}$ appears in $\co\big( \co(U),z_{n+1} \big)$ as many times as it does
in $\co(U)$. Thus, $z_{n+1}$ appears in $\ms\big( \co(U),z_{n+1} \big)$ one more time than it does in $\co\big( \co(U),z_{n+1} \big)$. Since $\algo$ and $\co$ satisfy the \emph{inclusion} Property \ref{inclusion}, it follows that
\begin{equation}
\label{fact2}
\loss\Big( \algo\big( \co(U),z_{n+1} \big), z_{n+1} \Big) = 0,
\end{equation}
and, substituting \eqref{fact1} in \eqref{fact2} gives
\[
\loss\big( \algo(U),z_{n+1} \big) = 0.
\]
Hence, it remains proven that
\[
\co\big( \co(U),z_{n+1} \big) = \co(U) \; \Longrightarrow \; \loss\big( \algo(U),z_{n+1} \big) = 0,
\]
which is the contrapositive of Property \ref{mis_implies_change}.
\end{proof}
\begin{remark}
\label{inclusion and reconstruction}
If, for any multiset, an algorithm generates a hypothesis that is \emph{appropriate} for all the examples in the multiset (i.e., the hypothesis is consistent with the multiset), then the \emph{inclusion} Property \ref{inclusion} is automatically satisfied and, hence, the existence of a reconstruction function implies the \emph{coherence -- part I} Property \ref{mis_implies_change}. The \emph{inclusion} Property \ref{inclusion} provides a condition for the \emph{coherence -- part I} property to hold when the algorithm is allowed to generate hypotheses without \emph{appropriateness} requirements on the training set. \\
Notice also that the \emph{inclusion} property alone (without a reconstruction function) does not imply the \emph{coherence -- part I} property. For example, consider points $z_i \in \Real{}$ and let\footnote{If two or more $z_i$ attain the maximum, then the second largest equals the largest.} $\algo(z_1,\ldots,z_n) = (-\infty, \mbox{second largest } z_i]$ and $\co(z_1,\ldots,z_n) = \max\{z_1,\ldots,z_n\}$,\footnote{For a training set that has only one element or it is empty, let, e.g., the algorithm return the whole real line and the compression coincide with the training set.} and say that $\algo(z_1,\ldots,z_n)$ is \emph{appropriate} for $z$ if $z \in \algo(z_1,\ldots,z_n)$. Here, one can verify that the \emph{inclusion} property holds, while no reconstruction function exists. If a new point $z_{n+1}$ falls in between the second largest $z_i$ and $\max\{z_1,\ldots,z_n\}$, then $\algo(z_1,\ldots,z_n)$ is not \emph{appropriate} for this $z_{n+1}$, but the compression does not change, that is, the \emph{coherence -- part I} property does not hold. \qed
\end{remark}
The statement of Lemma \ref{lemma-violation-gives-change} does not admit a converse: under the existence of a reconstruction function, \emph{coherence -- part I} does not imply \emph{inclusion}. To see this, let examples $z_i$ be points of $\Real{}$ and consider $\algo(z_1,\ldots,z_n) = [\mbox{second largest } z_i,+\infty)$, while $\co(z_1,\ldots,z_n) = \mbox{second largest } z_i$.\footnote{When the training set has only one element $z_1$, let the algorithm return $[z_1,+\infty)$ and the compression be $\ms(z_1)$ while, with an empty training set, the algorithm returns the empty subset of $\Real{}$ and the compression is obviously empty.} Then, $\algo(z_1,\ldots,z_n)$ can be reconstructed from $\co(z_1,\ldots,z_n)$ and, when a point for which $\algo(z_1,\ldots,z_n)$ is \emph{inappropriate} (i.e., the point does not belong to $\algo(z_1,\ldots,z_n)$) is added to $\co(z_1,\ldots,z_n)$, the compression becomes the newly added point (which is now the second largest\footnote{If the compression is empty, then the newly added point is alone and it becomes the compression as well.}) so that the \emph{coherence -- part I} property holds. On the other hand, if there are among  $z_1,\ldots,z_n$ some examples strictly smaller than  the second largest $z_i$, then $\algo(z_1,\ldots,z_n)$ is \emph{inappropriate} for all of these examples while these examples are not in the compression (and, hence, the \emph{inclusion} property does not hold). \\ 

Interestingly, the properties of \emph{inclusion} and \emph{coherence -- part I} become equivalent under \emph{preference}, a fact that is stated in the next lemma.
\begin{lemma}
Consider a learning algorithm $\algo$ and a compression function $\co$ that satisfies the \emph{preference} Property~\ref{preference}. Assume that there exists a reconstruction function $\rho$ for $(\algo,\co)$. Then, the \emph{inclusion} Property \ref{inclusion} holds iff the \emph{coherence -- part I} Property \ref{mis_implies_change} holds.
\end{lemma}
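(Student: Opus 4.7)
The forward direction (\emph{inclusion} implies \emph{coherence -- part I}) is already established in Lemma~\ref{lemma-violation-gives-change}, where it is proved using only the existence of the reconstruction function. So the entire content of this lemma is the converse, and I would tackle it by contrapositive: assume \emph{preference}, existence of $\rho$, and \emph{coherence -- part I}, and extract a contradiction from any hypothetical failure of \emph{inclusion}.

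Suppose \emph{inclusion} fails. Then there exist $z_1,\ldots,z_n$ and an index $i$ such that, writing $h := \algo(z_1,\ldots,z_n)$ and $U := \ms(z_1,\ldots,z_n)$, $V := \co(U)$, one has $\loss(h,z_i) = 1$ while $z_i$ appears in $V$ strictly fewer times than in $U$. The key consequence of this multiplicity gap is that $V \cup \ms(z_i) \subseteq U$, because $V$ still has room to accommodate one extra copy of $z_i$ without exceeding the multiplicity of $z_i$ in $U$.

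Next I would exploit \emph{preference} twice via Lemma~\ref{lemma_fund}. First, $\co(U) \subseteq V \subseteq U$ trivially, so $\co(V) = \co(U) = V$; combined with the defining identity of the reconstruction function, this yields
\[
\algo(V) \;=\; \rho(\co(V)) \;=\; \rho(V) \;=\; \rho(\co(U)) \;=\; \algo(U) \;=\; h,
\]
and in particular $\loss(\algo(V),z_i) = 1$. Applying \emph{coherence -- part I} to the multiset $V$ and the new example $z_i$ then gives $\co(\co(V),z_i) \neq \co(V)$, i.e., $\co(V \cup \ms(z_i)) \neq V$. Second, since $V = \co(U) \subseteq V \cup \ms(z_i) \subseteq U$, another invocation of Lemma~\ref{lemma_fund} gives $\co(V \cup \ms(z_i)) = \co(U) = V$, which contradicts the inequality just derived.

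The argument is essentially a clean two-line application of Lemma~\ref{lemma_fund} sandwiched between an invocation of the reconstruction identity and an invocation of \emph{coherence -- part I}, so I do not anticipate any substantial obstacle. The only point that needs care is the bookkeeping with multiplicities in step one: one has to check that the strict inequality in multiplicities of $z_i$ (between $U$ and $V$) is precisely what guarantees $V \cup \ms(z_i) \subseteq U$, which is what activates the second use of Lemma~\ref{lemma_fund} and produces the contradiction.
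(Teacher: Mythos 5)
Your proof is correct and follows essentially the same route as the paper's: reduce to showing coherence~$\Rightarrow$~inclusion, argue by contradiction from a failing $z_i$, use preference via Lemma~\ref{lemma_fund} to transfer the loss from $\algo(U)$ to $\algo(\co(U))$ through the reconstruction function, apply coherence at $\co(U)$ to get $\co(\co(U),z_i) \neq \co(U)$, and contradict Lemma~\ref{lemma_fund} with the sandwiched multiset $\co(U) \subseteq \ms(\co(U),z_i) \subseteq U$. Your explicit check that the strict multiplicity gap is exactly what makes $\ms(\co(U),z_i) \subseteq U$ is a point the paper leaves implicit, but otherwise the two arguments coincide.
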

\begin{proof}
In view of Lemma \ref{lemma-violation-gives-change}, we only need to show the implication \emph{coherence -- part I} $\Rightarrow$ \emph{inclusion}.

Let $U := \ms(z_1,\ldots,z_n)$. Assume the \emph{coherence -- part I} property and, by contradiction, that the \emph{inclusion} property fails so that there is a $z_i \in U$ such that $\loss\big( \algo(U),z_i \big) = 1$ and $z_i$ does not appear in $\co(U)$ as many times as it does in $U$. Now, $\algo(\co(U)) = \rho(\co(\co(U))) = \rho(\co(U)) = \algo(U)$ (where the second last equality is true because $\co(\co(U)) = \co(U)$ under \emph{preference} -- see the comment immediately after Lemma \ref{lemma_fund}); hence, $\loss\big( \algo(\co(U)),z_i \big) = \loss\big( \algo(U),z_i \big) = 1$. By the \emph{coherence -- part I} property, we then have: $\co(\co(U),z_i) \neq \co(U)$, which contradicts Lemma \ref{lemma_fund} by the choice $V = \ms(\co(U),z_i)$ for which $\co(U) \subseteq V \subseteq U$.
\end{proof}
Under \emph{preference} and the existence of a reconstruction function, the previous lemma shows that \emph{inclusion} is strictly necessary to have the \emph{coherence -- part I} property. The next lemma shows a way to secure \emph{inclusion} (and thereby \emph{coherence -- part I}) by augmenting the compression function so as to include examples for which the hypothesis is \emph{inappropriate}.
\begin{lemma}
\label{inclusion by augmentation}
Consider a learning algorithm $\algo$ and a compression function $\co$ that satisfies the \emph{preference} Property~\ref{preference} ($(\algo,\co)$ are not required to satisfy the \emph{inclusion} Property \ref{inclusion}). Assume that there exists a reconstruction function $\rho$  for $(\algo,\co)$. Define a new couple $(\tilde{c},\tilde{\rho})$ as follows:
\begin{itemize}
\item[$\bullet$] for any multiset $U$, let $\tilde{\co}(U) = \co(U) \cup \big( \ms(z_i \in U: \loss(\algo(U)),z_i) = 1) \; \setminus \; \co(U) \big)$;\\
(i.e., $\tilde{\co}(U)$ is $\co(U)$ augmented with the examples that are not already in $\co(U)$ for which $\algo(U)$ is \emph{inappropriate});
\item[$\bullet$] for any multiset $U$, let $\tilde{\rho}(U) = \algo(U)$.
\end{itemize}
Then,
\begin{itemize}
\item[(i)] $\tilde{\co}$ satisfies the \emph{preference} Property \ref{preference};
\item[(ii)] $\tilde{\rho}$ is a reconstruction function for $(\algo,\tilde{\co})$;
\item[(iii)] $(\algo,\tilde{\co})$ satisfies the \emph{inclusion} Property~\ref{inclusion} (and, thereby, the \emph{coherence -- part I} Property~\ref{mis_implies_change}). \qed
    \end{itemize}
\end{lemma}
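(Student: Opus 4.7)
My plan is to prove the three items in the order (iii), (ii), (i), since each subsequent part leans on what came before. Item (iii) is a direct multiplicity check from the definition of $\tilde{\co}$. Fix a multiset $U$ and any example $z$ with $\loss(\algo(U),z)=1$ that appears in $U$. Writing $a = \mu_{\co(U)}(z)$, note $a \leq \mu_U(z)$ since $\co(U) \subseteq U$. The augmentation multiset $\ms(z_i \in U : \loss(\algo(U),z_i)=1) \setminus \co(U)$ has multiplicity $\mu_U(z) - a$ at $z$, and taking the union with $\co(U)$ yields total multiplicity $a + (\mu_U(z)-a) = \mu_U(z)$. So $z$ appears in $\tilde{\co}(U)$ as many times as in $U$, which is exactly the inclusion property. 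Coherence -- part I then follows from Lemma~\ref{lemma-violation-gives-change} once (i) and (ii) are in hand.

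For (ii), the requirement is $\tilde{\rho}(\tilde{\co}(U)) = \algo(U)$, i.e., $\algo(\tilde{\co}(U)) = \algo(U)$. Because $\co(U) \subseteq \tilde{\co}(U) \subseteq U$ by construction, Lemma~\ref{lemma_fund} applied to the preferent $\co$ gives $\co(\tilde{\co}(U)) = \co(U)$. Since $\rho$ is a reconstruction function for $(\algo,\co)$, we have $\algo(W) = \rho(\co(W))$ for every multiset $W$. Applying this identity with $W = \tilde{\co}(U)$ and $W = U$ yields $\algo(\tilde{\co}(U)) = \rho(\co(\tilde{\co}(U))) = \rho(\co(U)) = \algo(U)$, as required.

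For (i), I would use the Lemma~\ref{lemma_fund} reformulation: it suffices to show $\tilde{\co}(V) = \tilde{\co}(U)$ whenever $\tilde{\co}(U) \subseteq V \subseteq U$. From $\co(U) \subseteq \tilde{\co}(U) \subseteq V \subseteq U$, preference of $\co$ yields $\co(V) = \co(U)$, and the reconstruction identity $\algo(\cdot) = \rho(\co(\cdot))$ immediately gives $\algo(V) = \algo(U)$. Thus the defining expressions for $\tilde{\co}(V)$ and $\tilde{\co}(U)$ agree in both the $\co(\cdot)$ piece and the ``bad example'' label $\algo(\cdot)$; they differ only in whether the augmentation is drawn from $V$ or from $U$. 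The remaining point is to show these augmentations coincide as multisets. This is where the content of (iii) re-enters: for any $z$ with $\loss(\algo(U),z)=1$, the argument above shows $\mu_{\tilde{\co}(U)}(z) = \mu_U(z)$; the sandwich $\tilde{\co}(U) \subseteq V \subseteq U$ then forces $\mu_V(z) = \mu_U(z)$. So $\ms(z_i \in V : \loss(\algo(V),z_i)=1) = \ms(z_i \in U : \loss(\algo(U),z_i)=1)$, and $\tilde{\co}(V) = \tilde{\co}(U)$ follows.

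The only place requiring care is the multiplicity bookkeeping in (i) and (iii): one must track that the augmentation is exactly large enough to top up each misclassified example to its full multiplicity in $U$, and that this ``topping up'' then propagates into the sandwich argument that fixes $\mu_V(z) = \mu_U(z)$ for every inappropriate $z$. Everything else reduces cleanly to Lemma~\ref{lemma_fund} plus the reconstruction identity for the original pair $(\algo,\co)$.
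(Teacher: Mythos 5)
Your proof is correct and follows essentially the same route as the paper: parts (ii) and (iii) are handled identically, and part (i) uses Lemma~\ref{lemma_fund} exactly as the paper does, with the same chain $\co(V)=\co(U)$, $\algo(V)=\algo(U)$, followed by the observation that the sandwich $\tilde{\co}(U)\subseteq V\subseteq U$ forces the augmentation multisets built from $V$ and from $U$ to coincide (the paper states this compactly; you spell out the multiplicity bookkeeping, which is a useful clarification but not a different idea). One minor slip: coherence -- part I for $(\algo,\tilde{\co})$ follows from Lemma~\ref{lemma-violation-gives-change} given (ii) (a reconstruction function for $\tilde{\co}$) and (iii) (inclusion); it does not rely on (i), so ``once (i) and (ii) are in hand'' should read ``once (ii) and (iii) are in hand.''
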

\begin{proof} $\phantom A$ \\
\\
\emph{(i)} Consider any two multisets $U$ and $V$ such that $\tilde{\co}(U) \subseteq V \subseteq U$. We want to show that $\tilde{\co}(V) = \tilde{\co}(U)$, which, by Lemma \ref{lemma_fund}, implies that $\tilde{\co}$ satisfies the \emph{preference} Property \ref{preference}. By definition of $\tilde{\co}$, it holds that $\co(U) \subseteq  \tilde{\co}(U)$, yielding $\co(U) \subseteq V \subseteq U$. Since $\co$ satisfies the \emph{preference} Property \ref{preference}, Lemma \ref{lemma_fund} gives $\co(V) = \co(U)$, which, together with the fact that $\rho$ is a reconstruction function for $(\algo,\co)$, also implies that $\algo(V) = \rho(\co(V)) = \rho(\co(U)) = \algo(U)$. Using $\co(V) = \co(U)$ and  $\algo(V) = \algo(U)$ in the definition of $\tilde{\co}(V)$ gives
\begin{eqnarray}
\label{eq:aux_extended}
\tilde{\co}(V) & = & \co(V) \cup \big( \ms(z_i \in V: \loss(\algo(V),z_i) = 1) \; \setminus \; \co(V) \big) \nonumber \\
& = & \co(U) \cup \big( \ms(z_i \in V: \loss(\algo(U),z_i) = 1) \; \setminus \; \co(U) \big) \nonumber \\
& = & \co(U) \cup \big( \ms(z_i \in U: \loss(\algo(U),z_i) = 1) \; \setminus \; \co(U) \big),
\end{eqnarray}
where the last equality follows by the observation that $\tilde{\co}(U) \subseteq V$ and that $\tilde{\co}(U)$ contains by definition all the $z_i \in U$ such that $\loss(\algo(U),z_i) = 1$. The thesis follows by observing that the right-hand side of \eqref{eq:aux_extended} is $\tilde{\co}(U)$. \\
\\
\emph{(ii)} For any multiset $U$ it holds that $\co(U) \subseteq \tilde{\co}(U) \subseteq U$ and, since $\co$ satisfies the \emph{preference} Property \ref{preference}, Lemma \ref{lemma_fund} gives that $\co(\tilde{\co}(U)) = \co(U)$. Recalling now the definition of $\tilde{\rho}$ and the fact $\rho$ is a reconstruction function for $(\algo,\co)$, we have that
$$
\tilde{\rho}(\tilde{\co}(U)) = \algo(\tilde{\co}(U)) = \rho(\co(\tilde{\co}(U))) = \rho(\co(U)) = \algo(U).
$$
\emph{(iii)} This is obvious in view of the definition of $\tilde{\co}$.
\end{proof}

Using Lemma \ref{inclusion by augmentation}, the following theorem follows immediately from Theorem \ref{theorem above-learning}.
\begin{theorem}
\label{theorem above-learning-2}
Consider a learning algorithm $\algo$ and a compression function $\co$ that satisfies the \emph{preference} Property~\ref{preference}. Assume that there exists a reconstruction function $\rho$  for $(\algo,\co)$. Define $\tilde{\co}$ as in Lemma \ref{inclusion by augmentation}. For any $\delta \in (0,1)$, it holds that
\[
\Pr\Big\{ \risk\big(\algo(\bz_1,\ldots,\bz_N)\big) > \eps_\bk \Big\} \le \delta,
\]
where $\bk = |\tilde{\co}(\bz_1,\ldots,\bz_N)|$ and $\eps_k$ is given in \eqref{epsilonk}. \qed
\end{theorem}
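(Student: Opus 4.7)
The plan is to obtain Theorem \ref{theorem above-learning-2} as a direct composition of Lemma \ref{inclusion by augmentation} with Theorem \ref{theorem above-learning}, with no new technical machinery required. The key observation is that Theorem \ref{theorem above-learning} takes as input a pair (learning algorithm, compression function) satisfying two properties — \emph{preference} and \emph{coherence -- part I} — and returns the desired high-confidence bound on the risk in terms of the cardinality of the compression. So the entire task is to verify that the pair $(\algo,\tilde{\co})$ constructed in Lemma \ref{inclusion by augmentation} meets both hypotheses.

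First I would invoke parts (i) and (iii) of Lemma \ref{inclusion by augmentation}. Part (i) gives that the augmented compression $\tilde{\co}$ satisfies the \emph{preference} Property \ref{preference}; this is available because the hypotheses of Theorem \ref{theorem above-learning-2} already assume that $\co$ is \emph{preferent} and that a reconstruction function $\rho$ exists for $(\algo,\co)$, which are precisely the premises of Lemma \ref{inclusion by augmentation}. Part (iii) of the same lemma then gives that $(\algo,\tilde{\co})$ satisfies the \emph{inclusion} Property \ref{inclusion}, and, as noted parenthetically there, also the \emph{coherence -- part I} Property \ref{mis_implies_change}. (In fact part (ii) of the lemma — that $\tilde{\rho}$ reconstructs $\algo$ from $\tilde{\co}$ — combined with Lemma \ref{lemma-violation-gives-change}, also delivers \emph{coherence -- part I}, so the implication can be derived in either of two equivalent ways.)

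With \emph{preference} and \emph{coherence -- part I} established for $(\algo,\tilde{\co})$, I would then apply Theorem \ref{theorem above-learning} verbatim, but with $\tilde{\co}$ playing the role of the compression function. That theorem immediately yields
\[
\Pr\Big\{ \risk\big(\algo(\bz_1,\ldots,\bz_N)\big) > \eps_{\tilde{\bk}} \Big\} \le \delta,
\]
where $\tilde{\bk}=|\tilde{\co}(\bz_1,\ldots,\bz_N)|$ and $\eps_k$ is the function defined in \eqref{epsilonk}. Renaming $\tilde{\bk}$ as $\bk$ to match the notation of the statement concludes the proof.

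Since the derivation is purely a plug-in, no genuine obstacle arises at the proof level; the conceptual heavy lifting sits in the two prior results being composed. The only point one needs to be mildly careful about is that Theorem \ref{theorem above-learning} is applied to the pair $(\algo,\tilde{\co})$ and not to $(\algo,\co)$ — it is $|\tilde{\co}(\bz_1,\ldots,\bz_N)|$, the size of the \emph{augmented} compression, that appears in the bound, and this is exactly the random variable $\bk$ named in the statement. Thus the proof reduces to a two-line citation of Lemma \ref{inclusion by augmentation} followed by a one-line invocation of Theorem \ref{theorem above-learning}.
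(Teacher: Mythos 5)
Your proposal is correct and matches the paper's own argument: the paper states that Theorem \ref{theorem above-learning-2} "follows immediately from Theorem \ref{theorem above-learning}" via Lemma \ref{inclusion by augmentation}, which is exactly the plug-in you carry out, applying Theorem \ref{theorem above-learning} to the pair $(\algo,\tilde{\co})$ after verifying \emph{preference} and \emph{coherence -- part I} via parts (i) and (iii) of the lemma.
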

Upper and lower bounds for $\risk(\algo(\bz_1,\ldots,\bz_N))$ can be established under additional conditions.
\begin{property}[coherence -- part II]
\label{chance implies mis}
For any $n \geq 0$ and $p \geq 1$,
$$
\Pr \big( E_1 \setminus  E_2 \big) = 0,
$$
where 
$$
E_1 = \{ \co\big(\co(\bz_1,\ldots,\bz_n),\bz_{n+1}\big) \neq \co(\bz_1,\ldots,\bz_n) \},
$$
$$
E_2 = \{ \loss\big(\algo(\bz_1,\ldots,\bz_n),\bz_{n+1}\big) = 1 \}.
$$
\qed
\end{property}
The \emph{coherence -- part II} property requires that $E_2$ covers $E_1$ up to an event of probability zero. The reason for not requiring that $E_1 \setminus  E_2 = \emptyset$ is that non-pathological examples can be exhibited where this latter condition fails (while the one in probability does hold), showing that this requirement would be unduly restrictive. \\ 

We now have the following theorem, which can be proven from Theorem \ref{th:compression_2} in the light of the two \emph{coherence} properties. 
\begin{theorem}
\label{theorem above below-learning}
Given a learning algorithm $\algo$, suppose that there exists a compression function $\co$ that satisfies the \emph{coherence -- part I} Property \ref{mis_implies_change} and the \emph{coherence -- part II} Property \ref{chance implies mis}. Assume the \emph{preference} Property \ref{preference}, the \emph{non-associativity} Property \ref{non-associativity} and the \emph{non-concentrated mass} Property \ref{no-concentrated-mass}. For any $\delta \in (0,1)$, it holds that
$$
\Pr\Big\{ \underline{\eps}_\bk \leq \risk\big(\algo(\bz_1,\ldots,\bz_N)\big) \leq \overline{\eps}_\bk  \Big\} \geq 1-\delta,
$$
where $\bk = |\co(\bz_1,\ldots,\bz_N)|$ and $\underline{\eps}_k$, $\overline{\eps}_k$ are given respectively in \eqref{underline_epsilonk}, \eqref{overline_epsilonk}. \qed 
\end{theorem}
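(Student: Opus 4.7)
The plan is to reduce the statement to Theorem \ref{th:compression_2} by showing that, under the two coherence properties, the risk of $\algo(\bz_1,\ldots,\bz_N)$ coincides almost surely with the probability of change of compression $\bphi_N$. Once this identification is established, the lower-upper sandwich on the risk is just the sandwich on $\bphi_N$ given by Theorem \ref{th:compression_2}, which is applicable because \emph{preference}, \emph{non-associativity}, and \emph{non-concentrated mass} are all assumed here as well.

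First, I would use the \emph{coherence -- part I} Property \ref{mis_implies_change} to get $\risk\bigl(\algo(\bz_1,\ldots,\bz_N)\bigr) \leq \bphi_N$ almost surely. Since Property \ref{mis_implies_change} holds for every choice of the arguments, the event $E_2 = \{\loss(\algo(\bz_1,\ldots,\bz_N),\bz_{N+1}) = 1\}$ is contained in the event $E_1 = \{\co(\co(\bz_1,\ldots,\bz_N),\bz_{N+1}) \neq \co(\bz_1,\ldots,\bz_N)\}$ on all of $\Omega$. Conditioning on $\bz_1,\ldots,\bz_N$ preserves the inclusion at the level of conditional probabilities, so $\risk(\algo(\bz_1,\ldots,\bz_N)) = \Pr(E_2 \mid \bz_1,\ldots,\bz_N) \leq \Pr(E_1 \mid \bz_1,\ldots,\bz_N) = \bphi_N$ almost surely (here any version of the conditional probability is acceptable, as in the footnote of Theorem \ref{theorem above-learning}).

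For the reverse direction, I would invoke the \emph{coherence -- part II} Property \ref{chance implies mis} with $n = N$ and $p = 1$: it states $\Pr(E_1 \setminus E_2) = 0$. Taking conditional expectations of the indicator $\One{E_1 \setminus E_2}$ given $\bz_1,\ldots,\bz_N$ yields $\Pr(E_1 \setminus E_2 \mid \bz_1,\ldots,\bz_N) = 0$ almost surely, whence $\Pr(E_1 \mid \bz_1,\ldots,\bz_N) = \Pr(E_1 \cap E_2 \mid \bz_1,\ldots,\bz_N) \leq \Pr(E_2 \mid \bz_1,\ldots,\bz_N)$ almost surely. This gives $\bphi_N \leq \risk(\algo(\bz_1,\ldots,\bz_N))$ almost surely, so combining the two inequalities produces $\bphi_N = \risk(\algo(\bz_1,\ldots,\bz_N))$ almost surely.

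With this identification in hand, the conclusion follows at once: Theorem \ref{th:compression_2} says $\Pr\{\underline{\eps}_\bk \leq \bphi_N \leq \overline{\eps}_\bk\} \geq 1 - \delta$, and replacing $\bphi_N$ by $\risk(\algo(\bz_1,\ldots,\bz_N))$ on the almost-sure event where they coincide modifies the probability by at most zero. The only conceptual care needed is bookkeeping of the almost-sure qualifiers, since $\bphi_N$ is defined only as a version of a conditional probability; the main obstacle is therefore not technical but merely the verification that the two coherence properties give exactly the two-sided containment of events (modulo null sets) required to transfer Theorem \ref{th:compression_2} from the probability of change of compression to the risk.
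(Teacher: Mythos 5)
Your proof is correct and follows essentially the same route as the paper: identify $\bphi_N$ and $\risk(\algo(\bz_1,\ldots,\bz_N))$ as conditional probabilities of $E_1$ and $E_2$ respectively, use \emph{coherence -- part I} for the pointwise inclusion $E_2 \subseteq E_1$ and \emph{coherence -- part II} for the almost-sure reverse containment, conclude $\bphi_N = \risk(\algo(\bz_1,\ldots,\bz_N))$ almost surely, and then invoke Theorem~\ref{th:compression_2}. The paper states this more tersely; your version fills in the same steps with a bit more detail on the handling of conditional versions and null sets, but there is no difference in substance.
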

\begin{proof}
Consider events $E_1$ and $E_2$ as in the statement of the \emph{coherence - part II} Property \ref{chance implies mis} and notice that $\bphi_N = \Pr \{E_1 | \bz_1,\ldots,\bz_N \}$ while $\risk\big(\algo(\bz_1,\ldots,\bz_N)\big) = \Pr \{E_2 | \bz_1,\ldots,\bz_N \}$. Properties \ref{mis_implies_change} and  \ref{chance implies mis} then imply that $\bphi_N \neq \risk\big(\algo(\bz_1,\ldots,\bz_N)\big)$ over a zero probability set only, and the conclusion of Theorem \ref{theorem above below-learning} follows in view of \eqref{result-th-2}.
\end{proof} 

We close this section with some comparison of the results given here with previous results established under the \emph{preference} property (or the \emph{stability} property, as it is phrased in some contributions) for compression schemes that consist of a compression function $\co$ and a reconstruction function $\rho$. In an example-consistent framework (i.e., the bound on the risk is only given for multisets $S$ for which $\rho(\co(S))$ is \emph{appropriate} for all $z_i \in S$), the best available result for the case when a threshold on the maximum cardinality of $\co(S)$ is known is given by Theorem 15 in \cite{pmlr-v125-bousquet20a}. When $N \to \infty$, the bound on the risk in \cite{pmlr-v125-bousquet20a} exhibits a $O(1/N)$ convergence rate to zero, in line with the asymptotic results of this paper given in Section \ref{section-asymptotic}. The findings of \cite{pmlr-v125-bousquet20a} have been extended to compression schemes that have no upper limit for the maximum cardinality of $\co(S)$ in \cite{HannekeKontorovich_2021}, Theorem 10. As compared to \cite{HannekeKontorovich_2021}, our upper bound shows a uniform (in $|\co(S)| \in \{0,1,\ldots,N\}$) convergence towards $|\co(S)|/N$, which is unattainable within the approach of \cite{HannekeKontorovich_2021} (where $|\co(S)|/N$ is multiplied by a non-unitary constant). Moreover, our bound is unprecedentedly sharp for finite values of $N$ and gets rapidly close to $|\co(S)|/N$ as $N$ grows, see Figure \ref{fig:epsLU_N=2_4_8_x1000_bet=1e-6}. Moving to the non-consistent framework, the available literature aims at bounding the gap between the empirical probability of \emph{inappropriateness} (ratio between the number of examples in the training set for which the selected hypothesis is \emph{inappropriate} divided by the size of the training set) and the actual probability of \emph{inappropriateness} (i.e., the actual risk). Our Theorem \ref{theorem above-learning-2} departs from this approach by allowing for an evaluation of the risk that uses directly the size of an augmented compression that automatically incorporates the empirical cases of \emph{inappropriateness}. This allows us to use the same bound for the risk, without distinguishing between the consistent and non-consistent frameworks. The ensuing theory reveals all its sharpness when change of compression and \emph{inappropriateness} are equivalent as specified in Theorem \ref{theorem above below-learning} (this is, e.g., the case for Support Vector Regression in Section \ref{section-SVR} or the Guaranteed Error Machine in Section \ref{section-GEM} under mild conditions), in which case one can establish lower and upper bounds that converge one to the other for increasing $N$ as shown in Section \ref{section-asymptotic} (which is unprecedented in statistical learning).\footnote{This also shows the sharpness of the upper bound in Theorem \ref{theorem above-learning-2} because $\eps_k \leq \overline{\eps}_k$ (see Proposition \ref{th:bounds4asympt}) and the cases dealt with in Theorem \ref{theorem above below-learning} that admit lower bound $\underline{\eps}_k$ and upper bound $\overline{\eps}_k$ have to be accommodated in Theorem \ref{theorem above-learning-2} as well.} See also the next Example \ref{example:cvx_hull} for a numerical simulation that shows that our bounds for finite $N$ well capture the intrinsic stochastic variability of the risk. 

\begin{exa} \label{example:cvx_hull}
We consider a sample of $1000$ points drawn in an independent fashion in $\Real{3}$ and an algorithm $\algo$ that constructs the corresponding convex hull (see Figure \ref{convex-hull}). 
\begin{figure}[t]
	\centering
	\includegraphics[width=0.4\columnwidth]{./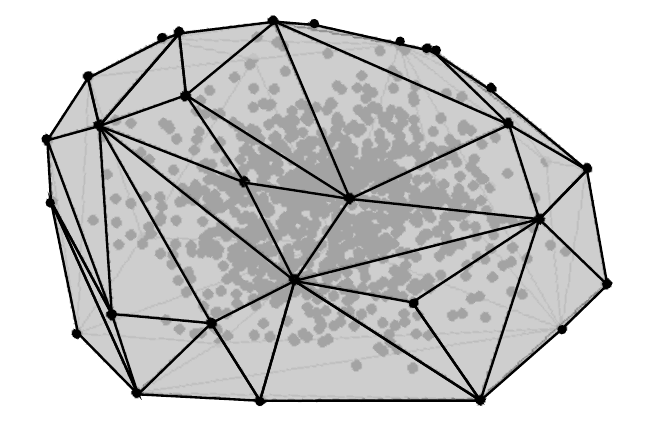}
	\caption{Convex hull of points in $\Real{3}$.}
	\label{convex-hull}
\end{figure}
The compression function $\co$ returns the vertexes of the convex hull (in case of multiple points corresponding to the same vertex, only one point is put in the compression) and a new point is \emph{appropriate} if it belongs to the convex hull. It is easy to check that $\co$ satisfies the \emph{preference} Property \ref{preference} and the \emph{non-associativity} Property \ref{non-associativity} and that \emph{coherence -- part I} Property \ref{mis_implies_change} and \emph{coherence -- part II} Property \ref{chance implies mis} also hold. Hence, if the probability by which the points are drawn has no concentrated mass (for instance, if it admits density), then the \emph{non-concentrated mass} Property \ref{no-concentrated-mass} is also verified and Theorem \ref{theorem above below-learning} can be used to assess the risk.
	\begin{figure}[t]
	\centering
	\includegraphics[width=\columnwidth]{./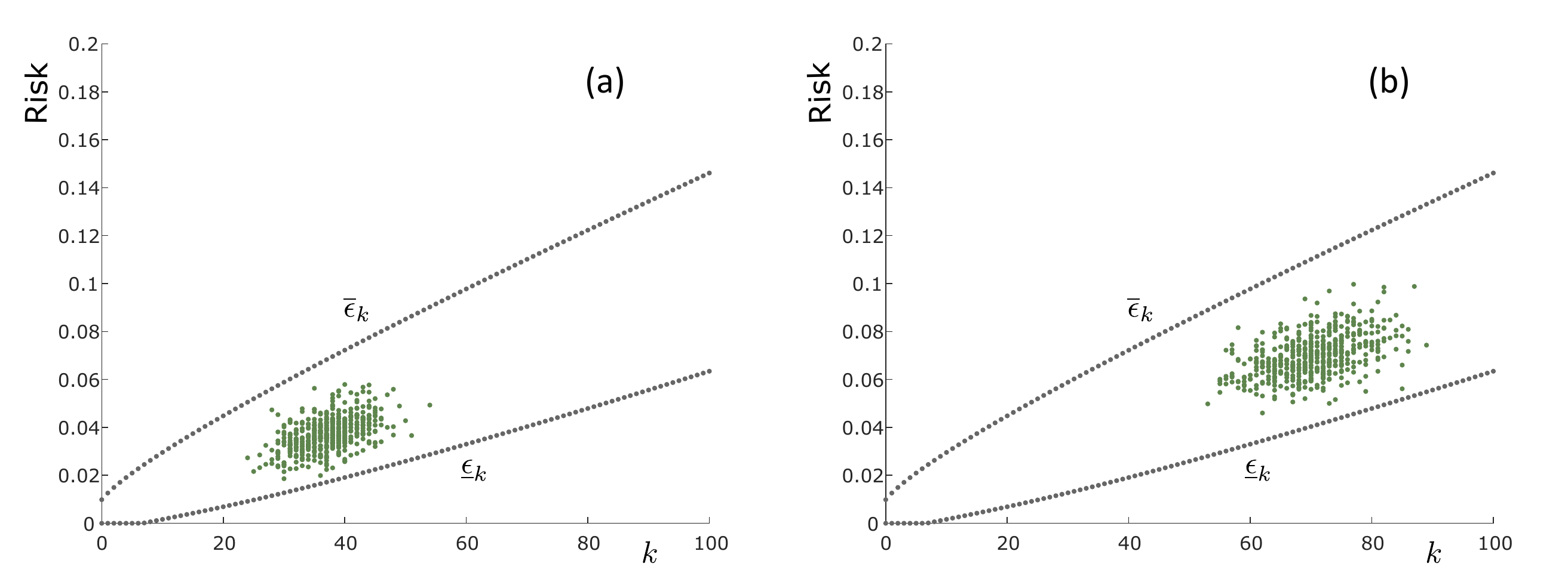}
	\caption{Region delimited by $\underline{\eps}_k$ and $\overline{\eps}_k$ for $N = 1000$ and $\delta = 10^{-3}$. The green dots are generated by a Monte-Carlo testing with (a) a Gaussian distribution and (b) a uniform distribution.}
	\label{convex-hull-risk}
\end{figure}
Panels (a) and (b) in Figure \ref{convex-hull-risk} profile the region delimited by $\underline{\eps}_k$ and $\overline{\eps}_k$ for $N = 1000$ and $\delta = 10^{-3}$. The green dots have coordinates equal to the cardinality of the compressed multiset ($x$ axis) and the risk ($y$ axis) in a Monte-Carlo testing in which points in $\Real{3}$ have a Gaussian distribution -- panel (a) -- and a uniform distribution in a hyper-cube -- panel (b). One sees that the two clouds of green dots in (a) and in (b) are quite different, while, in both cases, they belong to the region (compare with the discussion in Section \ref{section-compression}). Moreover, the stochastic fluctuation of the two clouds well covers the gap between the lower and the upper bound, signifying that the bounds are tight. \qed
\end{exa}
\begin{remark}{\bf (On the role of observations)}
	We feel advisable to just touch upon here an aspect, the full study of which goes beyond the intended goal of this paper. In data-driven applications, it is common practice that observations are split in two sets, used respectively for training and testing. This paper shows that, in a compression setup, data can well stand a double role, in which they are all used for training, while preserving their usability in the process of assessing the risk. Indeed, under the assumptions of Theorem \ref{theorem above below-learning} one can show that the quality of risk assessment by means of $\underline{\eps}_k$ and $\overline{\eps}_k$ (which is based on the sample of data points that has been used for training) only marginally degrades as compared to testing the solution with a new, untouched, sample of data points of equal cardinality. Hence, in this context, saving data for testing seems inappropriate, particularly when data are a scarce or costly resource. \qed
\end{remark}
For the non-consistent framework, it is interesting to further contrast asymptotic bounds that stem from the theory of this paper with previous results. To facilitate the comparison, we first reformulate our upper bound in terms of the empirical probability of inappropriateness $\hat{\risk}\big(\algo(\bS)\big)$ (empirical risk). Start by noticing that $|\tilde{\co}(\bS)|$ in Theorem \ref{theorem above-learning-2} can be bounded as follows: $|\tilde{\co}(\bS)| \leq \hat{\risk}\big(\algo(\bS)\big) N + |\co(\bS)|$ (strict inequality occurs when one or more examples $\bz_i$ are simultaneously inappropriate and also contained in $\co(\bS)$ and, therefore, counted twice in the right-hand side). Then, Theorem \ref{theorem above-learning-2} in conjunction with Proposition \ref{th:bounds4asympt} give, with probability $1-\delta$ with respect to the draw of the training set, that
\begin{eqnarray}
	\lefteqn{ \risk\big(\algo(\bS)\big) } \nonumber \\
	& \leq & \hat{\risk}\big(\algo(\bS)\big) + \frac{|\co(\bS)|}{N} \nonumber \\
	& & + 2 \frac{\sqrt{\hat{\risk}\big(\algo(\bS)\big) N + |\co(\bS)|+1}}{N}\left( \sqrt{\ln\left(\hat{\risk}\big(\algo(\bS)\big) N + |\co(\bS)|+1\right)} + 4 \right) \nonumber \\
	& & + 2 \frac{ \sqrt{\hat{\risk}\big(\algo(\bS)\big) N + |\co(\bS)|+1}\sqrt{\ln \frac{1}{\delta}} }{N}  + \frac{\ln \frac{1}{\delta}}{N} \nonumber \\
	& = & \hat{\risk}\big(\algo(\bS)\big) + \frac{|\co(\bS)|}{N} + \nonumber \\
	& & \bigO{ \left(\frac{\sqrt{\hat{\risk}\big(\algo(\bS)\big)}}{\sqrt{N}} + \frac{\sqrt{|\co(\bS)|+1}}{N}\right) \left( \sqrt{\ln\left(\hat{\risk}\big(\algo(\bS)\big) N + |\co(\bS)|+1\right)} + 1 \right)} \quad \label{bigO_scenario}
\end{eqnarray}
(since our use of the notation $O(\cdot)$ is not standard, we clarify that here and in \eqref{bigO_others} $g(N,\co(\bS),\hat{\risk}(\algo(\bS))) = O(f(N,\co(\bS),\hat{\risk}(\algo(\bS))))$ means that there exist a constant $C$ and an $\bar{N}$ such that $g(N,\co(\bS),\hat{\risk}(\algo(\bS))) \leq C f(N,\co(\bS),\hat{\risk}(\algo(\bS)))$ for all $\co(\bS) \in \{0,\ldots,N\}$ and $\hat{\risk}(\algo(\bS)) \in [0,1]$ when $N \geq \bar{N}$). This last expression can be compared with the best available result in the literature given by Theorem 17 in \cite{HannekeKontorovich_2021}, which yields
\begin{equation} \label{bigO_others}
\risk\big(\algo(\bS)\big) =  \hat{\risk}\big(\algo(\bS)\big) + \bigO{ \frac{|\co(\bS)|+1}{N} +  \frac{\sqrt{\hat{\risk}\big(\algo(\bS)\big)}\sqrt{|\co(\bS)|+1}}{\sqrt{N}} }.
\end{equation}
It stands out that in \eqref{bigO_scenario} term $\sqrt{\hat{\risk}\big(\algo(\bS)\big)}$ does not multiply $\sqrt{|\co(\bS)|+1}$, as it instead does in \eqref{bigO_others}; moreover, $\sqrt{|\co(\bS)|+1}$ is divided by $N$ instead of $\sqrt{N}$. As a consequence, it is easy to show that, if, e.g., $\hat{\risk}\big(\algo(\bS)\big)$ is replaced by a constant (so as to accommodate typical non-consistent frameworks), then the rate provided by \eqref{bigO_scenario} outdoes that of \eqref{bigO_others} whenever $|\co(\bS)|$ grows sub-linearly and faster than $\ln(N)$ (when $|\co(\bS)|$ is slower than $\ln(N)$, the dominant term in \eqref{bigO_scenario} is of the type $O(\sqrt{\ln(N)} / \sqrt{N})$, whereas in \eqref{bigO_others} it is of the type $O(\sqrt{|\co(\bS)|} / \sqrt{N})$). For instance, replacing $|\co(\bS)|$ with $\sqrt{N}$ gives the rate $\sqrt{\ln(N)} / \sqrt{N}$ in \eqref{bigO_scenario} and the rate $1 / N^{\frac{1}{4}}$ in \eqref{bigO_others}.\footnote{Interestingly enough, both these rates violate the lower bound established in \cite{HannekeKontorovich:2019b} for generic \emph{non-preferent} compression schemes, which shows that the property of preference is strictly necessary to establish accelerated convergence rates for the excess risk as discussed here.} \\

To close, we finally mention an interesting implication of our result that has been kindly suggested to us by an anonymous reviewer. Suppose that in a binary classification problem a hypothesis is selected from a class $\calH$ via a compression scheme that minimizes the empirical risk (such compression scheme is named ``\emph{agnostic sample compression scheme for $\calH$}'' in \citealp{DavidMoran_etal:2016b}, Section 2; see also \citealp{DavidMoran_etal:2016a}), and that this compression scheme is also \emph{preferent}. Then, \eqref{bigO_scenario} and Lemma 3.2 in \cite{DavidMoran_etal:2016b}  give (with high probability with respect to the draw of the training set) that 
\begin{equation} \label{eq:implication_scen}
	\risk\big(\algo(\bS)\big) =  \inf_{h \in \calH} \risk\big(h\big) + \frac{|\co(\bS)|}{N} + \bigO{ \frac{\sqrt{\ln(N)}}{\sqrt{N}} }.
\end{equation}
On the other hand, Theorem 5.2 in \cite{AnthonyBartlett} establishes a bound to the rate at which any hypotheses class $\calH$ of Vapnik-Chernovenkis dimension $d$ can be learned:
\begin{equation} \label{eq:necessary_learning_rate}
	\risk\big(\algo(\bS)\big) -  \inf_{h \in \calH} \risk\big(h\big) \geq \sqrt{\frac{d}{320 \cdot N}}.
\end{equation}
Considering classes $\calH$ whose Vapnik-Chernovenkis dimension increases with $N$ more than $\ln(N)$, results \eqref{eq:implication_scen} and \eqref{eq:necessary_learning_rate} imply that these classes cannot admit \emph{preferent} ``\emph{agnostic sample compression schemes for $\calH$}'' of a size that increases at a rate less than $\sqrt{d}\sqrt{N}$. This result is in contrast with the case of \emph{non-preferent} ``\emph{agnostic sample compression scheme for $\calH$}'', in which context \cite{DavidMoran_etal:2016b} shows that schemes of smaller cardinality can be found.

\section{Application to known learning schemes}
\label{section-application}

The theory developed in Section~\ref{section-learning} is here applied to well-known learning techniques: some algorithms within the family of support vector methods and then a more recent classification technique called Guaranteed Error Machine (GEM). In all these cases, Theorem~\ref{theorem above-learning} applies without restrictions, while the use of Theorem~\ref{theorem above below-learning} requires some conditions on the probability distribution of the examples. The content of this section is also meant to illustrate the flexibility and usefulness of the theory of this paper and, in this light, additional schemes that are amenable to be analyzed within the framework of Section~\ref{section-learning} are hinted upon at the end of the section.

\subsection{Support Vector methods} \label{section-SV}

\subsubsection{Support Vector Machines} \label{section-SVM}

In supervised binary classification, an example $z$ is a pair $z := (x,y)$, where $x \in \calX$ (think of $\calX$ as a generic space without any specific structure) is an ``instance'' and $y \in \{-1,1\}$ is a ``label''. A hypothesis, called a ``binary classifier'', is a map $h: \calX \to \{-1,1\}$. We let $\loss(h,(x,y)) = \One{y \neq h(x)}$, so that the loss function equals $0$ when $y=h(x)$ (correct classification) and $1$ when $y \neq h(x)$ (misclassification). \\

To add flexibility, support vector methods are often got to operate in a feature space. Let $\fmap:\calX \to \calH$ be a ``feature map'' from $\calX$ into a Hilbert space $\calH$ equipped with an inner product $\langle \cdot,\cdot \rangle$. Support Vector Machine (SVM, see \citealp{CortesVapnik1995,SchSmola_BOOK}) is a learning algorithm $\algo_{\SVC}$ that maps a training set $S = \ms((x_1,y_1),\ldots,(x_n,y_n))$ into a binary classifier according to the formula 
$$
\algo_{\SVC}(S)(x) = 
\begin{cases}
	1, & \text{if } \langle w^\ast(S),\fmap(x) \rangle + b^\ast(S) \geq 0 \\
	-1, & \text{if } \langle w^\ast(S),\fmap(x) \rangle + b^\ast(S) < 0, 
\end{cases} 
$$ 
where $x$ is a generic instance and $(w^\ast(S),b^\ast(S))$ (along with the auxiliary variables $\xi_i^\ast(S)$ that are used to relax the constraint of exact classification of all the examples in the training set) is the solution to the optimization program 
\begin{eqnarray} 
\label{eq:svc_pb} 	
\calP_{\SVC}(S): & \displaystyle \min_{w \in \calH, b \in \R \atop \xi_i \geq 0, i=1,\ldots,n} & \quad  \displaystyle \| w \|^2 + \rho \sum_{i=1}^n \xi_i \\
& \displaystyle \textrm{\rm subject to:} & \displaystyle\quad 1 - y_i (\langle w,\fmap(x_i) \rangle + b) \leq \xi_i, \ \ i = 1, \ldots,n. \nonumber 
\end{eqnarray}
As shown in Theorem 2 in \cite{BurgesCrisp:99}, \eqref{eq:svc_pb} always admits a minimizer and, moreover, $w^\ast(S)$ is unique; however, $b^\ast(S)$ (and $\xi^\ast_i(S)$) need not be unique. When $b^\ast(S)$ is not unique, we assume that the tie is broken by selecting the value of $b$ that minimizes $|b|$.\footnote{This certainly breaks the tie because, if the smallest absolute value were achieved by two values for $b^\ast$, say $b^\ast = \pm \bar{b}$, corresponding to the solutions $(w^\ast,\bar{b},\xi_{i,1}^\ast)$ and $(w^\ast,-\bar{b},\xi_{i,2}^\ast)$ (recall that $w^\ast$ must be the same at optimum), then the optimality of these two solutions would imply that $\sum_{i=1}^n \xi_{i,1}^\ast = \sum_{i=1}^n \xi_{i,2}^\ast$ and therefore the solution half way between $(w^\ast,\bar{b},\xi_{i,1}^\ast)$ and $(w^\ast,-\bar{b},\xi_{i,2}^\ast)$, i.e., $(w^\ast,0,0.5 \cdot \xi_{i,1}^\ast + 0.5 \cdot \xi_{i,2}^\ast)$, would be feasible thanks to convexity, it would achieve the same cost as the other two solutions, but it would be preferred because it carries a smaller value of $|b|$.} Note that once $w^\ast(S)$ and $b^\ast(S)$ are uniquely determined, then also the optimal values $\xi^\ast_i(S)$ remain univocally identified. \\ 

As is well known, see e.g. \cite{SchSmola_BOOK}, the feature map $\fmap(\cdot)$ and the inner product $\langle \cdot,\cdot \rangle$ need not be explicitly assigned to solve \eqref{eq:svc_pb}. The reason is that the determination of the solution to \eqref{eq:svc_pb} (as well as the evaluation of $\algo_{\SVC}(S)(x)$) only involves the computation of inner products of the type $\langle \fmap(x_i),\fmap(x_j) \rangle$ and $\langle \fmap(x_i),\fmap(x) \rangle$. These inner products can indeed be directly obtained from a kernel $k(\cdot,\cdot)$ ($k(\cdot,\cdot)$ is a function $\calX \times \calX \to \R$ that satisfies suitable conditions of positive definiteness, see \citealp{SchSmola_BOOK}) and the theory of Reproducing Kernel Hilbert Spaces ensures that any choice of $k(\cdot,\cdot)$  always corresponds to allocate a suitable pair $\fmap(\cdot)$, $\langle \cdot,\cdot \rangle$ so that $k(\cdot,\cdot) = \langle \fmap(\cdot),\fmap(\cdot) \rangle$ (this is the so-called ``kernel trick''). We also note that the solution to \eqref{eq:svc_pb} is typically obtained by solving a program that is the dual of \eqref{eq:svc_pb}. However, although important for the practice of SVM, all these remarks are immaterial for the discussion that follows. \\

To cast SVM within the framework of the present paper, introduce the following compression function $\co_{\SVC}$. First, endow $\calX$ with an arbitrary total ordering (to be used below in point (ii)). For any multiset $S$, define $\co_{\SVC}(S) = S_+ \cup S_0$ where:
\begin{itemize}
	\item[(i)] $S_+$ is the sub-multiset of all the examples (repeated as many times as they appear in $S$) for which $1 - y_i (\langle w^\ast(S),\fmap(x_i) \rangle + b^\ast(S)) > 0$ (note that this is equivalent to the condition $\xi^\ast_i(S) > 0$);
	\item[(ii)] consider the sub-multisets $\tilde{S}_0$ of smallest cardinality for which: 
	\begin{itemize}
		\item[(a)] $1 - y_i (\langle w^\ast(S),\fmap(x_i) \rangle + b^\ast(S)) = 0$; and,
		\item[(b)] the couple $(w^\ast(S_+ \cup \tilde{S}_0),b^\ast(S_+ \cup \tilde{S}_0))$ given by \eqref{eq:svc_pb} with $S_+ \cup \tilde{S}_0$ in place of the original $S$ is the same as $(w^\ast(S),b^\ast(S))$.\footnote{The sub-multiset of $S$ formed by all examples that satisfy relation $1 - y_i (\langle w^\ast(S),\fmap(x_i) \rangle + b^\ast(S)) \geq 0$ certainly gives the same couple $(w^\ast(S),b^\ast(S))$ obtained when \eqref{eq:svc_pb} is applied to the whole multiset $S$, hence a multiset of smallest cardinality satisfying (a) and (b) certainly exists.}
	\end{itemize}
	Among the multisets $\tilde{S}_0$, single out $S_0$ by using the ordering on $\calX$: for each candidate multiset $\tilde{S}_0$, identify the element with smallest instance and pick the multiset that exhibits the smallest among all; if a tie remains, move on to compare the second smallest and so on until $S_0$ is uniquely determined. 
\end{itemize}

We want to apply Theorem \ref{theorem above-learning} to SVM. To this purpose, we start by verifying that $\co_{\SVC}$ satisfies the \emph{preference} Property \ref{preference}. \\
\\
$\diamond$ {\it Preference.} We apply Lemma \ref{lemma_fund}, which requires to show that, for every multisets $S$ and $S'$ such that $\co_{\SVC}(S) \subseteq S' \subseteq S$, it holds that $\co_{\SVC}(S') = \co_{\SVC}(S)$. \\
First, we show that 
\begin{equation}
\label{S-S'}
(w^\ast(S'),b^\ast(S')) = (w^\ast(S),b^\ast(S)). 
\end{equation}
For the sake of contradiction, suppose that \eqref{S-S'} does not hold: 
\begin{equation}
\label{S-S'-contradiction}
(w^\ast(S'),b^\ast(S')) \neq (w^\ast(S),b^\ast(S)). 
\end{equation}
Note that the value achieved by $(w^\ast(S'),b^\ast(S'))$ for the problem that only contains the constraints corresponding to the examples in $\co_{\SVC}(S)$\footnote{This amounts to substitute $(w^\ast(S'),b^\ast(S'))$ in the problem of the type \eqref{eq:svc_pb} where only the constraints corresponding to the examples in $\co_{\SVC}(S)$ are enforced, and then optimize with respect to the variables~$\xi_i$.} cannot be worse than the value achieved by $(w^\ast(S'),b^\ast(S'))$ for the problem containing the constraints associated to $S'$ (because the latter is more constrained than the former); moreover, the value achieved by $(w^\ast(S),b^\ast(S))$ for the problem containing only the constraints associated to $\co_{\SVC}(S)$ is equal to the value achieved by $(w^\ast(S),b^\ast(S))$ for the problem containing the constraints associated to $S'$ (because the examples in $S'$ that are not in $\co_{\SVC}(S)$ corresponds to $\xi_i^\ast(S)$ whose  value is zero). From \eqref{S-S'-contradiction}, then, one concludes that $(w^\ast(S'),b^\ast(S'))$ must be preferred to $(w^\ast(S),b^\ast(S))$ for the problem containing only the constraints associated to $\co_{\SVC}(S)$. This, however, leads to a contradiction because, by construction $(w^\ast(\co_{\SVC}(S)),b^\ast(\co_{\SVC}(S))) = (w^\ast(S),b^\ast(S))$. Hence, \eqref{S-S'} remains proven. \\
Consider now $\co_{\SVC}(S') = S'_+ \cup S'_0$, where $S'_+$ and $S'_0$ are obtained from (i) and (ii) applied to $S'$. Since $S' \supseteq \co_{\SVC}(S)$,  $S'$ contains all the examples in $S$ for which $1 - y_i (\langle w^\ast(S),\fmap(x_i) \rangle + b^\ast(S)) > 0$ and, in view of \eqref{S-S'}, these examples are also all those in $S'$ for which $1 - y_i (\langle w^\ast(S'),\fmap(x_i) \rangle + b^\ast(S')) > 0$. Thus, $S'_+ = S_+$. The fact that $S'_0 = S_0$ follows instead from observing that $S_0$ is the preferred selection according to the ordering procedure in (ii) to recover $(w^\ast(S),b^\ast(S))$ and, since $(w^\ast(S'),b^\ast(S')) = (w^\ast(S),b^\ast(S))$ and $S_0$ is available as a sub-multiset of $S'$, this same $S_0$ is selected when (ii) is applied to $S'$, leading to $S'_0 = S_0$. This establishes the validity of Lemma \ref{lemma_fund} and closes the argument. \qed \\

Next, we verify the \emph{coherence -- part I} Property \ref{mis_implies_change}. \\
\\
$\diamond$ {\it Coherence -- part I.}
Notice first that the very definition of $\co_{\SVC}$ implies that $\algo_{\SVC}(\co_{\SVC}(S)) \break = \algo_{\SVC}(S)$, i.e., $\algo_{\SVC}$ itself acts as a reconstruction function. Moreover, if we have that $\loss(\algo_{\SVC}(S),(x_i, y_i)) = 1$ for an example $(x_i,y_i)$ in $S$, then it must be that $y_i (\langle w^\ast(S),\fmap(x_i) \rangle + b^\ast(S)) \leq 0$ (because $y_i$ has sign opposite to that of $\langle w^\ast(S),\fmap(x_i) \rangle + b^\ast(S)$). This implies that $1 - y_i (\langle w^\ast(S),\fmap(x_i) \rangle + b^\ast(S)) > 0$, from which the \emph{inclusion} Property \ref{inclusion} follows because $\co_{\SVC}(S)$ includes all examples for which this latter inequality holds true. Based on these results, the \emph{coherence -- part I} Property \ref{mis_implies_change} follows from Lemma \ref{lemma-violation-gives-change}. \qed \\

Having established the \emph{preference} and the \emph{coherence - part I} properties, the following theorem follows as a corollary of Theorem \ref{theorem above-learning}. 
\begin{theorem}{\bf (Risk of SVM)}
\label{theorem-risk-SVM}
For any $\delta \in (0,1)$, it holds that
\[
\Pr\Big\{ \risk\big(\algo_{\SVC}(\bS)\big) > \eps_\bk \Big\} \le \delta,
\]
where $\bk = |\co_{\SVC}(\bS)|$ and $\eps_k$ is given in \eqref{epsilonk}. \qed
\end{theorem}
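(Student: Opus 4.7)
The plan is to recognize that this theorem is essentially a direct corollary of Theorem \ref{theorem above-learning} applied to the pair $(\algo_{\SVC},\co_{\SVC})$, once the hypotheses of that theorem have been secured. The heavy lifting has already been carried out in the discussion immediately preceding the theorem statement: the \emph{preference} Property \ref{preference} for $\co_{\SVC}$ was established by invoking Lemma \ref{lemma_fund} and arguing via the uniqueness of $(w^\ast(S),b^\ast(S))$ on sub-multisets $S'$ with $\co_{\SVC}(S)\subseteq S'\subseteq S$; and the \emph{coherence -- part I} Property \ref{mis_implies_change} was obtained by first observing that $\algo_{\SVC}$ itself plays the role of a reconstruction function for $\co_{\SVC}$, then verifying the \emph{inclusion} Property \ref{inclusion} (any misclassified training example necessarily has $1-y_i(\langle w^\ast(S),\fmap(x_i)\rangle+b^\ast(S))>0$ and therefore belongs to $S_+\subseteq \co_{\SVC}(S)$), and finally applying Lemma \ref{lemma-violation-gives-change}.

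Given these verifications, the proof reduces to a single line: with $\algo=\algo_{\SVC}$ and $\co=\co_{\SVC}$, both hypotheses of Theorem \ref{theorem above-learning} hold, so the conclusion of that theorem gives, for every $\delta\in(0,1)$,
\[
\Pr\!\left\{\risk\bigl(\algo_{\SVC}(\bz_1,\ldots,\bz_N)\bigr)>\eps_{\bk}\right\}\le\delta,
\]
with $\bk=|\co_{\SVC}(\bz_1,\ldots,\bz_N)|$ and $\eps_k$ as defined in \eqref{epsilonk}. Rewriting $(\bz_1,\ldots,\bz_N)$ as $\bS$ in the notation of this section yields the statement verbatim.

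There is no genuine obstacle at this point, since the two properties feeding Theorem \ref{theorem above-learning} have been proven just before the statement. The only thing I would be careful about in writing the proof is to flag explicitly that the \emph{coherence -- part I} property is secured via the intermediate \emph{inclusion} property together with $\algo_{\SVC}$ being its own reconstruction function (so Lemma \ref{lemma-violation-gives-change} is legitimately applicable), rather than being checked directly; aside from that bookkeeping, the theorem is an immediate specialization. No additional assumption on the distribution of $\bz$ is needed, consistently with the unrestricted applicability noted in Section \ref{section-application}.
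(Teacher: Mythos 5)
Your proposal is correct and matches the paper's proof exactly: the paper also verifies the \emph{preference} property and the \emph{coherence -- part I} property for $(\algo_{\SVC},\co_{\SVC})$ in the discussion preceding the theorem (the latter via the \emph{inclusion} property and Lemma~\ref{lemma-violation-gives-change}, using that $\algo_{\SVC}$ is its own reconstruction function), and then states the result as an immediate corollary of Theorem~\ref{theorem above-learning}. Nothing further is needed.
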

We are instead not in a position to establish lower bounds for $\risk\big(\algo_{\SVC}(\bS)\big)$. The reason is that adding a new example $(x_{n+1},y_{n+1})$ for which $1 - y_{n+1} (\langle w^\ast(S),\fmap(x_{n+1}) \rangle + b^\ast(S)) > 0$ changes the compression (refer to (i) in the definition of $\co_{\SVC}$), but this does not exclude that $y_{n+1} (\langle w^\ast(S),\fmap(x_{n+1}) \rangle + b^\ast(S)) > 0$, in which case the new example is not misclassified. This fact prevents the \emph{coherence -- part II} Property \ref{chance implies mis} from being satisfied. 

\begin{remark}{\bf (A computational aspect)} \label{rmk:SVC_computational_issues}
To evaluate an upper bound to the risk according to Theorem \ref{theorem-risk-SVM}, one needs to compute $|\co_{\SVC}(S)|$. Computing the cardinality of $S_+$ is easy; determining the cardinality of $S_0$, however, is more computationally demanding. On the other hand, $\eps_k$ is increasing with $k$ so that a valid result can be easily found by overestimating $|\co_{\SVC}(S)|$ with the cardinality of the set of examples for which $1 - y_i (\langle w^\ast(S),\fmap(x_i) \rangle + b^\ast(S)) \geq 0$. Heuristically, this evaluation often turns out to be sharp. \qed

\end{remark} 

\subsubsection{Support Vector Regression} \label{section-SVR}

In regression problems, an example $z$ is a pair $z:=(x,y)$ with $x \in \calX$, a generic space, and $y \in \R$. A hypothesis $h$ is called a ``predictor'' and it is a map from $\calX$ to $\R$. As loss function, we take 
$$
\loss(h,(x,y)) = \begin{cases}
	1, & \text{if } | y - h(x)| > t \\
	0, & \text{if } | y - h(x)| \leq t,
\end{cases}
$$
where $t$ is the so-called ``prediction tolerance''. Let $\fmap:\calX \to \calH$ be a feature map from $\calX$ to a Hilbert space $\calH$ endowed with inner product $\langle \cdot,\cdot \rangle$. Support Vector Regression (SVR, see \citealp{SmolaScholkopf2004}) is a learning algorithm $\algo_{\SVR}(S)$ that maps any training set $S = \ms((x_1,y_1),\ldots,(x_n,y_n))$ into the predictor 
$$
\algo_{\SVR}(S)(x) = \langle w^\ast(S),\fmap(x) \rangle + b^\ast(S)
$$
where $w^\ast(S)$ and $b^\ast(S)$ (along with $\xi_i^\ast(S)$) are the solution to the program 
\begin{eqnarray} 
\label{eq:svr_pb} 	
\calP_{\SVR}(S): & \displaystyle \min_{w \in \calH, b \in \R \atop \xi_i \geq 0, i=1,\ldots,n} & \displaystyle \quad \| w \|^2 + \rho \sum_{i=1}^n \xi_i \nonumber \\
& \displaystyle \textrm{\rm subject to:} & \displaystyle \quad | y_i - \langle w,\fmap(x_i) \rangle - b | \leq t + \xi_i, \ \ i = 1, \ldots,n. \nonumber 
\end{eqnarray}
Similarly to SVM, a minimizer certainly exists and $w^\ast(S)$ is also unique, but $b^\ast(S)$ may not be, see \cite{BurgesCrisp:99}. In the latter case, the tie is broken by choosing the minimizer with the smallest value of $|b|$. After that $w^\ast(S)$ and $b^\ast(S)$ have been made unique, also the values of $\xi_i^\ast(S)$ remain univocally determined. The kernel trick described for SVM applies here as well, so that $\fmap(\cdot)$ and $\langle \cdot,\cdot \rangle$ need not be specified explicitly and can be assigned via a kernel function. \\

Our definition of a compression function $\co_{\SVR}$ closely resembles that for SVM. For any multiset of examples $S$, define $\co_{\SVR}(S) = S_+ \cup S_0$ where:
\begin{itemize}
	\item[(i)] $S_+$ is the multiset of all the examples (repeated as many times as they appear in $S$) for which $| y_i - \langle w^\ast(S),\fmap(x_i) \rangle - b^\ast(S) | > t$ (note that this is equivalent to the condition $\xi^\ast_i(S) > 0$);
	\item[(ii)] $S_0$ is the smallest sub-multiset only containing examples that satisfy condition $| y_i - \langle w^\ast(S),\fmap(x_i) \rangle - b^\ast(S) | = t$ for which $w^\ast(S_+ \cup S_0) = w^\ast(S)$ and $b^\ast(S_+ \cup S_0) = b^\ast(S)$. In complete analogy with SVM, $S_0$ as defined before may not be unique, in which case an $S_0$ is singled out by a total ordering on $\calX$. 
\end{itemize}

The proof that $\co_{\SVR}$ satisfies the \emph{preference} Property \ref{preference} and that $(\algo_{\SVR},\co_{\SVR})$ satisfies the \emph{coherence -- part I} Property \ref{mis_implies_change} follows the same path, \emph{mutatis mutandis}, as for SVM and is therefore omitted. This gives the following theorem, obtained as a direct consequence of Theorem \ref{theorem above-learning}. 
\begin{theorem}{\bf (Risk of SVR)}
	\label{theorem-risk-SVR}
	For any $\delta \in (0,1)$, it holds that
	\[
	\Pr\Big\{ \risk\big(\algo_{\SVR}(\bS)\big) > \eps_\bk \Big\} \le \delta,
	\]
	where $\bk = |\co_{\SVR}(\bS)|$ and $\eps_k$ is given in \eqref{epsilonk}. \qed
\end{theorem}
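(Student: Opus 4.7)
The plan is to realize Theorem \ref{theorem-risk-SVR} as a direct corollary of Theorem \ref{theorem above-learning} applied to the pair $(\algo_{\SVR},\co_{\SVR})$. To invoke that theorem I must verify the \emph{preference} Property \ref{preference} for $\co_{\SVR}$ and the \emph{coherence -- part I} Property \ref{mis_implies_change} for the pair; the entire proof is essentially a \emph{mutatis mutandis} transcription of the SVM argument already carried out in Section \ref{section-SVM}, with the affine margin condition $1-y_i(\langle w,\fmap(x_i)\rangle+b)\le\xi_i$ replaced by the tube condition $|y_i-\langle w,\fmap(x_i)\rangle-b|\le t+\xi_i$.

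For \emph{preference}, I would use Lemma \ref{lemma_fund}: given multisets $S,S'$ with $\co_{\SVR}(S)\subseteq S'\subseteq S$, I have to show $\co_{\SVR}(S')=\co_{\SVR}(S)$. The first step is to establish $(w^*(S'),b^*(S'))=(w^*(S),b^*(S))$ by a sandwiching argument: on one hand the examples in $S'\setminus\co_{\SVR}(S)$ correspond to $\xi_i^*(S)=0$, so the cost achieved by $(w^*(S),b^*(S))$ on $\calP_{\SVR}(\co_{\SVR}(S))$ coincides with the one achieved on $\calP_{\SVR}(S')$; on the other hand, adding constraints can only increase the cost, so the cost achieved by $(w^*(S'),b^*(S'))$ on $\calP_{\SVR}(\co_{\SVR}(S))$ cannot exceed that on $\calP_{\SVR}(S')$. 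If the two optimizers differed, $(w^*(S'),b^*(S'))$ would then strictly improve on $\calP_{\SVR}(\co_{\SVR}(S))$, contradicting the defining identity $(w^*(\co_{\SVR}(S)),b^*(\co_{\SVR}(S)))=(w^*(S),b^*(S))$ built into step (ii) of the definition of $\co_{\SVR}$. Once $(w^*,b^*)$ is shown to be invariant, $S'_+=S_+$ follows because item (i) only depends on $(w^*,b^*)$, and $S'_0=S_0$ follows because $S_0\subseteq\co_{\SVR}(S)\subseteq S'$, so $S_0$ remains a candidate of smallest cardinality for $S'$ and the identical tie-break on $\calX$ returns it again.

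For \emph{coherence -- part I}, I would observe that, since $(w^*(\co_{\SVR}(S)),b^*(\co_{\SVR}(S)))=(w^*(S),b^*(S))$ by construction, $\algo_{\SVR}$ itself serves as a reconstruction function for $(\algo_{\SVR},\co_{\SVR})$. The \emph{inclusion} Property \ref{inclusion} is then immediate: if $\loss(\algo_{\SVR}(S),(x_i,y_i))=1$ for some example in $S$, then by definition of $\loss$ we have $|y_i-\langle w^*(S),\fmap(x_i)\rangle-b^*(S)|>t$, i.e.\ $\xi_i^*(S)>0$, so the example is placed (with its full multiplicity) into $S_+\subseteq\co_{\SVR}(S)$. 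Lemma \ref{lemma-violation-gives-change} then yields \emph{coherence -- part I}, and Theorem \ref{theorem above-learning} delivers the desired bound with $\bk=|\co_{\SVR}(\bS)|$.

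The one delicate point, and the main obstacle to making the argument airtight, is the handling of the tie-breaking ordering used to single out $S_0$: I need to check that the identical total ordering on $\calX$ applied to $S$ and to the intermediate $S'$ picks the same representative, which relies on the fact that the candidate sub-multisets for $S'$ are exactly those for $S$ intersected with $S'$ together with the invariance of $(w^*,b^*)$ established above. The rest (uniqueness of $w^*$, the $|b|$-minimization tie-break, the resulting uniqueness of the $\xi_i^*$) transfers verbatim from the SVM discussion and presents no additional difficulty.
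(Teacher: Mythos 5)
Your plan is correct and follows exactly the route the paper takes: the paper's own proof of Theorem \ref{theorem-risk-SVR} is literally stated as a \emph{mutatis mutandis} transcription of the SVM argument (preference via Lemma \ref{lemma_fund} and the cost-sandwiching argument establishing invariance of $(w^\ast,b^\ast)$, then \emph{coherence -- part I} via \emph{inclusion} plus Lemma \ref{lemma-violation-gives-change} with $\algo_{\SVR}$ serving as its own reconstruction function, and finally an invocation of Theorem \ref{theorem above-learning}). The tie-break subtlety you flag is handled the same way — and with the same level of explicitness — in the paper's SVM discussion, so your sketch is a faithful instantiation of what the paper leaves implicit.
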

Unlike SVM, for SVR lower and upper bounds for the risk are established under an additional, mild, distributional assumption. 
\begin{assumption} 
\label{asmpt:SVR_density}
The regular conditional distribution of $\by$ given $\bx$ has no concentrated mass almost surely. \qed
\end{assumption}
To establish the lower and upper bounds, we resort to Theorem \ref{theorem above below-learning}. Preliminarily, we show the validity of the assumptions of this theorem. \\ 
\\
$\diamond$ {\it Non-associativity.} Consider any training set $S = \ms((x_1,y_1),\ldots,(x_n,y_n))$ and an additional multiset of examples $S' = \ms((x_{n+1},y_{n+1}),\ldots,(x_{n+p},y_{n+p}))$ such that $\co(S,(x_{n+i}, \break y_{n+i})) = \co(S)$ for all $i \in \{1,\ldots,p\}$. Further, assume that $|y_{n+i} - \algo_{\SVR}(S)(x_{n+i})| \neq t$ for all $i \in \{1,\ldots,p\}$. Then, it must be that 
$| y_{n+i} - \algo_{\SVR}(S)(x_{n+i})| < t$ for all $i \in \{1,\ldots,p\}$. Indeed, suppose by contradiction the opposite: $| y_{n+i} - \algo_{\SVR}(S)(x_{n+i})| > t$ for some $i$. Then, if  $\algo_{\SVR}(S,(x_{n+i},y_{n+i})) = \algo_{\SVR}(S)$, then $| y_{n+i} - \algo_{\SVR}(S,(x_{n+i},y_{n+i}))(x_{n+i})| > t$ and $(x_{n+i},y_{n+i})$ is counted in $\co_{\SVR}(S,(x_{n+i},y_{n+i}))$ leading to $\co_{\SVR}(S,(x_{n+i},y_{n+i})) \neq \co_{\SVR}(S)$; if instead $\algo_{\SVR}(S,(x_{n+i},y_{n+i})) \neq \algo_{\SVR}(S)$, then $\algo_{\SVR}(\co_{\SVR}(S,(x_{n+i},y_{n+i}))) \neq \algo_{\SVR}(\co_{\SVR}(S))$, which means that $\co_{\SVR}(S,(x_{n+i},y_{n+i}))$ cannot be the same as $\co_{\SVR}(S)$. Thus, it remains proven that $| y_{n+i} - \algo_{\SVR}(S)(x_{n+i})| < t$ for all $i \in \{1,\ldots,p\}$ and this yields immediately that $\algo_{\SVR}(S \cup S') = \algo_{\SVR}(S)$ and that $\co_{\SVR}(S \cup S') = \co_{\SVR}(S)$. This, along with the fact that $|\by_{n+i} - \algo_{\SVR}(\bS)(\bx_{n+i})| \neq t$ for all $i \in \{1,\ldots,p\}$ holds with probability $1$ in view of Assumption \ref{asmpt:SVR_density}, gives the \emph{non-associativity} property. \qed \\
\\
$\diamond$ {\it Non-concentrated mass.} This is obvious in view of Assumption \ref{asmpt:SVR_density}. \qed \\
\\
$\diamond$ {\it Coherence -- part II.} The proof is by contrapositive: letting $\bS = \ms((\bx_1,\by_1),\ldots,(\bx_n,\by_n))$ we show that
\begin{equation} \label{coherence2-contrapositive}
\Pr \Big( \{ \loss(\algo_{\SVR}(\bS),(\bx_{n+1},\by_{n+1})) = 0 \} \setminus \{ \co_{\SVR}(\co_{\SVR}(\bS),(\bx_{n+1},\by_{n+1})) = \co_{\SVR}(\bS) \} \Big) = 0.
\end{equation}
Assumption \ref{asmpt:SVR_density} implies that the case $|y_{n+1} - \algo_{\SVR}(S)(x_{n+1})| = t$ can be disregarded because it correpsonds to an event that has probability zero. Moreover, when $| y_{n+1} - \algo_{\SVR}( S)( x_{n+1})| > t$, we have that $\loss(\algo_{\SVR}( S),( x_{n+1}, y_{n+1})) = 1$, so that this case can be disregarded too. When instead $| y_{n+1} - \algo_{\SVR}( S)( x_{n+1})| < t$, it holds that $\loss\big(\algo_{\SVR}( S),(x_{n+1}, \break y_{n+1})\big) = 0$. In this case, 
$$
| y_{n+1} - \algo_{\SVR}(\co_{\SVR}( S))( x_{n+1})| = | y_{n+1} - \algo_{\SVR}( S)( x_{n+1})| < t
$$
yields $\algo_{\SVR}(\co_{\SVR}( S),( x_{n+1}, y_{n+1})) = \algo_{\SVR}( S)$, from which 
$$
| y_{n+1} - \algo_{\SVR}(\co_{\SVR}( S),( x_{n+1}, y_{n+1}))( x_{n+1})| < t.
$$
Hence, $( x_{n+1}, y_{n+1})$ is not in $\co_{\SVR}(\co_{\SVR}( S),( x_{n+1}, y_{n+1}))$, so that, owing to the \emph{preference} Property \ref{preference}, it must be that  $\co_{\SVR}(\co_{\SVR}( S),( x_{n+1}, y_{n+1})) = \co_{\SVR}(S)$. This shows the validity of \eqref{coherence2-contrapositive}.  \qed \\

Using Theorem \ref{theorem above below-learning}, we now have the following result. 
\begin{theorem}{\bf (Risk of SVR - bounds from below and from above)}
	\label{theorem-risk-SVM-ba}
	Under Assumption \ref{asmpt:SVR_density}, for any $\delta \in (0,1)$, it holds that
	$$
	\Pr\Big\{ \underline{\eps}_\bk \leq \risk\big(\algo_{\SVR}(\bS)\big) \leq \overline{\eps}_\bk  \Big\} \geq 1-\delta,
	$$
	where $\bk = |\co_{\SVR}(\bS)|$ and $\underline{\eps}_k$, $\overline{\eps}_k$ are given respectively in \eqref{underline_epsilonk}, \eqref{overline_epsilonk}. \qed 
\end{theorem}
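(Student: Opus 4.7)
The plan is to obtain the claim as a direct corollary of Theorem \ref{theorem above below-learning}, applied to the learning algorithm $\algo_{\SVR}$ and the compression function $\co_{\SVR}$. Since Theorem \ref{theorem above below-learning} delivers exactly the two-sided sandwich $\underline{\eps}_\bk \leq \risk \leq \overline{\eps}_\bk$ with confidence $1-\delta$, once we have checked that every hypothesis of that theorem is in force under Assumption \ref{asmpt:SVR_density}, the statement follows with no further work.

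The first step is to collect the \emph{preference} Property \ref{preference} for $\co_{\SVR}$ and the \emph{coherence -- part I} Property \ref{mis_implies_change} for the pair $(\algo_{\SVR},\co_{\SVR})$; both are in place because the author has explicitly invoked the SVM argument ``\emph{mutatis mutandis}'' for SVR in the discussion immediately preceding the theorem. The second step is to invoke the three properties whose verifications were carried out in the bulleted paragraphs just before the theorem statement: the \emph{non-associativity} Property \ref{non-associativity} (established under Assumption \ref{asmpt:SVR_density} by analyzing the residuals $|y_{n+i} - \algo_{\SVR}(S)(x_{n+i})|$ and using that the value $t$ is attained with probability zero), the \emph{non-concentrated mass} Property \ref{no-concentrated-mass} (which is immediate from Assumption \ref{asmpt:SVR_density}, since the marginal of $\bz=(\bx,\by)$ inherits non-concentration from the regular conditional of $\by$ given $\bx$), and the \emph{coherence -- part II} Property \ref{chance implies mis} (proven by the contrapositive argument that whenever $|y_{n+1}-\algo_{\SVR}(S)(x_{n+1})|<t$ no change of compression occurs, while the boundary case $|y_{n+1}-\algo_{\SVR}(S)(x_{n+1})|=t$ is again a probability-zero event).

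The final step is to plug these five verified properties into Theorem \ref{theorem above below-learning}, taking $\bk = |\co_{\SVR}(\bS)|$. The theorem then yields $\Pr\{\underline{\eps}_\bk \leq \risk(\algo_{\SVR}(\bS)) \leq \overline{\eps}_\bk\} \geq 1-\delta$, which is precisely the statement of Theorem \ref{theorem-risk-SVM-ba}.

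Since all the heavy lifting has been delegated to Theorem \ref{theorem above below-learning} and the property verifications have already been carried out in the preceding paragraphs, there is no genuine obstacle: the proof is essentially an assembly step. The only point that one might want to double-check is that Assumption \ref{asmpt:SVR_density} — which constrains the conditional distribution of $\by$ given $\bx$ — is strong enough to imply the \emph{joint} non-concentration required by Property \ref{no-concentrated-mass}; this however is immediate because $\Pr\{\bz = (x_0,y_0)\} = \E[\Pr\{\by = y_0 \mid \bx\} \One{\bx = x_0}] = 0$ for every fixed $(x_0,y_0)$, so no additional conditions on the marginal of $\bx$ are needed.
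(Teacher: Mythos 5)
Your proposal is correct and follows essentially the same route as the paper: it assembles Theorem \ref{theorem above below-learning} from the \emph{preference} and \emph{coherence -- part I} properties (carried over \emph{mutatis mutandis} from the SVM section) together with the three properties verified in the bulleted paragraphs preceding the theorem. Your explicit computation $\Pr\{\bz=(x_0,y_0)\}=\E[\Pr\{\by=y_0\mid\bx\}\One{\bx=x_0}]=0$ is a small and correct amplification of the paper's terse ``this is obvious'' for \emph{non-concentrated mass}, but the overall argument is the same.
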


\subsubsection{Other Support Vector methods}

The applicability of Theorems \ref{theorem above-learning} and \ref{theorem above below-learning} can be carried over to other Support Vector methods. SVR with Adjustable Size, \cite{ScBaSmWi:98}, requires minor modifications. Suitable but conceptually straightforward modifications of the arguments used in this section can also be applied to one-class SVM, \cite{ScWiSmSTPl:99}, and Support Vector Data Description (SVDD), \cite{TaxDuin2004} and \cite{WangChungWang2011}. In these latter two cases, the setup changes slightly since examples are unlabeled and hypotheses are regions in the set hosting the examples. To apply Theorem \ref{theorem above below-learning}, one needs here to modify Assumption \ref{asmpt:SVR_density} so as to enforce specific non-accumulation conditions for the method at hand. 

\subsection{Guaranteed Error Machine} \label{section-GEM}

The Guaranteed Error Machine (GEM) is a learning algorithm for classification that was first introduced in \cite{Campi2010} and then further developed in \cite{CareRamponiCampi2018}.\footnote{The algorithm described here is a variant of those proposed in the referenced papers.} GEM returns a ternary-valued classifier, which is also allowed to abstain from classifying in case of doubt. To be specific, letting $z = (x,y)$ with $x \in \calX$, a generic set, and $y \in \{-1,1\}$ (this is the same setup as in SVM), a hypothesis $h$ is here a map from $\calX$ to $\{-1,1,0\}$, where the value $0$ is interpreted as admission of being unable to classify. Issuing an incorrect label ($-1$ in place of $1$ or \emph{vice versa}) leads to a mistake, and the theory aims at bounding the probability for this to happen. Correspondingly, the loss function is defined as follows: 
$$
\loss(h,(x,y)) = 
\begin{cases}
	1, & \text{if } |y - h(x)| = 2 \\
	0, & \text{if } |y - h(x)| = 0 \text{ or } 1.
\end{cases}
$$
To describe the operation of GEM, start by introducing a feature map $\fmap:\calX \to \calH$, where $\calH$ is a Hilbert space with inner product $\langle \cdot,\cdot \rangle$ (as for support vector methods, $\fmap$, $\calH$, and $\langle \cdot,\cdot \rangle$ need not be explicitly given and can be implicitly defined by means of a kernel) and also assume the existence of an ordering on $\calX$ (used later to introduce a tie-break rule). GEM requires that the user chooses an integer $d \geq 1$, which specifies the maximal cardinality for the compression.\footnote{Selecting a large value for $d$ reduces the chance of abstention from classifying. When $d$ is larger than the cardinality of the training set, the set of abstention becomes empty. In other cases, the user tries to achieve a good compromise between the probability of abstention and the probability of making an error. This is not specific to GEM and applies to any technique for the construction of ternary-valued classifiers. See \cite{Campi2010} for more discussion on this point.} In loose terms, GEM operates as follows. It is assumed that one has an additional observation $(\bar{x},\bar{y})$ (besides the training set $S = \ms((x_1,y_1),\ldots,(x_n,y_n))$) that acts as initial ``center''. GEM constructs the hyper-sphere in $\calH$ around $\fmap(\bar{x})$ which is the largest possible under the condition that the hyper--sphere does not include any $\fmap(x_i)$ with label $y_i$ different from $\bar{y}$. All points inside this hyper-sphere are classified as the label $\bar{y}$, and all examples $(x_i,y_i)$ for which $\fmap(x_i)$ is inside the hyper-sphere are removed from the training set. The example that lies on the boundary of the hyper-sphere (and that has therefore prevented the hyper--sphere from further enlarging) is then appointed as the new center (in case of ties, the tie is broken by using the ordering on $\calX$) and the procedure is repeated by constructing another hyper-sphere around the new center. This time, only the region given by the difference between the newly constructed hyper-sphere and the first hyper--sphere (which has been already classified) is classified as the label of the second center. This procedure continues the same way and comes to a stop when either the whole space has been classified or the total number of centers is equal to $d$, in which case the portion of $\calX$ that has not been covered is classified as $0$. This leads to the algorithm formally described below.  
\begin{itemize}
	\item[STEP 0.] SET $q:=0$, $P:= S\cup \ms((\bar{x},\bar{y}))$, $C = \emptyset$ and $x_C = \bar{x}$, $y_C = \bar{y}$; 
	\item[STEP 1.] SET $q := q + 1$ and SOLVE problem
	\begin{eqnarray*} 
		\displaystyle \max_{r \geq 0} & & r \\
		\textrm{\rm subject to:} & & \| \fmap(x_i) - \fmap(x_C) \| \geq r, \text{ for all } (x_i,y_i) \in P \text{ such that } y_i \neq y_C. \nonumber
	\end{eqnarray*}
	Let $r^\ast$ be the optimal solution (note that $r^\ast$ can possibly be $+\infty$);
	\item[STEP 2.] FORM the region $\calR_q := \{ x \in \calX : \; \| \fmap(x) - \fmap(x_C) \| < r^\ast \}$ and LET $\ell_q := y_C$; UPDATE $P$ as follows: if $r^\ast > 0$, then remove from $P$ all the examples with $x_i \in \calR_q$; if instead $r^\ast = 0$,\footnote{$r^\ast = 0$ only happens if there are examples with different labels whose instance is $x_C$.} then remove from $P$ the example $(x_C,y_C)$; 
	\item[STEP 3.] IF $r^\ast < +\infty$, THEN 
	\begin{itemize}
		\item[3.a] SET $C := C \cup \ms((x_{i^\ast},y_{i^\ast}))$, where $(x_{i^\ast},y_{i^\ast})$ is an example in $P$ such that: a.  $\| \fmap(x_{i^\ast}) - \fmap(x_C) \| = r^\ast$; b. $y_{i^\ast} \neq y_C$; c. $x_{i^\ast}$ is smallest in the ordering of $\calX$ among all the examples satisfying a. and b.;
		\item[3.b] SET $(x_C,y_C) := (x_{i^\ast},y_{i^\ast})$;
	\end{itemize}	  
	\item[STEP  4.] IF either $|C| = d$ or $P = \emptyset$ THEN STOP and RETURN $\ell_j$, $\calR_j$, $j=1,\ldots,q$ and $C$; \\
	ELSE, GO TO 1.
\end{itemize}

The GEM predictor is defined as 
$$
\algo_{\GEM}(S)(x) = \begin{cases}
	0, & \quad \text{if } x \notin \calR_j \; \forall j=1,\ldots,q; \\
	\ell_{j^\ast} & \quad \text{otherwise, with } j^\ast = \min \big\{j \in \{1,\ldots,q\}: \;  x \in \calR_j \big\}.
\end{cases}
$$
The compression function for GEM is $\co_{\GEM}(S) = C$. \\

We next establish the \emph{preference} and \emph{coherence -- part I} properties, required to apply Theorem \ref{theorem above-learning}. \\
\\
$\diamond$  {\it Preference.} Given any multisets $S$ and $S'$ such that $\co_{\GEM}(S) \subseteq S' \subseteq S$, it is easy to verify that running STEPS 0-4 with $S'$ as input returns the same output as when these steps are run with input $S$. In particular, $\co_{\GEM}(S') = C = \co_{\GEM}(S)$ and, therefore, the \emph{preference} property follows by an application of Lemma \ref{lemma_fund}. \qed \\
\\
$\diamond$  {\it Coherence -- part I.} Since applying STEPS 0-4 to $\co_{\GEM}(S)$ returns the same output as when they are applied to $S$, $\algo_{\GEM}$ itself acts as a reconstruction function. Also, the \emph{inclusion} Property \ref{inclusion} is immediately verified (just pay a bit of care to the case in which more examples  with different labels corresponds to the same $x_C$). The \emph{coherence -- part I} Property \ref{mis_implies_change} then follows from Lemma \ref{lemma-violation-gives-change}. \qed \\
\\
Applying Theorem \ref{theorem above-learning} we now have the following result. 
\begin{theorem}{\bf (Risk of GEM)}
	\label{theorem-risk-GEM}
	For any $\delta \in (0,1)$, it holds that
	\[
	\Pr\Big\{ \risk\big(\algo_{\GEM}(\bS)\big) > \eps_\bk \Big\} \le \delta,
	\]
	where $\bk = |\co_{\GEM}(\bS)|$ and $\eps_k$ is given in \eqref{epsilonk}. \qed
\end{theorem}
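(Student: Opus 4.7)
The plan is to obtain Theorem \ref{theorem-risk-GEM} as an immediate instance of Theorem \ref{theorem above-learning}, invoked with $\algo = \algo_{\GEM}$ and $\co = \co_{\GEM}$. Its hypotheses are (i) the \emph{preference} Property \ref{preference} for $\co_{\GEM}$ and (ii) the \emph{coherence -- part I} Property \ref{mis_implies_change} for the pair $(\algo_{\GEM},\co_{\GEM})$; once both are in hand, the bound on $\risk(\algo_{\GEM}(\bS))$ reads off verbatim from Theorem \ref{theorem above-learning}.

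For (i) I would use Lemma \ref{lemma_fund}: it suffices to verify $\co_{\GEM}(S') = \co_{\GEM}(S)$ whenever $\co_{\GEM}(S) \subseteq S' \subseteq S$. The idea is that running STEPS 0--4 on $S'$ traces exactly the same iterates (centers, radii $r^*$, regions $\calR_q$, labels $\ell_q$) as running them on $S$. Indeed, the examples in $S \setminus S'$ are, by definition of $C$, those whose feature images sit strictly inside some $\calR_j$, so they never enter the STEP 1 constraints (which involve only opposite-label examples; any opposite-label example strictly inside $\calR_j$ would violate the corresponding radius constraint) and they are never eligible as STEP 3 choices. Their removal therefore cannot alter the run, giving $\co_{\GEM}(S') = \co_{\GEM}(S)$.

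For (ii), the same invariance yields $\algo_{\GEM}(\co_{\GEM}(S)) = \algo_{\GEM}(S)$, so $\algo_{\GEM}$ is its own reconstruction function, and Lemma \ref{lemma-violation-gives-change} reduces matters to the \emph{inclusion} Property \ref{inclusion}. I would argue it as follows: if $(x_i,y_i) \in S$ is misclassified by $\algo_{\GEM}(S)$, then $x_i \in \calR_{j^*}$ for some $j^*$ with $\ell_{j^*} = -y_i$; at iteration $j^*$, any opposite-label example in $P$ must satisfy $\|\fmap(x_i)-\fmap(x_C)\| \geq r^*$, so $(x_i,y_i)$ cannot still be in $P$ at that iteration. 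Tracking the only way an example can leave $P$ before its instance first enters a region of matching label, one concludes that $(x_i,y_i)$ must have been enrolled as a center at an earlier iteration, hence $(x_i,y_i) \in C$. The delicate point -- which I expect to be the sole obstacle -- is the case of coincident instances carrying opposite labels, which forces the $r^*=0$ branch of STEP 2 and requires checking that the tie-breaking on $\calX$ correctly routes each such misclassified example into $C$. Once inclusion is confirmed, coherence -- part I follows from Lemma \ref{lemma-violation-gives-change}, and Theorem \ref{theorem above-learning} closes the argument.
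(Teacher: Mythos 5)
Your plan is exactly the paper's: establish preference via Lemma \ref{lemma_fund}, establish coherence -- part I by noting that $\algo_{\GEM}$ reconstructs itself from $\co_{\GEM}$, verifying inclusion, and invoking Lemma \ref{lemma-violation-gives-change}, then read off the bound from Theorem \ref{theorem above-learning}; the paper even singles out the very same $r^\ast = 0$ coincident-instance edge case for the inclusion check. There is, however, a real gap in the justification you give for the preference step. It is not true that examples of $S \setminus S'$ ``never enter the STEP 1 constraints'' or are ``never eligible as STEP 3 choices,'' and it is not true that every example of $S \setminus C$ sits strictly inside some $\calR_j$: when the algorithm halts because $|C| = d$, examples can remain in $P$ without ever being covered by any ball. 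More to the point, an example in $S \setminus C$ can, at an iteration before its instance is absorbed, carry the label opposite to the current center's and therefore contribute a STEP 1 constraint that may even be active at the optimum, and it can satisfy conditions a.\ and b.\ of STEP 3 without being chosen. The actual reason its removal is harmless is subtler than invisibility: the STEP 3 winner at each iteration is also at distance exactly $r^\ast$ from the current center, lies in $C \subseteq S'$, and is smallest in the $\calX$ ordering among active opposite-label examples, so deleting examples of $S \setminus S'$ alters neither $r^\ast$, nor the tie-break, nor the evolution of the $P$-sets restricted to $S' \cup \ms((\bar{x},\bar{y}))$. With that repair your preference verification matches what the paper dispatches as ``easy to verify,'' and your inclusion argument for coherence -- part I is correct as written.
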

Notice also that $|\co_{\GEM}(\bS)| \leq d$ holds by construction, which implies that the bound $\risk\big(\algo_{\GEM}(\bS)\big) \leq  \eps_d$ is always correct with high confidence $1 - \delta$.\footnote{A similar result would not be possible without resorting to ternary classifiers that allow for abstention from classifying.}\\

We now turn to lower bounds to the risk, which are established by an application of Theorem \ref{theorem above below-learning}. We start by showing the validity of the \emph{non-associativity} property. \\
\\
$\diamond$ {\it Non-associativity.} Consider any training set $S = \ms((x_1,y_1),\ldots,(x_n,y_n))$ and an additional multiset of examples $S' = \ms((x_{n+1},y_{n+1}),\ldots,(x_{n+p},y_{n+p}))$. Suppose that $\co_{\GEM}(S \cup S') \neq \co_{\GEM}(S)$. For this to be, it is required that at least one of these conditions applies: (i) $\ell(\algo_{\GEM}(S),(x_{n+i},y_{n+i})) = 1$ for some $i \in \{1,\ldots,p\}$; or, (ii) one of the $(x_{n+i},y_{n+i})$, $i \in \{1,\ldots,p\}$, for which $\ell(\algo_{\GEM}(S),(x_{n+i},y_{n+i})) = 0$ lies on the boundary of a $\calR_j$ and is lower in order than the example that is chosen as center by the algorithm applied to $S$. However, take in isolation an example $(x_{n+i},y_{n+i})$ that satisfies either (i) or (ii); then, that example alone makes the compression change. This proves the \emph{non-associativity} property. \qed \\

To move on and prove the \emph{non-concentrated mass} and \emph{coherence -- part II} properties, we need a mild assumption on the distribution of examples. 
\begin{assumption}
	\label{asmpt:GEM_density}
	For any $c \in \calH$ and $\gamma \in \Real{}$, it holds that
	\begin{equation}
		\Pr \{\| \fmap({\bx}) - c \|^2 = \gamma \} = 0. \nonumber 
	\end{equation} \qed
\end{assumption}
\noindent 
$\diamond$ {\it Non-concentrated mass.} This immediately follows from Assumption \ref{asmpt:GEM_density}: if $\Pr\{ \bz = \bar{z} \} \neq 0$ for some $\bar{z} = (\bar{x},\bar{y})$, then Assumption \ref{asmpt:GEM_density} is violated by the choices $c = \fmap(\bar{x})$ and $\gamma = 0$. \qed \\
\\
$\diamond$ {\it Coherence -- part II.} In view of Assumption \ref{asmpt:GEM_density}, $\bx_{n+1}$ lies on the boundary of a region $\calR_j$ with probability zero. On the other hand, when $x_{n+1}$ is not on the boundary, a change of compression only occurs if $(x_{n+1},y_{n+1})$ is misclassified, that is, $\ell(\algo_{\GEM}(S),(x_{n+1},y_{n+1})) = 1$. This proves the \emph{coherence -- part II} property. \qed \\

The following theorem now follows from Theorem \ref{theorem above below-learning}. 
\begin{theorem}{\bf (Risk of GEM - bounds from below and from above)}
	\label{theorem-risk-GEM-ba}
	Under Assumption \ref{asmpt:GEM_density}, for any $\delta \in (0,1)$, it holds that
	$$
	\Pr\Big\{ \underline{\eps}_\bk \leq \risk\big(\algo_{\GEM}(\bS)\big) \leq \overline{\eps}_\bk  \Big\} \geq 1-\delta,
	$$
	where $\bk = |\co_{\GEM}(\bS)|$ and $\underline{\eps}_k$, $\overline{\eps}_k$ are given respectively in \eqref{underline_epsilonk}, \eqref{overline_epsilonk}. \qed 
\end{theorem}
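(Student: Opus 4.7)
My plan is to view Theorem~\ref{theorem-risk-GEM-ba} as a direct corollary of Theorem~\ref{theorem above below-learning}: all five hypotheses needed by that theorem, applied to the pair $(\algo_{\GEM},\co_{\GEM})$, have in fact been verified immediately above the statement, so the proof reduces to assembling those verifications and invoking Theorem~\ref{theorem above below-learning}.

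Concretely, I would proceed as follows. First, I would recall that the \emph{preference} Property~\ref{preference} for $\co_{\GEM}$ was established by observing that running STEPS~0--4 on any intermediate multiset $S'$ with $\co_{\GEM}(S)\subseteq S'\subseteq S$ produces the same sequence of centers, radii, and regions as running them on $S$, so that $\co_{\GEM}(S')=\co_{\GEM}(S)$, and Lemma~\ref{lemma_fund} applies. Next, I would note the \emph{coherence -- part I} Property~\ref{mis_implies_change} was already obtained via Lemma~\ref{lemma-violation-gives-change}, using that $\algo_{\GEM}$ itself is a reconstruction function for $\co_{\GEM}$ and that the \emph{inclusion} Property~\ref{inclusion} holds (the examples that GEM misclassifies are centers by construction, hence belong to $C$).

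Then I would invoke the verifications of \emph{non-associativity}, \emph{non-concentrated mass}, and \emph{coherence -- part II}. The first was shown by observing that a single added example $(x_{n+i},y_{n+i})$ is already sufficient to change the compression whenever it would be misclassified or would become a lower-ordered boundary point; hence aggregating several such examples cannot spuriously preserve the compression. The remaining two properties follow from Assumption~\ref{asmpt:GEM_density}: non-concentrated mass is immediate (pick $c=\fmap(\bar x),\gamma=0$), and coherence -- part II holds because $\bx_{n+1}$ lies on the boundary of some $\calR_j$ with probability zero, so off that null event a change of compression forces misclassification.

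With all five hypotheses of Theorem~\ref{theorem above below-learning} in hand for $(\algo_{\GEM},\co_{\GEM})$, the conclusion follows directly: with probability at least $1-\delta$,
$$
\underline{\eps}_\bk \;\leq\; \risk\bigl(\algo_{\GEM}(\bz_1,\ldots,\bz_N)\bigr) \;\leq\; \overline{\eps}_\bk,
$$
where $\bk=|\co_{\GEM}(\bS)|$ and $\underline{\eps}_k,\overline{\eps}_k$ are as in \eqref{underline_epsilonk} and \eqref{overline_epsilonk}. There is no real obstacle here since the heavy lifting has been done in proving Theorem~\ref{theorem above below-learning} and in the pre-theorem verifications; the only mild care is to ensure that the exceptional null sets from non-associativity and coherence -- part II are indeed absorbed in the $\delta$-slack of Theorem~\ref{theorem above below-learning}, which is transparent from its statement.
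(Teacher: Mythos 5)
Your proposal is correct and takes essentially the same route as the paper: the paper verifies \emph{preference}, \emph{coherence -- part I}, \emph{non-associativity}, \emph{non-concentrated mass}, and \emph{coherence -- part II} for $(\algo_{\GEM},\co_{\GEM})$ immediately before the theorem statement and then simply invokes Theorem~\ref{theorem above below-learning}. Your recap of each verification faithfully reflects what the paper argues, so the assembly is exactly the intended proof.
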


\subsection{Other learning schemes with a \emph{preferent} compression}
	
Besides Support Vector methods and GEM, other learning schemes can be studied within the framework of the present paper. We mention here just two additional examples, without working out all the details as it was done in Sections~\ref{section-SV} and~\ref{section-GEM}. \\
	
One first example is the class of methods for classification based on the nearest-neighbor (NN) algorithm. Consider for simplicity $1$-NN in a	finite Euclidean space $\calX$ and with labels generated by a target concept, see \cite{Shalev-Shwartz_Ben-David_2014}. For every training set $S = \ms((x_1,y_1),\ldots,(x_n,y_n))$ of instance/label pairs, $1$-NN relies on the Voronoi partition of the instance	domain $\calX$ induced by $S$, where cells are $C_i = \{ x \in \calX: \|x-x_i\| \leq \|x-x_j\| \; \forall j=1,\ldots,n \}$, $i=1,\ldots,n$. If one eliminates from $S$ all the examples whose associated cell is in the interior of the region labeled as the cell, then the remaining examples form a compressed multiset for which $1$-NN itself acts as a reconstruction function. It is then a simple enough task to show that such a compression function satisfies the \emph{preference} Property \ref{preference}; moreover, since the $1$-NN classifier is always consistent with $S$, Lemma \ref{lemma-violation-gives-change} allows us to conclude that the \emph{coherence -- part I} Property \ref{mis_implies_change} holds. Thereby, Theorem \ref{theorem above-learning}
can be applied to evaluate the probability of misclassification of the $1$-NN classifier. Similar arguments are expected to be applicable to more general $k$-NN schemes, even in generic (infinite dimensional) metric spaces. Further studies can possibly cover more general NN-based methods along the lines of \cite{KonSabWei2017,KonSabUrn2018,HanKonSabWei2021}. \\
	
As a second example, again in the context of classification, we would like to mention  the \emph{Total Recall} algorithm of \cite{HelSloWar_1990}, which is in use to learn nested differences of concepts from intersection-closed classes. In the setup of \cite{HelSloWar_1990}, no matter whether the depth\footnote{See \cite{HelSloWar_1990}, page	166.} of the hypothesis returned by the algorithm is arbitrary or a-priori fixed, it is fairly easy to prove that the union of the	\emph{spanning sets}\footnote{See \cite{HelSloWar_1990}, page 170.} with minimal cardinality for the multisets of examples used in the various calls to the closure learner by the Total Recall algorithm\footnote{If multiple 	spanning sets with minimal cardinality exist, then a choice is singled-out by means of any total ordering of the finite subests of $\calX$.} defines a compression function that satisfies the \emph{preference} Property \ref{preference}. By the very definition of spanning sets, we also have that the Total Recall algorithm applied to the union of the minimal spanning sets (i.e., applied to the compressed multiset) returns the same hypothesis as  when the algorithm is run on the whole training set. This means that the Total Recall algorithm itself acts as a reconstruction function and Theorem \ref{theorem above-learning-2} can be used to obtain evaluations of the probability of misclassification. More research can extend this analysis to alternative algorithms, e.g., along the lines discussed in Section 5 in \cite{HelSloWar_1990}.

\section{Proofs}
\label{section-proofs}

\subsection{A brief overview of the proofs}
\label{proof-overview}

To help readability, we first trace a roadmap of the fundamental steps in which the rather long proofs of Theorems \ref{th:compression_1} and \ref{th:compression_2} are articulated. To prove Theorem \ref{th:compression_1}, we first establish some properties that have necessarily to be satisfied by any compression scheme that is \emph{preferent}. These are (i) and (ii) in the second page of the proof. Next, the probability $\Pr \{ \bphi_N > \eps_\bk \}$ that appears  in the left-hand side of \eqref{result-th-1} in the statement of Theorem \ref{th:compression_1} is re-written in integral form with respect to suitable measures that are  introduced in \eqref{def-m}, and the resulting expression is minimized under conditions (i) and (ii). The ensuing variational problem \eqref{eq:primal_M} returns an upper bound to $\Pr \{ \bphi_N > \eps_\bk \}$. The next step consists in the evaluation of the optimal value of problem \eqref{eq:primal_M}. This step is accomplished by dualization, leading to the reformulation \eqref{eq:dual_simple}. Interestingly, dualization does not introduce any conservatism since strong duality holds, as stated in equation \eqref{equality-duality}. To close the proof, we show that the value $\delta$ that appears in the statement of Theorem \ref{th:compression_1} is achieved by a feasible solution of the dual problem and, thereby, it upper bounds the optimal value of \eqref{eq:dual_simple} and, by this, that of $\Pr \{ \bphi_N > \eps_\bk \}$. This derivation is covered in the last part of the proof that starts after equation \eqref{proof-summary}. \\

The proof of Theorem \ref{th:compression_2} follows the same path as that of Theorem \ref{th:compression_1} with the non-trivial difference that the property (ii) holds with equality in this case (it has an inequality in the proof of Theorem \ref{th:compression_1}). This results in primal and dual problems that have substantial differences from those in the proof of Theorem \ref{th:compression_1}, while the conceptual structure of the proof remains the same.

\subsection{Proof of Theorem \ref{th:compression_1}}
\label{proof-Theorem-1}

Result \eqref{result-th-1} is first proven under the following additional assumption of no concentrated mass
\begin{equation}
\label{eq:no_conc_mass_proof}
\Pr \Big\{ \bz_i = z \Big\} = 0, \; \forall z \in \scZ;
\end{equation}
the extension to the general case is dealt with at the end of this proof. \\
\\
The quantity of interest $\Pr \{ \bphi_N > \eps_\bk \}$ can be expressed as follows
\begin{align*}
	\Pr \Big\{ \bphi_N > \eps_\bk \Big\}
&=
	\Pr \Big\{ \bphi_N > \eps_{|\co(\bz_1,\ldots,\bz_N)|} \Big\}
\\&=
	\sum_{k=0}^N \Pr \Big\{ |\co(\bz_1,\ldots,\bz_N)| = k \text{ and } \bphi_N > \eps_k \Big\}
\\&=
	\sum_{k=0}^N \Pr \left( \bigcup_{ \substack{\{ i_1,\ldots,i_k \} \\ \subseteq \{1,\ldots,N\}} } \Big\{ \co(\bz_1,\ldots,\bz_N) = \ms(\bz_{i_1},\ldots,\bz_{i_k}) \text{ and } \bphi_N > \eps_k \Big\} \right)
\\&=
	\sum_{k=0}^N \sum_{ \substack{\{ i_1,\ldots,i_k \} \\ \subseteq \{1,\ldots,N\}} } \Pr \Big\{ \co(\bz_1,\ldots,\bz_N) = \ms(\bz_{i_1},\ldots,\bz_{i_k}) \text{ and } \bphi_N > \eps_k \Big\},
\end{align*}
where the last equality holds because: due to \eqref{eq:no_conc_mass_proof}, $\bz_1 \neq \cdots \neq \bz_N$ occurs with probability $1$, and so the multisets $\ms(\bz_{i_1},\ldots,\bz_{i_k})$ are all different from each other with probability $1$; whence, $\co(\bz_1,\ldots,\bz_N) = \ms(\bz_{i_1},\ldots,\bz_{i_k})$ holds for one and only one choice of the indexes with probability $1$, implying that the events under the sign of union are disjoint up to overlaps of probability zero. \\
\\
Now, for any fixed $k$, all the probabilities in the inner summation are equal because the $\bz_i$'s are i.i.d. and so we can write
\begin{align}
\nonumber
	\sum_{k=0}^N \sum_{ \substack{\{ i_1,\ldots,i_k \} \\ \subseteq \{1,\ldots,N\}} } &\Pr \Big\{ \co(\bz_1,\ldots,\bz_N) = \ms(\bz_{i_1},\ldots,\bz_{i_k}) \text{ and } \bphi_N > \eps_k \Big\}
\\&=
\nonumber
	\sum_{k=0}^N {N \choose k} \Pr \Big\{ \co(\bz_1,\ldots,\bz_N) = \ms(\bz_1,\ldots,\bz_k) \text{ and } \bphi_N > \eps_k \Big\}
\\&=
	\sum_{k=0}^N {N \choose k} \int_{(\eps_k,1]} \dd \mathfrak{m}^+_{k,N},
\label{eq:objective_primal}
\end{align}
where $\mathfrak{m}^+_{k,N}$ is a (positive) measure on $[0,1]$ defined as follows (for future use we introduce a definition that holds for a generic integer $m$, and not just for $m=N$): for all $m = 0,1,\ldots$ and $k=0,\ldots,m$, let
\begin{equation}
\label{def-m}
\mathfrak{m}^+_{k,m}(B) = \Pr \Big\{ \co(\bz_1,\ldots,\bz_m) = \ms(\bz_1,\ldots,\bz_k) \text{ and } \bphi_m \in B \Big\},
\end{equation}
with $B$ any Borel set in $[0,1]$. \\
\\
Next we derive two relations (i) and (ii) that are satisfied by measures $\mathfrak{m}^+_{k,m}$ for all compression schemes that satisfy the \emph{preference} property; relations (i) and (ii)  will be in use when evaluating $\Pr \{ \bphi_N > \eps_\bk \}$.
\begin{itemize}
\item[\textbf{(i)}] For $m=0,1,\ldots$, it holds that (we use $\alpha$ as variable of integration) 
\begin{equation}
\label{condition(i)}
\sum_{k=0}^m {m \choose k} \int_{[0,1]} \dd \mathfrak{m}^+_{k,m}(\alpha) = 1;
\end{equation}
\item[\textbf{(ii)}] For $m=0,1,\ldots$ and $k=0,\ldots,m$, it holds that
\begin{equation}
\label{condition(ii)}
\int_B \dd  \mathfrak{m}^+_{k,m+1}(\alpha) - \int_B (1-\alpha) \; \dd \mathfrak{m}^+_{k,m}(\alpha) \leq 0,
\end{equation}
for any Borel set $B \subseteq [0,1]$.
\end{itemize}
For any given $B$, the left-hand side of \eqref{condition(ii)} returns a numerical value and, when $B$ ranges over the Borel sets in $[0,1]$, the left-hand side of \eqref{condition(ii)} defines a signed measure. Condition \eqref{condition(ii)} means that this measure is in fact negative. In the following, this measure will be denoted as $\mathfrak{m}^+_{k,m+1} - (1-\alpha) \mathfrak{m}^+_{k,m}$,\footnote{Note that $(1-\alpha) \mathfrak{m}^+_{k,m}$ cannot be interpreted as a product since $(1-\alpha)$ is not a number because it depends on $\alpha$; hence, ``$\mathfrak{m}^+_{k,m+1} - (1-\alpha) \mathfrak{m}^+_{k,m}$'' has
to be interpreted just as a symbol that indicates the measure defined via
the left-hand side of equation \eqref{condition(ii)}.} and condition (ii)
can also be written as $$
\mathfrak{m}^+_{k,m+1} - (1-\alpha) \mathfrak{m}^+_{k,m} \in \mathcal{M}^-,
$$
where $\mathcal{M}^-$ is the cone of negative finite measures on $[0,1]$.

\begin{itemize}
\item[] {\bf Proof of (i):} Along the same lines as the proof of \eqref{eq:objective_primal}, we obtain
\begin{eqnarray*}
1 & = & \sum_{k=0}^m \Pr \Big\{ |\co(\bz_1,\ldots,\bz_m)| = k \Big\} \nonumber \\
& = & \sum_{k=0}^m \Pr \left( \bigcup_{ \substack{\{ i_1,\ldots,i_k \} \\ \subseteq \{1,\ldots,m\}} } \Big\{ \co(\bz_1,\ldots,\bz_m) = \ms(\bz_{i_1},\ldots,\bz_{i_k}) \Big\} \right) \nonumber \\
& = & \sum_{k=0}^m \sum_{ \substack{\{ i_1,\ldots,i_k \} \\ \subseteq
\{1,\ldots,m\}} } \Pr \Big\{ \co(\bz_1,\ldots,\bz_m) = \ms(\bz_{i_1},\ldots,\bz_{i_k}) \Big\} \nonumber \\
& = & \sum_{k=0}^m {m \choose k} \Pr \Big\{ \co(\bz_1,\ldots,\bz_m) = \ms(\bz_1,\ldots,\bz_k) \Big\} \nonumber \\
& = & \sum_{k=0}^m {m \choose k} \int_{[0,1]} \dd \mathfrak{m}^+_{k,m}.
\end{eqnarray*} \qed
\item[] {\bf Proof of (ii):} For any given Borel set $B$ in $[0,1]$, we have that
\begin{equation}
\label{intBm}
\int_B \dd  \mathfrak{m}^+_{k,m+1} = \Pr \Big\{ \co(\bz_1,\ldots,\bz_{m+1})
= \ms(\bz_1,\ldots,\bz_k) \text{ and } \bphi_{m+1} \in B \Big\}.
\end{equation}
By Lemma \ref{lemma_fund}, relation $\co(z_1,\ldots,z_{m+1}) = \ms(z_1,\ldots,z_k)$ implies the following two facts:
\begin{itemize}
\item[(a)] $\co(z_1,\ldots,z_m) = \ms(z_1,\ldots,z_k)$;
\item[(b)] $\co(\co(z_1,\ldots,z_m),z_{m+1}) = \co(z_1,\ldots,z_m)$.
\end{itemize}
Equation (a) is an immediate consequence of Lemma \ref{lemma_fund}, while
(b) is proven by the following chain of equalities: $\co(\co(z_1,\ldots,z_m),z_{m+1}) = [\text{use (a)}] = \co(z_1,\ldots,z_k,z_{m+1}) = [\text{use Lemma \ref{lemma_fund}}] = \co(z_1,\ldots,z_m,z_{m+1}) = \ms(z_1,\ldots,z_k) = \co(z_1,\ldots,z_m)$.

Over the set where $\co(\bz_1,\ldots,\bz_{m+1}) = \ms(\bz_1,\ldots,\bz_k)$, it therefore holds that
\begin{eqnarray}
	\bphi_{m+1} & = & \Pr \Big\{ \co(\co(\bz_1,\ldots,\bz_{m+1}),\bz_{m+2}) \neq \co(\bz_1,\ldots,\bz_{m+1}) | \bz_1,\ldots,\bz_{m+1} \Big\} \nonumber \\
	& = & \Pr \Big\{ \co(\co(\bz_1,\ldots,\bz_{m}),\bz_{m+2}) \neq \co(\bz_1,\ldots,\bz_{m}) | \bz_1,\ldots,\bz_{m+1} \Big\} \nonumber \\
	&  & \big( \text{where we have used (a), which gives } \co(\bz_1,\ldots,\bz_{m+1}) = \co(\bz_1,\ldots,\bz_{m}) \big) \nonumber \\
	& = & \Pr \Big\{ \co(\co(\bz_1,\ldots,\bz_{m}),\bz_{m+2}) \neq \co(\bz_1,\ldots,\bz_{m}) | \bz_1,\ldots,\bz_{m} \Big\} \nonumber \\
	& = & \bphi_m \hspace*{2cm} \Pr \text{-almost surely}, \label{eq:Phi(m+1)=Phi(k)}
\end{eqnarray}
so that the right-had side of \eqref{intBm} can be re-written as 
\begin{eqnarray}
\label{m+i-in place of-m}
\Pr \Big\{ \co(\bz_1,\ldots,\bz_{m+1})
= \ms(\bz_1,\ldots,\bz_k) \text{ and } \bphi_{m+1} \in B \Big\} \nonumber \\
= 
\Pr \Big\{ \co(\bz_1,\ldots,\bz_{m+1})
= \ms(\bz_1,\ldots,\bz_k) \text{ and } \bphi_m \in B \Big\}. 
\end{eqnarray}
On the other hand, an application of (a) and (b) also gives\footnote{Importantly, inequality in \eqref{eq:preference_gives<=} may be strict. For example, suppose that $\bz$ is uniformly distributed over a circle with unitary circumference and that $\co(z_1,\ldots,z_n)$ selects the two points whose gap is smallest (i.e. $\co(z_1,\ldots,z_n) = \ms(z_{i_1},z_{i_2})$ such that no other pair of points is closer -- if a tie occurs use an arbitrary tie-break rule). Take $m=3$ and $B=[0,1]$. The left-hand side of \eqref{eq:preference_gives<=} equals $1/6$, as is obvious by observing that any choice of two points has the same probability of being selected. Instead, the right-hand side is the probability that $\co(\bz_1,\bz_2,\bz_4) = \ms(\bz_1,\bz_2)$ and $\co(\bz_1,\bz_2,\bz_3) = \ms(\bz_1,\bz_2)$. Naming $\bx$ the length of the arc connecting $\bz_1$ and $\bz_2$, we have: i. $\bx$ has uniform density equal to $2$ over $[0,1/2]$; ii. if $\bx > 1/3$, then adding one more point certainly changes the compression; and iii. if $\bx \leq 1/3$, then the probability that
one more point changes the compression is $3\bx$. Hence, the right-hand side of \eqref{eq:preference_gives<=} has value
$$
\Pr \Big\{ \co(\bz_1,\bz_2,\bz_4) = \ms(\bz_1,\bz_2) \text{ and } \co(\bz_1,\bz_2,\bz_3) = \ms(\bz_1,\bz_2) \Big\} = \int_0^{\frac{1}{3}} (1-3x)^2 \; 2 \dd x =
\frac{2}{9},
$$
which is strictly larger than $1/6$. We shall see in Theorem \ref{th:compression_2} that, by strengthening the assumptions of the theorem with the
introduction of the \emph{non-associativity} Property \ref{non-associativity}, inequality in \eqref{eq:preference_gives<=} turns into an equality and this provides lower bounds on the probability of change of compression in addition to the upper bound of the present theorem.}
\begin{eqnarray} \label{eq:preference_gives<=}
\lefteqn{ \Pr \Big\{ \co(\bz_1,\ldots,\bz_{m+1}) = \ms(\bz_1,\ldots,\bz_k) \text{ and } \bphi_m \in B \Big\} } \nonumber \\
& \leq & \Pr \Big\{ \co(\co(\bz_1,\ldots,\bz_m),\bz_{m+1}) = \co(\bz_1,\ldots,\bz_m) \text{ and } \nonumber \\
& & \hspace*{0.75cm} \co(\bz_1,\ldots,\bz_{m}) = \ms(\bz_1,\ldots,\bz_k) \text{
and } \bphi_m \in B \Big\}.
\end{eqnarray}
Using \eqref{m+i-in place of-m} and \eqref{eq:preference_gives<=} in \eqref{intBm}, we obtain
\begin{eqnarray} \label{eq:intB=Pr}
    \int_B \dd  \mathfrak{m}^+_{k,m+1}
    & \leq & \Pr \Big\{ \co(\co(\bz_1,\ldots,\bz_m),\bz_{m+1}) = \co(\bz_1,\ldots,\bz_m) \text{ and } \nonumber \\
    & & \quad \co(\bz_1,\ldots,\bz_{m}) = \ms(\bz_1,\ldots,\bz_k) \text{ and } \bphi_m \in B \Big\}. 
\end{eqnarray}
The proof of (ii) is now established by noticing that the right-hand side
of \eqref{eq:intB=Pr} can be re-written as follows
\begin{align*}
& \E  \Big[ \E \big[ \One{ \{ \co(\co(\bz_1,\ldots,\bz_m),\bz_{m+1}) =
\co(\bz_1,\ldots,\bz_m) \} } \cdot \One{ \{ \co(\bz_1,\ldots,\bz_{m}) = \ms(\bz_1,\ldots,\bz_k) \text{ and } \bphi_m \in B\} } | \bz_1,\ldots,\bz_m \big] \Big] 
\nonumber \\
& \quad = \E  \Big[ \E \big[  \One{ \{ \co(\co(\bz_1,\ldots,\bz_m),\bz_{m+1}) =
\co(\bz_1,\ldots,\bz_m) \} } | \bz_1,\ldots,\bz_m \big] \\
& \quad\quad\quad\ \cdot \One{ \{ \co(\bz_1,\ldots,\bz_{m}) = \ms(\bz_1,\ldots,\bz_k) \text{ and } \bphi_m \in B\} } \Big] \nonumber \\
& \quad =  \E  \Big[ (1-\bphi_m) \cdot \One{ \{ \co(\bz_1,\ldots,\bz_{m}) = \ms(\bz_1,\ldots,\bz_k) \text{ and } \bphi_m \in B\} } \Big] \nonumber \\
& \quad = \int_B (1-\alpha) \; \dd  \mathfrak{m}^+_{k,m}.
\end{align*}
\qed
\end{itemize}
We are now ready to upper-bound $\Pr \left\{ \bphi_N > \eps_\bk \right\}$ by
taking the $\sup$ of the right-hand side of \eqref{eq:objective_primal} under conditions (i) and (ii) (in addition to the fact that measures $\mathfrak{m}^+_{k,m}$ belong to the cone $\mathcal{M}^+$ of positive finite measures on $[0,1]$). This gives
\begin{equation} \label{eq:Pr_ch_comp<=gamma}
\Pr \left\{ \bphi_N > \eps_\bk \right\} \leq \gamma,
\end{equation}
where $\gamma$ is defined as the value of the optimization problem

\begin{eqnarray} \label{eq:primal_infinity}
\gamma = \sup_{\substack{\mathfrak{m}^+_{k,m} \in \mathcal{M}^+ \\ m=0,1,\ldots, \; \; k=0,\ldots,m }}
& & \sum_{k=0}^N {N \choose k} \int_{(\eps_k,1]} \dd \mathfrak{m}^+_{k,N}  \\
\textrm{subject to:} & & \sum_{k=0}^m {m \choose k} \int_{[0,1]} \dd \mathfrak{m}^+_{k,m} = 1, \quad m=0,1,\ldots \nonumber \\
& & \mathfrak{m}^+_{k,m+1} - (1-\alpha) \mathfrak{m}^+_{k,m} \in \mathcal{M}^-, \quad m=0,1,\ldots; \; k=0,\ldots,m. \nonumber
\end{eqnarray}
To evaluate $\gamma$, we consider a truncated version of problem \eqref{eq:primal_infinity} that only includes the measures $\mathfrak{m}^+_{k,m}$
 for $m \leq M$ (we take $M \geq N$). We then dualize the truncated problem and let $M$ increase. \\

The truncated problem is

\begin{subequations} \label{eq:primal_M}
\begin{eqnarray}
\gamma_M = \sup_{\substack{\mathfrak{m}^+_{k,m} \in \mathcal{M}^+ \\ m=0,\ldots, M, \; \; k=0,\ldots,m }}
& & \sum_{k=0}^N {N \choose k} \int_{(\eps_k,1]} \dd \mathfrak{m}^+_{k,N}  \label{eq:primal_M_cost} \\
\textrm{subject to:} & & \sum_{k=0}^m {m \choose k} \int_{[0,1]} \dd \mathfrak{m}^+_{k,m} = 1, \quad m=0,\ldots,M  \label{eq:primal_M_equality_constr} \\
& & \mathfrak{m}^+_{k,m+1} - (1-\alpha) \mathfrak{m}^+_{k,m} \in \mathcal{M}^-, \nonumber \\
& & m=0,\ldots,M-1; \; k=0,\ldots,m. \label{eq:primal_M_inequality_constr}
\end{eqnarray}
\end{subequations}
As $M$ increases, one adds new constraints (which also contain new variables), while the cost function and previous constraints remain unchanged. Hence, $\gamma_M$ does not increase with $M$ and
\begin{equation}
\label{gammaM>gamma}
\gamma \leq \gamma_M,
\end{equation}
for all $M$. To dualize \eqref{eq:primal_M}, consider the Lagrangian:

\begin{eqnarray} \label{eq:Lagrangian}
\mathfrak{L} & = & \sum_{k=0}^N {N \choose k} \int_{(\eps_k,1]} \dd \mathfrak{m}^+_{k,N}
- \sum_{m=0}^M \lambda_m \left( \sum_{k=0}^m {m \choose k} \int_{[0,1]} \dd \mathfrak{m}^+_{k,m} - 1 \right) \nonumber \\
& & - \sum_{m=0}^{M-1} \sum_{k=0}^m \int_{[0,1]} \mu^+_{k,m}(\alpha) \; \dd [ \mathfrak{m}^+_{k,m+1} - (1-\alpha) \mathfrak{m}^+_{k,m}],
\end{eqnarray}
which is a function of
\begin{itemize}
\item[$\diamond$] $\mathfrak{m}^+_{k,m} \in \mathcal{M}^+, \; \; m=0,\ldots,M, \; \; k=0,\ldots,m,$
\end{itemize}
and the Lagrange multipliers
\begin{itemize}
\item[$\diamond$] $\lambda_m \in \R, \; \; m=0, \ldots, M$, \\
\vspace*{-6mm}
\item[$\diamond$] $\mu^+_{k,m} \in \textsf{C}^0_+[0,1], \; \; m=0,\ldots,M-1, \; \; k=0,\ldots,m$,
\end{itemize}
where
$\textsf{C}^0_+[0,1]$ is the set of positive and continuous functions over $[0,1]$. \\
\\
We show below that\footnote{In various parts of this paper from here onward, the set of measures $\mathfrak{m}^+_{k,m}, \; m=0,\ldots,M, \; \; k=0,\ldots,m,$ is indicated by the notation $\{\mathfrak{m}^+_{k,m}\}$, where the range of variability for $m$ and $k$ is suppressed for brevity. Similar notations apply to $\lambda_m$ and $\mu^+_{k,m}$ and other collections alike.}
\begin{equation}
\label{equality-duality}
\gamma_M
~\stackrel{\mathrm{(A)}}{=}~
\sup_{\{\mathfrak{m}^+_{k,m}\}} \inf_{\substack{\{\lambda_m\} \\ \{ \mu^+_{k,m} \} }} \mathfrak{L}
~\stackrel{\mathrm{(B)}}{=}~
\inf_{\substack{\{\lambda_m\} \\ \{ \mu^+_{k,m} \} }} \sup_{ \{\mathfrak{m}^+_{k,m}\} } \mathfrak{L}
~\stackrel{\mathrm{(C)}}{=}~
\gamma^\ast_M,
\end{equation}
where $\gamma^\ast_M$ is the value of the dual of problem \eqref{eq:primal_M} ($\One{}$ denotes the indicator function):
\begin{subequations} \label{eq:dual_M}
\begin{eqnarray}
\gamma^\ast_M = \inf_{\substack{\lambda_m, \; m=0,\ldots,M \\ \mu^+_{k,m} \in \textsf{C}^0_{+}[0,1], \; m=0,\ldots,M-1, \; \; k=0,\ldots,m
}} & & \sum_{m=0}^{M}  \lambda_m  \label{eq:dual_M_cost} \\
\textrm{subject to:} & & {m \choose k} \One{\alpha \in (\eps_k,1]} \One{m=N} + (1 \! - \! \alpha) \mu^+_{k,m}(\alpha) \One{m \neq M}  \nonumber \\
& & \leq  \lambda_m {m \choose k} + \mu^+_{k,m-1}(\alpha)
\One{m \neq k}, \; \forall \alpha \in [0,1], \nonumber \\
& &  k = 0, \ldots, M; \; m = k, \ldots, M \label{eq:dual_M_constr}
\end{eqnarray}
\end{subequations}
(note that in \eqref{eq:dual_M_constr} the indexes run over a range such that there appear functions, for instance $\mu^+_{0,-1}$, that are not listed as optimization variables;
however, these functions are all multiplied by an indicator function that
is zero and they therefore disappear; we have used this way of writing the constraints because it simplifies the notation).
\begin{itemize}
\item[] \textbf{Proof of (A) in \eqref{equality-duality}:} If measures ${\mathfrak{m}^+_{k,m}}$ do not satisfy the constraints in \eqref{eq:primal_M_equality_constr} and \eqref{eq:primal_M_inequality_constr}, then $\inf_{\{\lambda_m\}, \{\mu^+_{k,m}\}} \mathfrak{L}$ is equal to $-\infty$. This is true for \eqref{eq:primal_M_equality_constr} because, if for some $m$ the term
    $$
    \left( \sum_{k=0}^m {m \choose k} \int_{[0,1]} \dd \mathfrak{m}^+_{k,m} - 1 \right)
    $$
    in the right-hand side of \eqref{eq:Lagrangian} is not null, then $\lambda_m$ can be taken any large with sign equal to that of that term, bringing $\mathfrak{L}$ down to arbitrary large negative values. Likewise, if \eqref{eq:primal_M_inequality_constr} is not satisfied for a given pair
$(k,m)$, then the last term in the right-hand side of \eqref{eq:Lagrangian} can be made any large negative by selecting a suitable positive large continuous function $\mu^+_{k,m}$.\footnote{\label{key_footnote}Intuitively, this is achieved by a function $\mu^+_{k,m}$ that is concentrated over the domain where $\mathfrak{m}^+_{k,m+1} - (1-\alpha) \mathfrak{m}^+_{k,m}$ is positive. In this footnote, we provide the interested reader with
a formal construction of such a function $\mu^+_{k,m}$. Note that, if condition \eqref{eq:primal_M_inequality_constr} is not satisfied for a given
pair $(k,m)$, then there is a Borel set $B$ in $[0,1]$ such that
    \begin{equation}
    \label{int-int}
    \int_B \dd \mathfrak{m}^+_{k,m+1} - \int_B (1-\alpha) \; \dd \mathfrak{m}^+_{k,m} > 0.
    \end{equation}
    Letting $\mathfrak{m}$ be the measure $\mathfrak{m}^+_{k,m+1} + \mathfrak{m}^+_{k,m}$, $B$ can be sandwiched between a closed set $C$ and an open set $O$ ($C \subseteq B \subseteq O$, note that $O \subseteq \R$, but
it may not be restricted to $[0,1]$) such that $\mathfrak{m}(O)-\mathfrak{m}(C) < \varepsilon$ for any arbitrarily small $\varepsilon$ (Theorem 12.3 in \citealp{Billingsley}).
Let now
$$
g(\alpha) = \frac{\mathrm{dist}(\alpha,O^c)}{\mathrm{dist}(\alpha,O^c)+\mathrm{dist}(\alpha,C)}, \quad \forall \alpha \in \R,
$$
where $\mathrm{dist}(\alpha,X) = \inf \{ |\alpha-x|: \; x \in X \}$
and $O^c$ is the complement of $O$. $g$ is a continuous function with codomain [0,1] and $g(\alpha) = 0$ on $O^c$ while $g(\alpha) = 1$ on $C$. We have that
\begin{eqnarray*}
    \lefteqn{ \int_{[0,1]} g(\alpha) \; \dd [ \mathfrak{m}^+_{k,m+1} - (1-\alpha) \mathfrak{m}^+_{k,m}] } \\
    & = &
    \int_{[0,1]} g(\alpha) \; \dd \mathfrak{m}^+_{k,m+1} - \int_{[0,1]} g(\alpha) \cdot (1-\alpha) \; \dd \mathfrak{m}^+_{k,m} \\
        & = &
    \int_O g(\alpha) \; \dd \mathfrak{m}^+_{k,m+1} - \int_O g(\alpha) \cdot (1-\alpha) \; \dd \mathfrak{m}^+_{k,m} \\
    & & \quad \quad (\text{since } O \text{ can expand beyond } [0,1] \text{ but } \mathfrak{m}^+_{k,m+1} \text{ and } \mathfrak{m}^+_{k,m} \text{ are supported in } [0,1]) \\
    & \geq &
    \int_C g(\alpha) \; \dd \mathfrak{m}^+_{k,m+1} - \int_C g(\alpha) \cdot (1-\alpha) \; \dd \mathfrak{m}^+_{k,m} - \varepsilon \\
    & &
    \quad \quad (\text{since } 0 \leq g(\alpha) \cdot (1-\alpha) \leq 1 \text{ for } \alpha \in [0,1] \text{ and } \int_{O \setminus C} \; \dd \mathfrak{m}^+_{k,m} < \varepsilon) \\
    & = &
    \int_C \; \dd \mathfrak{m}^+_{k,m+1} - \int_C (1-\alpha) \; \dd \mathfrak{m}^+_{k,m} - \varepsilon \\
    & \geq &
    \int_B \; \dd \mathfrak{m}^+_{k,m+1} - \varepsilon - \int_B (1-\alpha) \; \dd \mathfrak{m}^+_{k,m} - \varepsilon \\
    & &
    \quad \quad ( \text{since } \int_{B \setminus C} \dd \mathfrak{m}^+_{k,m+1} < \varepsilon ) \\
    & > &
    0, \ \ \ \text{for } \varepsilon \text{ small enough and using \eqref{int-int}.}
    \end{eqnarray*}
Taking $\mu^+_{k,m}$ to be an arbitrarily rescaled version of the restriction of $g$ to $[0,1]$, one obtains that the last term in the right-hand side of \eqref{eq:Lagrangian} can be made any large negative.
}
Hence, the $\sup_{\{\mathfrak{m}^+_{k,m}\}}$ of $\inf_{\{\lambda_m\},
\{\mu^+_{k,m}\}} \mathfrak{L}$ is attained at measures $\mathfrak{m}^+_{k,m}$ satisfying \eqref{eq:primal_M_equality_constr} and \eqref{eq:primal_M_inequality_constr} and, once \eqref{eq:primal_M_equality_constr} and \eqref{eq:primal_M_inequality_constr} hold, $\inf_{\{\lambda_m\}, \{\mu^+_{k,m}\}} \mathfrak{L}$ is achieved by setting the second and third terms in the right-hand side of \eqref{eq:Lagrangian} to zero (choose $\lambda_m$ to be any value and $\mu^+_{k,m}$, e.g., equal to zero for all $m$ and $k$). This leads to the conclusion that $\sup_{\{\mathfrak{m}^+_{k,m}\}} \inf_{\{\lambda_m\} , \{\mu^+_{k,m}\}} \mathfrak{L}$ equals $\gamma_M$ of problem \eqref{eq:primal_M}. \qed
\item[] \textbf{Proof of (B) in \eqref{equality-duality}:} This long and technical proof is provided in Appendix \ref{Appendix strong duality}. \qed
\item[] \textbf{Proof of (C) in \eqref{equality-duality}:} First note that the Lagrangian can be rewritten as follows (in the second last term we have used the change of running index $j = m \! + \! 1$)
\begin{eqnarray*}
\mathfrak{L} & = & \sum_{m=0}^M \sum_{k=0}^m \int_{[0,1]} {m \choose k} \One{\alpha \in (\eps_k,1]} \One{m=N} \; \dd \mathfrak{m}^+_{k,m}
- \sum_{m=0}^M \sum_{k=0}^m \int_{[0,1]} \lambda_m {m \choose k} \; \dd \mathfrak{m}^+_{k,m} \nonumber \\
& & + \sum_{m=0}^M \lambda_m   - \sum_{j=0}^{M} \sum_{k=0}^j \int_{[0,1]} \mu^+_{k,j-1}(\alpha) \One{j \neq k} \; \dd \mathfrak{m}^+_{k,j} \nonumber \\
& & + \sum_{m=0}^{M} \sum_{k=0}^m \int_{[0,1]} \mu^+_{k,m}(\alpha) \cdot (1-\alpha) \One{m \neq M} \; \dd \mathfrak{m}^+_{k,m}.
\end{eqnarray*}
By renaming $j$ as $m$ in the second last term and re-arranging the summations $\sum_{m=0}^{M} \sum_{k=0}^m$ as $\sum_{k=0}^M \sum_{m=k}^{M}$, we then obtain:
\begin{eqnarray} \label{eq:Lagrangian_2}
\mathfrak{L} & = & \sum_{m=0}^M \lambda_m + \sum_{k=0}^M \sum_{m=k}^M \int_{[0,1]} \Bigg[ {m \choose k} \One{\alpha \in (\eps_k,1]} \One{m=N} + (1-\alpha) \mu^+_{k,m}(\alpha) \One{m \neq M} \nonumber \\
&& - \lambda_m {m \choose k} - \mu^+_{k,m-1}(\alpha) \One{m \neq k} \Bigg] \; \dd \mathfrak{m}^+_{k,m}.
\end{eqnarray}
Now, if for some pair $(k,m)$ the constraint in \eqref{eq:dual_M_constr} is not satisfied for a given $\alpha = \bar{\alpha}$, then $\sup_{\{\mathfrak{m}^+_{k,m}\}} \mathfrak{L}$ can be sent to $+\infty$ by choosing $\mathfrak{m}^+_{k,m}$ that has an arbitrarily large mass concentrated in $\bar{\alpha}$. Hence, the $\inf_{\{\lambda_m\}, \{\mu^+_{k,m}\}}$ of $\sup_{\{\mathfrak{m}^+_{k,m}\}} \mathfrak{L}$ is attained at $\lambda_m$'s and $\mu^+_{k,m}$'s satisfying \eqref{eq:dual_M_constr} and, once \eqref{eq:dual_M_constr} holds, $\sup_{\{\mathfrak{m}^+_{k,m}\}} \mathfrak{L}$ is achieved by setting the second term in the right-hand side of \eqref{eq:Lagrangian_2} to zero (choose, e.g., $\mathfrak{m}^+_{k,m}=0$ for all $k$ and $m$). This leads to the conclusion that $\inf_{\{\lambda_m\} , \{\mu^+_{k,m}\}} \sup_{\{\mathfrak{m}^+_{k,m}\}} \mathfrak{L}$ equals $\gamma^\ast_M$ of problem \eqref{eq:dual_M}. \qed
\end{itemize}
Next we want to evaluate $\gamma^\ast_M$ of problem \eqref{eq:dual_M}. \\
\\
For a better visualization of the constraints in \eqref{eq:dual_M_constr}, we write them more explicitly in groups indexed by $k$ as follows:
\begin{subequations}\label{spataffiata}
\begin{equation}
\begin{array}{lrcll}
\lefteqn{\underline{k=0,\ldots,N \! - \! 1}} \\
& (1 \! - \! \alpha) \mu^+_{k,k}(\alpha) & \leq & \lambda_k {k \choose k}
& \phantom{A}_{m = k} \\
& (1 \! - \! \alpha) \mu^+_{k,k+1}(\alpha) & \leq & \lambda_{k+1} {k+1 \choose k} + \mu^+_{k,k}(\alpha) & \phantom{A}_{m = k+1} \\
& & \vdots & \\
& (1 \! - \! \alpha) \mu^+_{k,N-1}(\alpha) & \leq & \lambda_{N-1} {N-1 \choose k} +\mu^+_{k,N-2}(\alpha) & \phantom{A}_{m = N-1} \\
& {N \choose k} \One{\alpha \in (\eps_k,1]} + (1 \! - \! \alpha) \mu^+_{k,N}(\alpha) & \leq & \lambda_N {N \choose k} + \mu^+_{k,N-1}(\alpha)  & \phantom{A}_{m = N} \\
& (1 \! - \! \alpha) \mu^+_{k,N+1}(\alpha) & \leq & \lambda_{N+1} {N+1 \choose k} + \mu^+_{k,N}(\alpha) & \phantom{A}_{m = N+1} \\
& & \vdots & \\
& (1 \! - \! \alpha) \mu^+_{k,M-1}(\alpha) & \leq & \lambda_{M-1} {M-1 \choose k} + \mu^+_{k,M-2}(\alpha) & \phantom{A}_{m = M-1} \\
& 0 & \leq & \lambda_{M} {M \choose k} + \mu^+_{k,M-1}(\alpha)  & \phantom{A}_{m = M} \\
\underline{k=N} & & & \\
& {N \choose N} \One{\alpha \in (\eps_N,1]} + (1 \! - \! \alpha) \mu^+_{N,N}(\alpha) & \leq & \lambda_N {N \choose N}  & \phantom{A}_{m = N} \\
& (1 \! - \! \alpha) \mu^+_{N,N+1}(\alpha) & \leq & \lambda_{N+1} {N+1 \choose N} + \mu^+_{N,N}(\alpha) & \phantom{A}_{m = N+1} \\
& & \vdots & \\
& (1 \! - \! \alpha) \mu^+_{N,M-1}(\alpha) & \leq & \lambda_{M-1} {M-1 \choose N} + \mu^+_{N,M-2}(\alpha) & \phantom{A}_{m = M-1} \\
& 0 & \leq & \lambda_{M} {M \choose N} + \mu^+_{N,M-1}(\alpha) & \phantom{A}_{m = M}
\end{array}
\end{equation}
\begin{equation}
\begin{array}{lrcll}
\lefteqn{\underline{k=N\!+\!1,\ldots,M-1} } \\
& (1 \! - \! \alpha) \mu^+_{k,k}(\alpha) & \leq & \lambda_{k} {k \choose k} & \phantom{A}_{m = k} \\
& (1 \! - \! \alpha) \mu^+_{k,k+1}(\alpha) & \leq & \lambda_{k+1} {k+1 \choose k} + \mu^+_{k,k}(\alpha) & \phantom{A}_{m = k+1} \\
& &  \vdots & \\
& (1 \! - \! \alpha) \mu^+_{k,M-1}(\alpha) & \leq & \lambda_{M-1} {M-1 \choose k} + \mu^+_{k,M-2}(\alpha) & \phantom{A}_{m = M-1} \\
& 0 & \leq & \lambda_{M} {M \choose k} + \mu^+_{k,M-1}(\alpha) & \phantom{A}_{m = M} \\
\underline{k=M} & & & \\
& 0 & \leq & \lambda_{M} {M \choose M} & \phantom{A}_{m = M}
\end{array}
\end{equation}
\end{subequations}
For any given $k \in \{0,\ldots,M\}$, consider the corresponding set of inequalities and multiply both sides of the first inequality by $(1-\alpha)^0$, both sides of the second inequality by $(1-\alpha)^1$, and so on till the last inequality, which is multiplied by $(1-\alpha)^{M-k}$. Then, summing side-by-side the so-obtained inequalities, and noting that all functions $\mu^+_{k,m}(\alpha)$ cancel out, one obtains that the constraints in \eqref{spataffiata} imply the following inequalities:
\begin{eqnarray} \label{eq:dual_constr_simple}
\underline{k=0,\ldots,N}
& &
{N \choose k} (1-\alpha)^{N-k} \One{\alpha \in (\eps_k,1]} \leq \sum_{m=k}^M \lambda_m {m \choose k} (1-\alpha)^{m-k} \nonumber \\
\underline{k=N\!+\!1,\ldots,M}
& &
0 \leq \sum_{m=k}^M \lambda_m {m \choose k} (1-\alpha)^{m-k}.
\end{eqnarray}
We next show that the optimal value of problem \eqref{eq:dual_M} equals the optimal value of an optimization problem with the same cost function as in problem \eqref{eq:dual_M} and the constraints \eqref{eq:dual_constr_simple} complemented with the condition $\lambda_m = 0$ for $m = N+1,\ldots, M$, viz.
\begin{subequations} \label{eq:dual_simple}
\begin{eqnarray}
\gamma^\ast_M = \inf_{\lambda_m, \; m=0,\ldots,M} & & \sum_{m=0}^{M}  \lambda_m  \label{eq:dual_simple_cost} \\
\textrm{subject to:} & & {N \choose k} (1-\alpha)^{N-k} \One{\alpha \in (\eps_k,1]} \leq  \sum_{m=k}^M \lambda_m {m \choose k} (1-\alpha)^{m-k},
\nonumber \\
& & \forall \alpha \in [0,1], \; k = 0,\ldots, N \label{eq:dual_simple_constr_N} \\
& & 0 \leq \sum_{m=k}^M \lambda_m {m \choose k} (1-\alpha)^{m-k}, \quad \forall \alpha \in [0,1], \nonumber \\
& & k = N+1,\ldots,M  \label{eq:dual_simple_constr_M} \\
& & \lambda_m = 0 \text{ for } m = N+1,\ldots, M \label{eq:dual_simple_constr_null}
\end{eqnarray}
\end{subequations}
(clearly, \eqref{eq:dual_simple_constr_M} is automatically satisfied in view of \eqref{eq:dual_simple_constr_null}). \\
\\
To show that the values of $\gamma^\ast_M$ given by \eqref{eq:dual_M} and
\eqref{eq:dual_simple} are actually the same, start by noting that adding
the condition $\lambda_m = 0$ for $m = N+1,\ldots, M$ to problem $\eqref{eq:dual_M}$ does not change its optimal value. This requires a short proof:
\begin{itemize}
\item[] The constraints in \eqref{spataffiata} imply that $\lambda_m \geq
0$ for $m=0, \ldots, M$ as it can be seen from the first inequality ($m
= k$) of each group ($k=0,\ldots,M$) evaluated at $\alpha = 1$. Now, given a feasible point of \eqref{spataffiata} that does not have $\lambda_m = 0$ for $m = N+1,\ldots, M$, consider a modified point by setting $\lambda_m = 0$ for $m = N+1,\ldots, M$ and $\mu^+_{k,m} = 0$ for $k=0,\ldots,M-1$ and $m = \max\{k,N\},\ldots, M-1$, while maintaining the original choices for all other $\lambda_m$ and $\mu^+_{k,m}$. This
point is still feasible for \eqref{spataffiata} because, for all $k$, all
the inequalities for $m \geq N+1$ become $0 \leq 0$, the inequality for $m = N$ is \emph{a-fortiori} satisfied (recall that function $\mu^+_{k,N}$ in the left-hand side of this inequality is $\geq 0$ so that setting it to $0$ relaxes the constraint) and all other inequalities are not affected. On the other hand, the value of problem \eqref{eq:dual_M} corresponding to the modified point outdoes the value at the original point since all $\lambda_m$ in the original feasible point were nonnegative and some of
them have been set to zero in the modified point.
\end{itemize}
Since the condition $\lambda_m = 0$ for $m=N+1,\ldots,M$ in \eqref{eq:dual_simple_constr_null} can be added to \eqref{eq:dual_M} without affecting its optimal value, and considering that the other constraints in \eqref{eq:dual_simple} for $k=0,\ldots,M$ are implied by those already present in \eqref{eq:dual_M} (as
shown before equation \eqref{eq:dual_constr_simple}), the optimal value of \eqref{eq:dual_simple} is not bigger than the optimal value of \eqref{eq:dual_M}. The reverse inequality that the optimal value of \eqref{eq:dual_M} is not bigger than the optimal value of \eqref{eq:dual_simple} is proven by showing that for any feasible point of \eqref{eq:dual_simple} one
can find a feasible point of \eqref{eq:dual_M} that attains the same value. This is shown in the following.
\begin{itemize}
\item[] Consider a feasible point of \eqref{eq:dual_simple}. Evaluating all constraints \eqref{eq:dual_simple_constr_N} for $k=0,\ldots,N$, at $\alpha = 1$, one sees that $\lambda_m \geq 0$ for $m=0, \ldots, N$. Moreover, it holds that $\lambda_m
= 0$ for $m = N+1,\ldots, M$. To find the sought feasible point of \eqref{eq:dual_M}, consider the same $\lambda_m$ as those for the feasible point of \eqref{eq:dual_simple} and complement them with the following functions $\mu^+_{k,m}$. For $k=0,\ldots,M-1$, $m = \max\{k,N\},\ldots, M-1$, take $\mu^+_{k,m} = 0$. With this choice, all the inequalities in \eqref{spataffiata} for $k=0,\ldots,M$, $m = \max\{k,N+1\}, \ldots, M$
become $0 \leq 0$ and are therefore satisfied. The expressions of $\mu^+_{k,m}$ for the remaining indexes are first defined over $[0,1)$ and then extended to the closed interval $[0,1]$. Over $[0,1)$, consider the inequalities in \eqref{spataffiata} for $k=0,\ldots,N-1$, $m = k, \ldots, N-1$ and take $\mu^+_{k,m}(\alpha)$ such that these inequalities are satisfied with equality, starting from top and then proceeding downwards. This gives
    \begin{eqnarray}
    \label{eq:moose}
    \mu^+_{k,k}(\alpha) & = & \frac{\lambda_k {k \choose k}}{1-\alpha},
\nonumber \\
    \mu^+_{k,k+1}(\alpha) & = & \frac{\lambda_{k+1} {k+1 \choose k}}{1-\alpha} + \frac{\lambda_k {k \choose k}}{(1-\alpha)^2} \nonumber \\
    & \vdots & \\
    \mu^+_{k,N-1}(\alpha) & = & \sum_{j=k}^{N-1}\frac{\lambda_j {j \choose k}}{(1-\alpha)^{N-j}}. \nonumber
    \end{eqnarray}
    Since $\lambda_m \geq 0$, the obtained $\mu^+_{k,m}(\alpha)$'s are all positive and, moreover, are continuous over $[0,1)$. We next show that choice \eqref{eq:moose} satisfies over $[0,1)$ the remaining inequalities (those in \eqref{spataffiata} for $k=0,\ldots,N$ and $m = N$). For $k=0,\ldots,N-1$ and $m = N$, substituting $\mu^+_{k,N-1}(\alpha) = \sum_{j=k}^{N-1}\frac{\lambda_j {j \choose k}}{(1-\alpha)^{N-j}}$ and $\mu^+_{k,N}(\alpha) = 0$ gives
    \begin{equation} \label{eq:central_ineq_N-1}
    {N \choose k} \One{\alpha \in (\eps_k,1]} \leq \sum_{j=k}^{N} \lambda_j {j \choose k} \frac{1}{(1-\alpha)^{N-j}},
    \end{equation}
    while for $k=N$ and $m = N$, substituting $\mu^+_{N,N}(\alpha) = 0$ we have
    \begin{equation} \label{eq:central_ineq_N}
    {N \choose N} \One{\alpha \in (\eps_N,1]} \leq \lambda_N {N \choose N}.
    \end{equation}
    Equations \eqref{eq:central_ineq_N-1} and \eqref{eq:central_ineq_N} are satisfied because they coincide with \eqref{eq:dual_simple_constr_N} (recall that $\lambda_m = 0$ for $m = N+1, \ldots, M$ -- see \eqref{eq:dual_simple_constr_null}). As for $\alpha = 1$, note that functions $\mu^+_{k,m}$ defined in \eqref{eq:moose} tend to infinity when $\alpha \to 1$. This poses a problem of existence for $\alpha = 1$, which, however, can be easily circumvented by truncating the functions $\mu^+_{k,m}$ in the interval $\alpha \in [1 \! - \! \rho,1]$ at the value $\mu^+_{k,m}(1
\! - \! \rho)$ to obtain
    $$
    \mu^{+,\rho}_{k,m}(\alpha) =
    \begin{cases}
    \mu^+_{k,m}(\alpha) & \alpha < 1-\rho \\
    \mu^+_{k,m}(1 \! - \! \rho) & \alpha \geq 1-\rho,
    \end{cases}
    $$
    and noting that all the inequalities are satisfied over $[0,1]$ if $\rho$ is chosen small enough.
\end{itemize}
Summarizing the results so far, we have
\begin{equation}
\label{proof-summary}
\Pr \left\{ \bphi_N > \eps_\bk \right\} \stackrel{\eqref{eq:Pr_ch_comp<=gamma}}{\leq}\gamma \stackrel{\eqref{gammaM>gamma} }{\leq} \gamma_M \stackrel{\eqref{equality-duality}}{=} \gamma^\ast_M,
\end{equation}
where $\gamma^\ast_M$ is given by \eqref{eq:dual_simple}. Notice now that
increasing $M$ beyond $N$ in \eqref{eq:dual_simple} does not change the problem because $\lambda_m = 0$ for $m \geq N+1$, so that $\gamma^\ast_M
= \gamma^\ast_N$ for all $M \geq N$. The proof of the theorem (under condition \eqref{eq:no_conc_mass_proof}) is concluded by showing that $\gamma^\ast_N \leq \delta$, which is what we do next. \\
\\
For $M=N$, problem \eqref{eq:dual_simple} becomes
\begin{eqnarray} \label{eq:dual_simple_M=N}
\gamma^\ast_N = \inf_{\lambda_m, \; m=0,\ldots,N} & & \sum_{m=0}^{N}  \lambda_m  \\
\textrm{subject to:} & & {N \choose k} (1-\alpha)^{N-k} \One{\alpha \in (\eps_k,1]} \leq  \sum_{m=k}^N \lambda_m {m \choose k} (1-\alpha)^{m-k},
\nonumber \\
& & \alpha \in [0,1], \quad k = 0,\ldots, N. \nonumber
\end{eqnarray}
Take $\lambda_m = \frac{\delta}{N}$ for $m=0,\ldots,N-1$ and $\lambda_N = 0$,  so that $\sum_{m=0}^N \lambda_m = \delta$. We show that these $\lambda_m$'s are feasible for $\eqref{eq:dual_simple_M=N}$ so that $\gamma^\ast_N  \leq \sum_{m=0}^N \lambda_m = \delta$. The inequality for $k=N$ is satisfied because the left-hand side is $0$ (recall that $\eps_N = 1$ so that the indicator function is $1$ over an empty set). For $k=0,\ldots,N-1$, the inequalities for $\alpha = 1$ become $0
\leq \lambda_k$, which is true, while for $\alpha \in [0,1)$ the inequalities can be rewritten as
$$
\One{\alpha \in (\eps_k,1]} \leq \frac{\delta}{N}\sum_{m=k}^{N-1} \frac{\binom{m}{k}}{\binom{N}{k}} (1-\alpha)^{-(N-m)}, \quad k = 0,\ldots, N-1,
$$
and are satisfied in view of the definition of $\eps_k$, see \eqref{epsilonk}. This concludes the proof under condition \eqref{eq:no_conc_mass_proof}. \\
\\
Next we remove condition \eqref{eq:no_conc_mass_proof}. \\
\\
Let us augment each random element $\bz_i$ with a random variable $\btheta_i$ uniformly distributed over $[0,1]$ and independent of $\bz_i$ so as to form an i.i.d. sequence $\bz'_1 = (\bz_1,\btheta_1), \bz'_2 = (\bz_2,\btheta_2),\ldots$.\footnote{Note that the augmented random elements $\bz'_i$ are mere mathematical tools used to draw conclusions on $\bz_i$, which remain the measured and relevant variables.} Clearly, condition \eqref{eq:no_conc_mass_proof} applies (\emph{mutatis mutandis}) to $\bz'_i$, viz.
\begin{equation}
\label{eq:no_conc_mass_proof'}
\Pr \Big\{ \bz'_i = z' \Big\} = 0, \; \forall z' \in \scZ \times [0,1].
\end{equation}
Given a multiset of augmented examples $\ms(z'_1,\dots,z'_n)$, let $$
\proj[\ms(z'_1,\dots,z'_n)] = \ms(z_1,\dots, z_n),
$$
i.e., $\proj$ is the
extractor of the $z_i$ components. We define a compression $\co'$ to be applied to multisets of augmented examples as the compression that satisfies the following rule: $\proj[\co'(z'_1,\dots,z'_n)] = \co(z_1,\ldots,z_n)$ and, among sub-multisets of $\ms(z'_1,\ldots,z'_n)$ whose projections is $\co(z_1,\ldots,z_n)$, $\co'$ favors augmented examples with lower second components $\theta_i$. We next show that $\co'$ inherits from $\co$
the \emph{preference} property. Suppose that $\co'(z'_1,\dots,z'_n,z') \subseteq
\ms(z'_1,\dots,z'_n)$. This implies that $\co(z_1,\dots,z_n,z) \subseteq \ms(z_1,\dots,z_n)$. Then,
\begin{eqnarray}
\proj[\co'(z'_1,\dots,z'_n,z')]
& = & \co(z_1,\dots,z_n,z) \nonumber \\
& = & \co(z_1,\dots,z_n) \quad \quad (\mbox{because of the \emph{preference} property of } \co) \nonumber \\
& = & \proj[\co'(z'_1,\dots,z'_n)] \nonumber
\end{eqnarray}
and, hence, the $z$ components of $\co'(z'_1,\dots,z'_n,z')$ and those of
$\co'(z'_1,\dots,z'_n)$ coincide. Moreover, also the $\theta$ components coincide by the rule that favors lower second components. This establishes the \emph{preference} property of $\co'$. \\
\\
In view of \eqref{eq:no_conc_mass_proof'} and the fact that $\co'$ has the \emph{preference} property, we are in the position to apply to $\co'$ the proof that has been developed before under the assumption of no concentrated mass. Defining $\bphi'_N = \Pr \{ \co'(\co'(\bz'_1,\ldots,\bz'_N),\bz'_{N+1}) \neq \co'(\bz'_1,\ldots,\bz'_N) | \bz'_1,\ldots,\bz'_N \}$ and $\bk' = |\co'(\bz'_1,\ldots,\bz'_N)|$, we have
$$
\Pr \{ \bphi'_N > \eps_{\bk'} \} \leq \delta.
$$
On the other hand, 
$$
\co(\co(z_1,\ldots,z_N),z_{N+1}) \neq \co(z_1,\ldots,z_N)
$$
implies that 
$$
\co'(\co'(z'_1,\ldots,z'_N),z'_{N+1}) \neq \co'(z'_1,\ldots,z'_N)
$$
(while the vice-versa does not hold), which gives $\bphi_N \leq \bphi'_N$ $\Pr$-almost surely. Moreover, $\bk = \bk'$. Hence, 
$$
\Pr \{ \bphi_N > \eps_\bk \} \leq \Pr \{ \bphi'_N > \eps_\bk \} = \Pr \{ \bphi'_N > \eps_{\bk'} \} \leq \delta.
$$
This concludes the proof. \qed

\subsection{Proof of Theorem \ref{th:compression_2}}
\label{proof-theorem-2}

We prove the equivalent statement that
$$
\Pr \Big\{ \bphi_N < \underline{\eps}_\bk \text{ or } \bphi_N > \overline{\eps}_\bk \Big\} \leq \delta.
$$
The proof parallels that of Theorem \ref{th:compression_1} and we highlight here the differences. \\
\\
Result \eqref{eq:objective_primal} holds unaltered in the present context, with the only notational difference that $\eps_k$ is now $\overline{\eps}_k$. By proving a similar equation for $\Pr \{ \bphi_N < \underline{\eps}_\bk \}$, we come to the result
\begin{equation}
\label{PPHI<PHI>}
\Pr \Big\{ \bphi_N < \underline{\eps}_\bk \text{ or } \bphi_N > \overline{\eps}_\bk \Big\} = \sum_{k=0}^N {N \choose k} \int_{[0,\underline{\eps}_k)
\cup (\overline{\eps}_k,1]} \dd \mathfrak{m}^+_{k,N}.
\end{equation}
One main difference arises next in connection with (i) and (ii): while (i) holds as before, (ii) holds in this context with equality, which we write in the following way:
\begin{itemize}
\item[\textbf{(ii)$'$}] For $m=0,1,\ldots$ and $k=0,\ldots,m$, it holds that
$$
\mathfrak{m}^+_{k,m+1} - (1-\alpha) \mathfrak{m}^+_{k,m} = 0.
$$
\end{itemize}
\begin{itemize}
\item[] {\bf Proof of (ii)$'$:} Follow the derivation of (ii) till equation
\eqref{eq:preference_gives<=}. Next, we prove that, in the present context of Theorem \ref{th:compression_2}, equation \eqref{eq:preference_gives<=} also holds with reversed inequality, so proving that the two sides of \eqref{eq:preference_gives<=} are in fact equal. To see this, notice that in the event under the sign of probability in the right-hand side of \eqref{eq:preference_gives<=} it holds that $\co(z_1,\ldots,z_{m}) = \ms(z_1,\ldots,z_k)$, which, owing to the \emph{preference} Property \ref{preference},
implies
$$
\co(z_1,\ldots,z_k,z_i) = \ms(z_1,\ldots,z_k), \quad \quad i = k+1, \ldots, m,
$$
and 
$$
\co(z_1,\ldots,z_k) = \ms(z_1,\ldots,z_k). 
$$
In addition, in the same event it also holds that (simply substitute $\co(z_1,\ldots,z_{m})$ with $\ms(z_1,\ldots,z_k)$ in the first condition that defines the event)
$$
\co(z_1,\ldots,z_k,z_{m+1}) = \ms(z_1,\ldots,z_k). \nonumber
$$
Hence,
\begin{align*}
& \Pr \Big\{ \co(\co(\bz_1,\ldots,\bz_m),\bz_{m+1}) = \co(\bz_1,\ldots,\bz_m)  \\
& \quad \quad \text{ and } \co(\bz_1,\ldots,\bz_{m}) = \ms(\bz_1,\ldots,\bz_k) \text{ and } \bphi_m \in B \Big\}  \nonumber \\
& \quad \leq \Pr \Big\{ \co(\bz_1,\ldots,\bz_k,\bz_i) = \co(\bz_1,\ldots,\bz_k), \ \ i = k+1, \ldots m+1 \\
& \quad \quad \text{ and } \co(z_1,\ldots,z_k) = \ms(z_1,\ldots,z_k) \text{ and } \bphi_m \in B \Big\} \nonumber \\
& \quad \leq \Pr \Big\{ \co(\bz_1,\ldots,\bz_{m+1}) = \ms(\bz_1,\ldots,\bz_k) \text{ and } \bphi_m \in B \Big\}, \nonumber
\end{align*}
where the last inequality follows from the \emph{non-associativity} Property \ref{non-associativity}. This establishes the reversed inequality of \eqref{eq:preference_gives<=} and, therefore, that \eqref{eq:preference_gives<=} and  \eqref{eq:intB=Pr} hold with equality. The
final part of the proof of (ii)$'$ consists in re-writing the right-hand side of \eqref{eq:intB=Pr} as is done in the proof of (ii). \qed
\end{itemize}
We are now ready to upper-bound $\Pr \Big\{ \bphi_N < \underline{\eps}_\bk \text{ or } \bphi_N > \overline{\eps}_\bk \Big\}$ by taking the $\sup$ of \eqref{PPHI<PHI>} under conditions (i) and (ii)$'$ (in addition to the fact that measures $\mathfrak{m}^+_{k,m}$ belong to the cone $\mathcal{M}^+$ of positive finite measures on $[0,1]$). This gives
\begin{equation} \label{eq:Pr_ch_comp<=gamma-2}
\Pr \Big\{ \bphi_N < \underline{\eps}_\bk \text{ or } \bphi_N > \overline{\eps}_\bk \Big\} \leq \gamma,
\end{equation}
where $\gamma$ is defined as the value of the optimization problem
\begin{eqnarray*}
\gamma = \sup_{\substack{\mathfrak{m}^+_{k,m} \in \mathcal{M}^+ \\ m=0,1,\ldots, \; \; k=0,\ldots,m }}
& & \sum_{k=0}^N {N \choose k} \int_{[0,\underline{\eps}_k) \cup (\overline{\eps}_k,1]} \dd \mathfrak{m}^+_{k,N}  \\
\textrm{subject to:} & & \sum_{k=0}^m {m \choose k} \int_{[0,1]} \dd \mathfrak{m}^+_{k,m} = 1, \quad m=0,1,\ldots \nonumber \\
& & \mathfrak{m}^+_{k,m+1} - (1-\alpha) \mathfrak{m}^+_{k,m} = 0, \quad
m=0,1,\ldots; \; k=0,\ldots,m. \nonumber
\end{eqnarray*}
To evaluate $\gamma$, we consider as before a truncated version of the problem
\begin{subequations} \label{eq:primal M-Theorem2}
\begin{eqnarray}
\gamma_M = \sup_{\substack{\mathfrak{m}^+_{k,m} \in \mathcal{M}^+ \\ m=0,\ldots, M, \; \; k=0,\ldots,m }}
& & \sum_{k=0}^N {N \choose k} \int_{[0,\underline{\eps}_k) \cup (\overline{\eps}_k,1]} \dd \mathfrak{m}^+_{k,N}  \label{eq:primal_M2_cost} \\
\textrm{subject to:} & & \sum_{k=0}^m {m \choose k} \int_{[0,1]} \dd \mathfrak{m}^+_{k,m} = 1, \quad m=0,\ldots,M  \label{eq:primal_M2_equality_constr} \\
& & \mathfrak{m}^+_{k,m+1} - (1-\alpha) \mathfrak{m}^+_{k,m} = 0, \nonumber \\
& & m=0,\ldots,M-1; \; k=0,\ldots,m. \label{eq:primal_M2 inequality_constr}
\end{eqnarray}
\end{subequations} and observe that
\begin{equation}
\label{gammaM>gamma-Theorem2}
\gamma \leq \gamma_M,
\end{equation}
for all $M$. In evaluating $\gamma_M$ by dualization one important difference with the proof of Theorem \ref{th:compression_1} occurs in the Lagrangian
\begin{eqnarray} \label{eq:Lagrangian-Theorem2}
\mathfrak{L} & = & \sum_{k=0}^N {N \choose k} \int_{[0,\underline{\eps}_k) \cup (\overline{\eps}_k,1]} \dd \mathfrak{m}^+_{k,N}
- \sum_{m=0}^M \lambda_m \left( \sum_{k=0}^m {m \choose k} \int_{[0,1]} \dd \mathfrak{m}^+_{k,m} - 1 \right) \nonumber \\
& & - \sum_{m=0}^{M-1} \sum_{k=0}^m \int_{[0,1]} \mu_{k,m}(\alpha) \; \dd [ \mathfrak{m}^+_{k,m+1} - (1-\alpha) \mathfrak{m}^+_{k,m}],
\end{eqnarray}
because functions $\mu_{k,m} \in \textsf{C}^0[0,1]$ are now required only
to be continuous, while their sign is arbitrary (this difference stems from the equality condition on measures in (ii)$'$ as opposed to the inequality condition in (ii)). \\
\\
Tantamount to \eqref{equality-duality}, we here have
\begin{equation}
\label{equality-duality-Theorem2}
\gamma_M
~\stackrel{\mathrm{(A)}}{=}~
\sup_{\{\mathfrak{m}^+_{k,m}\}} \inf_{\substack{\{\lambda_m\} \\ \{ \mu_{k,m} \} }} \mathfrak{L}
~\stackrel{\mathrm{(B)}}{=}~
\inf_{\substack{\{\lambda_m\} \\ \{ \mu_{k,m} \} }} \sup_{ \{\mathfrak{m}^+_{k,m}\} } \mathfrak{L}
~\stackrel{\mathrm{(C)}}{=}~
\gamma^\ast_M,
\end{equation}
where $\gamma^\ast_M$ is the value of the dual of problem \eqref{eq:primal M-Theorem2}:
\begin{subequations} \label{eq:dual_M-Theorem2}
\begin{eqnarray}
\gamma^\ast_M = \inf_{\substack{\lambda_m, \; m=0,\ldots,M \\ \mu_{k,m} \in \textsf{C}^0[0,1], \; m=0,\ldots,M-1, \; \; k=0,\ldots,m }} & & \sum_{m=0}^{M}  \lambda_m  \label{eq:dual_M_cost-Theorem2} \\
\textrm{subject to:} & & {m \choose k} \One{\alpha \in [0,\underline{\eps}_k) \cup (\overline{\eps}_k,1]} \One{m=N} + (1 \! - \! \alpha) \mu_{k,m}(\alpha) \One{m \neq M}  \nonumber \\
& & \leq  \lambda_m {m \choose k} + \mu_{k,m-1}(\alpha) \One{m \neq k}, \quad \forall \alpha \in [0,1], \nonumber \\
& & k = 0, \ldots, M, \; m = k, \ldots, M. \label{eq:dual_M_constr-Theorem2}
\end{eqnarray}
\end{subequations}
\begin{itemize}
\item[] \textbf{Proof of (A) in \eqref{equality-duality-Theorem2}:} If measures ${\mathfrak{m}^+_{k,m}}$ do not satisfy the constraints in \eqref{eq:primal_M2_equality_constr} and \eqref{eq:primal_M2 inequality_constr}, then $\inf_{\{\lambda_m\}, \{\mu_{k,m}\}} \mathfrak{L}$ is equal to $-\infty$. The reason why this is true for \eqref{eq:primal_M2_equality_constr} is the same as the reason why this is true for \eqref{eq:primal_M_equality_constr} in the proof of (A) in Theorem \ref{th:compression_1}. As for \eqref{eq:primal_M2 inequality_constr}, note that in the present context functions $\mu_{k,m}$ have more flexibility than $\mu^+_{k,m}$ in Theorem \ref{th:compression_1} because they need not be positive. By concentrating on $\mu_{k,m}$'s that are indeed positive, we have as before that $\mathfrak{m}^+_{k,m+1} - (1-\alpha) \mathfrak{m}^+_{k,m} \in \mathcal{M}^-$; similarly, with negative $\mu_{k,m}$'s one concludes that $\mathfrak{m}^+_{k,m+1} - (1-\alpha) \mathfrak{m}^+_{k,m} \in \mathcal{M}^+$, and these two facts together imply \eqref{eq:primal_M2 inequality_constr}. To close the proof of (A), we simply
notice that the Lagrangian \eqref{eq:Lagrangian-Theorem2} with \eqref{eq:primal_M2_equality_constr} and \eqref{eq:primal_M2 inequality_constr} in place reduces to \eqref{eq:primal_M2_cost}. \qed
\item[] \textbf{Proof of (B) in \eqref{equality-duality-Theorem2}:} As in
the proof of Theorem \ref{th:compression_1}, matters of convenience suggest to introduce a modified Lagrangian $\mathfrak{L}_\tau$ that corresponds to a continuous cost function. To this aim, for $k = 0,1,\ldots,N$, the integral $\int_{[0,\underline{\eps}_k) \cup (\overline{\eps}_k,1]} \dd \mathfrak{m}^+_{k,N}$ in the first term of the Lagrangian is rewritten as $\int_{[0,1]}\One{\alpha \in [0,\underline{\eps}_k) \cup (\overline{\eps}_k,1]} \dd \mathfrak{m}^+_{k,N}$ and the indicator function $\One{\alpha \in [0,\underline{\eps}_k) \cup (\overline{\eps}_k,1]}$ is replaced with a continuous function $\varphi_{k,\tau}(\alpha)$ that perturbs the Lagrangian in a vanishing way as $\tau \to \infty$. Precisely, to obtain a continuous transition, we tilt the edges of the indicator function by a small enough quantity $\tau$ (which creates linear slopes over the intervals $(\underline{\eps}_k, \underline{\eps}_k + \tau]$ and $[\overline{\eps}_k - \tau, \overline{\eps}_k)$ while leaving the indicator function unaltered for other values of $\alpha$) with the only advice that: if $\underline{\eps}_k = 0$ (so that the left edge does not exist), then $\varphi_{k,\tau}(\alpha)$ continues at the value $0$ till $\alpha = 0$; likewise, $\varphi_{k,\tau}(\alpha)$ continues at value $0$ till $\alpha = 1$ if $\overline{\eps}_k = 1$. The modified Lagrangian is
\begin{eqnarray*}
\mathfrak{L}_\tau & = & \sum_{k=0}^N {N \choose k} \int_{[0,1]} \varphi_{k,\tau}(\alpha) \; \dd \mathfrak{m}^+_{k,N}
- \sum_{m=0}^M \lambda_m \left( \sum_{k=0}^m {m \choose k} \int_{[0,1]} \dd \mathfrak{m}^+_{k,m} - 1 \right) \nonumber \\
& & - \sum_{m=0}^{M-1} \sum_{k=0}^m \int_{[0,1]} \mu_{k,m}(\alpha) \;
\dd \! \left[  \mathfrak{m}^+_{k,m+1} - (1-\alpha) \mathfrak{m}^+_{k,m} \right].
\end{eqnarray*}
In full analogy with \eqref{fundamnetal relations} in Theorem \ref{th:compression_1}, the proof consists in showing the validity of the following relations:
\begin{equation}
\label{fundamnetal relations-2}
\begin{array}{ccc}
\sup_{\{\mathfrak{m}^+_{k,m}\}} \inf_{\substack{\{\lambda_m\} \\ \{\mu_{k,m}\}}} \mathfrak{L}_\tau &
= & \inf_{\substack{\{\lambda_m\} \\ \{\mu_{k,m}\}}} \sup_{\{\mathfrak{m}^+_{k,m}\}} \mathfrak{L}_\tau \\
\downarrow_{\tau \downarrow 0} & & \rotgeq \\
\sup_{\{\mathfrak{m}^+_{k,m}\}} \inf_{\substack{\{\lambda_m\}\\ \{\mu_{k,m}\}}} \mathfrak{L} &
\leq  & \inf_{\substack{\{\lambda_m\} \\ \{\mu_{k,m}\}}} \sup_{\{\mathfrak{m}^+_{k,m}\}} \mathfrak{L},
\end{array}
\end{equation}
where the only difference with \eqref{fundamnetal relations} is that the positive functions $\mu^+_{k,m}$ are now the functions $\mu_{k,m}$ that are undefined in sign. Here, as in Theorem \ref{th:compression_1}, we need
only to show the validity of the $=$ at top and the convergence $\downarrow_{\tau \downarrow 0}$ on the left. \\
\\
To show the validity of the top equality
\begin{equation} \label{eq:supinf=infsup_tau-2}
\sup_{\{\mathfrak{m}^+_{k,m}\}} \inf_{\substack{\{\lambda_m\} \\ \{\mu_{k,m}\}}} \mathfrak{L}_\tau = \inf_{\substack{\{\lambda_m\} \\ \{\mu_{k,m}\}}} \sup_{\{\mathfrak{m}^+_{k,m}\}} \mathfrak{L}_\tau,
\end{equation}
one follows the same argument as in Theorem \ref{th:compression_1} after noting that there is no need here to introduce the positive measures $\mathfrak{p}^+_{k,m}$ (in Theorem \ref{th:compression_1}, the $\mathfrak{p}^+_{k,m}$'s served the purpose of making null the measures $\mathfrak{q}_{k,m}  = \mathfrak{m}^+_{k,m+1} - (1-\alpha) \; \mathfrak{m}^+_{k,m} + \mathfrak{p}^+_{k,m}$ to evaluate the value $V$ and the supervalue $\bar{V}$; this is not needed here because $\mathfrak{m}^+_{k,m+1} - (1-\alpha) \; \mathfrak{m}^+_{k,m}$ is downright zero and not just negative). Hence, for precise reference, we make explicit that set $H$ in the present context becomes
\begin{eqnarray}
H & := & \Big\{(v,\{r_m\},\big\{ \mathfrak{q}_{k,m} \big\}) \in \R \times \R^{M+1} \times \mathcal{M}^{\frac{(M+1)M}{2}} : \nonumber \\
& & \quad \quad \quad v = \sum_{k=0}^N {N \choose k} \int_{[0,1]} \varphi_{k,\tau}(\alpha) \ \dd \mathfrak{m}^+_{k,N}, \nonumber \\
& & \quad \quad \quad \{r_m\} = \left\{ \sum_{k=0}^m {m \choose k} \int_{[0,1]} \dd \mathfrak{m}^+_{k,m} - 1 \right\}, \nonumber \\
& & \quad \quad \quad \big\{ \mathfrak{q}_{k,m} \big\} = \big\{\mathfrak{m}^+_{k,m+1} - (1-\alpha) \; \mathfrak{m}^+_{k,m} \big\}, \nonumber \\
& & \quad \mbox{where, for all } m \mbox{ and } k, \ \ \mathfrak{m}^+_{k,m} \in \mathcal{M}^+ \Big\}. \label{definition H-2}
\end{eqnarray}
All derivations from here till the equivalent of equation \eqref{eq:inf_sup<=Vbar_1} are identical to those developed in Theorem \ref{th:compression_1} with the only notice that functions $\mu_{k,m}^\varepsilon$ must not be non-negative in the present context (hence, delete from the derivations in Theorem \ref{th:compression_1} the sentence ``Moreover, noting $\ldots$ in place of $\mu_{k,m}^\varepsilon$.''). This way one arrives in the present context to the following equivalent of \eqref{eq:inf_sup<=Vbar_1}, which only differs from \eqref{eq:inf_sup<=Vbar_1} because functions $\mu_{k,m}$, which have undefined sign, take the place of $\mu^+_{k,m}$:
\begin{equation} \label{eq:inf_sup<=Vbar_1-2}
\inf_{\substack{\{\lambda_m\} \\ \{\mu_{k,m}\}}} \sup_{(v, \{ r_m\}, \{ \mathfrak{q}_{k,m} \}) \in H} \;
\left\{ v - \sum_{m=0}^M \lambda_m r_m - \sum_{m=0}^{M-1} \sum_{k=0}^m \int_{[0,1]} \mu_{k,m}(\alpha) \; \dd \mathfrak{q}_{k,m} \right\}
\leq
\bar{V}.
\end{equation}
By recalling the expression of $v$, $r_m$, $\mathfrak{q}_{k,m}$ in the definition of $H$ given in \eqref{definition H-2} and
noticing that the curly bracket in the left-hand side is nothing but $\mathfrak{L}_\tau$ (in Theorem \ref{th:compression_1} we had to digress to take care of measures $\mathfrak{p}^+_{k,m}$), \eqref{eq:inf_sup<=Vbar_1-2} immediately gives the counterpart of \eqref{infsupL<V}:
$$
\inf_{\substack{\{\lambda_m\} \\ \{\mu_{k,m}\}}} \sup_{\{\mathfrak{m}^+_{k,m}\}}
\; \mathfrak{L}_\tau \leq \bar{V}.
$$
The last portion after \eqref{infsupL<V} holds unaltered in the present context to conclude that 
$$
	\inf_{\substack{\{\lambda_m\} \\ \{\mu_{k,m}\}}} \sup_{\{\mathfrak{m}^+_{k,m}\}}
	\; \mathfrak{L}_\tau = \bar{V}.
$$
The next step that $V = \bar{V}$ becomes simpler in the present context. Indeed, one has just to suppress $\mathfrak{p}^{+,i}_{k,m}$ wherever encountered to show the validity of equations \eqref{eq:m_bar_attains_cost} and \eqref{eq:m_bar_attains_=_constr}, while, by a derivation similar to that used to obtain \eqref{eq:m_bar_attains_cost} and \eqref{eq:m_bar_attains_=_constr}, one obtains from \eqref{eq:weak_conv=0_i=infty} (without $\mathfrak{p}^+_{k,m}$) relation
\begin{align}
& \int_{[0,1]} g_j(\alpha) \; \dd [\bar{\mathfrak{m}}^+_{k,m+1} - (1-\alpha) \; \bar{\mathfrak{m}}^+_{k,m}] = 0, \nonumber \\
& \forall g_j(\alpha), \; j=1,2,\ldots, \ \ \ m=0,1,\ldots,M-1, \
\ \ k=0,\ldots,m. \label{eq:nonpositivity_of_integral_seq-2}
\end{align}
The last part now becomes: Taking now any function $f$ in $\textsf{C}^0[0,1]$ and noting that $f$ can be arbitrarily approximated in the $\sup$ norm by a function $g_j$, \eqref{eq:nonpositivity_of_integral_seq-2} yields
$$
\int_{[0,1]} f(\alpha) \; \dd [\bar{\mathfrak{m}}^+_{k,m+1} - (1-\alpha) \; \bar{\mathfrak{m}}^+_{k,m}] = 0,
$$
from which
\begin{equation}
\label{m-satisfied}
\bar{\mathfrak{m}}^+_{k,m+1} - (1-\alpha) \; \bar{\mathfrak{m}}^+_{k,m} =
0
\end{equation}
(recall Footnote \ref{key_footnote}). In the light of \eqref{eq:m_bar_attains_cost}, \eqref{eq:m_bar_attains_=_constr} and \eqref{m-satisfied} one sees that $\{\bar{\mathfrak{m}}^+_{k,m} \}$ maps into the point $(\bar{V},\{r_m = 0 \},\big\{ \mathfrak{q}_{k,m} = 0 \big\})$, which proves
that this point is in $H$. Hence, it holds that $V = \bar{V}$ and equation \eqref{eq:supinf=infsup_tau-2} remains proven. \\
\\
Turning now to equation
\begin{equation} \label{eq:limsupinfLtau=supinfL-2}
\lim_{\tau \to 0} \; \sup_{\{\mathfrak{m}^+_{k,m}\}} \inf_{\substack{\{\lambda_m\} \\ \{\mu_{k,m}\}}} \mathfrak{L}_\tau = \sup_{\{\mathfrak{m}^+_{k,m}\}} \inf_{\substack{\{\lambda_m\} \\ \{\mu_{k,m}\}}} \mathfrak{L}
\end{equation}
(which is the only remaining relation to prove in \eqref{fundamnetal relations-2}), we notice that this is a step that contains some major differences from Theorem \ref{th:compression_1}, for which reason we prefer to repeat the whole derivation even at the price of duplicating some parts already contained in the proof of Theorem \ref{th:compression_1}. \\
\\
Notice that, in both sides of \eqref{eq:limsupinfLtau=supinfL-2}, the $\inf$ operator sends the value to $-\infty$ whenever the constraints in \eqref{eq:primal_M2_equality_constr} or \eqref{eq:primal_M2 inequality_constr} are not satisfied by $\{ \mathfrak{m}^+_{k,m}\}$: hence, \eqref{eq:primal_M2_equality_constr} and \eqref{eq:primal_M2 inequality_constr} must
be satisfied and are always assumed from now on. Under \eqref{eq:primal_M2_equality_constr} and \eqref{eq:primal_M2 inequality_constr}, \eqref{eq:limsupinfLtau=supinfL-2} is rewritten as
\begin{eqnarray} \label{lim_sup_cost_tau=lim_sup_cost-2}
\lefteqn{ \lim_{\tau \to 0} \; \sup_{\{\mathfrak{m}^+_{k,m}\}}
\sum_{k=0}^N {N \choose k} \int_{[0,1]} \varphi_{k,\tau}(\alpha) \; \dd \mathfrak{m}^+_{k,N} } \nonumber \\
& = & \sup_{\{\mathfrak{m}^+_{k,m}\}}
\sum_{k=0}^N {N \choose k} \int_{[0,1]} \One{\alpha \in [0,\underline{\eps}_k) \cup (\overline{\eps}_k,1]} \; \dd \mathfrak{m}^+_{k,N}.
\end{eqnarray}
To show the validity of \eqref{lim_sup_cost_tau=lim_sup_cost-2}, we discretize $\tau$ into $\tau_i$, $i=1,2,\ldots$, $\tau_i \to 0$, and consider a sequence $\{ \breve{\mathfrak{m}}^{+,i}_{k,m} \}$, $i=1,2,\ldots$ (where measures $\breve{\mathfrak{m}}^{+,i}_{k,m}$ satisfy \eqref{eq:primal_M2_equality_constr} and \eqref{eq:primal_M2 inequality_constr} for
any $i$), such that
$$
\lim_{i\to \infty} \sum_{k=0}^N {N \choose k} \int_{[0,1]} \varphi_{k,\tau_i}(\alpha) \; \dd \breve{\mathfrak{m}}^{+,i}_{k,N}
$$
equals the left-hand side of \eqref{lim_sup_cost_tau=lim_sup_cost-2} (for this to hold, $\breve{\mathfrak{m}}^{+,i}_{k,m}$ must achieve a progressively closer and closer approximation of $\sup_{\{\mathfrak{m}^+_{k,m}\}}$ in the left-hand side of \eqref{lim_sup_cost_tau=lim_sup_cost-2} as $i$ increases); then, we construct from $\{ \breve{\mathfrak{m}}^{+,i}_{k,m} \}$ a new sequence $\{ \tilde{\mathfrak{m}}^{+,i}_{k,m} \}$, $i=1,2,\ldots$ (still satisfying \eqref{eq:primal_M2_equality_constr} and \eqref{eq:primal_M2 inequality_constr}), such that
\begin{equation} \label{eq:lim_cost_phi_m<=lim_cost_One_m_tilde-2}
\lim_{i\to \infty} \sum_{k=0}^N {N \choose k} \int_{[0,1]} \varphi_{k,\tau_i}(\alpha) \; \dd \breve{\mathfrak{m}}^{+,i}_{k,N} \leq \lim_{i\to \infty} \sum_{k=0}^N {N \choose k} \int_{[0,1]} \One{\alpha \in [0,\underline{\eps}_k) \cup (\overline{\eps}_k,1]} \; \dd \tilde{\mathfrak{m}}^{+,i}_{k,N},
\end{equation}
which shows that the left-hand side of \eqref{lim_sup_cost_tau=lim_sup_cost-2} is upper-bounded by a value that is no bigger than the right-hand side
of \eqref{lim_sup_cost_tau=lim_sup_cost-2}. Since, on the other hand, the left-hand side of \eqref{lim_sup_cost_tau=lim_sup_cost-2} cannot be smaller than the right-hand side of \eqref{lim_sup_cost_tau=lim_sup_cost-2} because $\varphi_{k,\tau}(\alpha) \geq \One{\alpha \in [0,\underline{\eps}_k) \cup (\overline{\eps}_k,1]}$, \eqref{lim_sup_cost_tau=lim_sup_cost-2} remains proven.
\\
\\
The construction of $\{ \tilde{\mathfrak{m}}^{+,i}_{k,m} \}$ is in three steps:
\begin{itemize}
	\item[Step 1.] [construction of $\{ \check{\mathfrak{m}}^{+,i}_{k,m} \}$] For all $k \leq N$ for which $\overline{\eps}_k \neq 1$ and for all $m$, move the probabilistic mass of $\breve{\mathfrak{m}}^{+,i}_{k,m}$ contained in the interval $(\overline{\eps}_k-\tau_i,\overline{\eps}_k]$ into a concentrated mass in point $\overline{\eps}_k+\tau_i$ and, for all $k \leq N$ for which $\underline{\eps}_k \neq 0$ and for all $m$, move the probabilistic mass of $\breve{\mathfrak{m}}^{+,i}_{k,m}$ contained in the interval $[\underline{\eps}_k,\underline{\eps}_k+\tau_i)$ into a concentrated mass in point $\underline{\eps}_k-\tau_i$; let $\check{\mathfrak{m}}^{+,i}_{k,m}$ be the corresponding measures.
 	\item[Step 2.] [construction of $\{ \hat{\mathfrak{m}}^{+,i}_{k,m} \}$]
The mass shift in Step 1 can lead to measures $\check{\mathfrak{m}}^{+,i}_{k,m}$ that violate condition \eqref{eq:primal_M2 inequality_constr} in $\overline{\eps}_k+\tau_i$ and/or $\underline{\eps}_k-\tau_i$; the new measures $\hat{\mathfrak{m}}^{+,i}_{k,m}$ restore the validity of condition \eqref{eq:primal_M2 inequality_constr}. The construction is in two steps that focus on $\overline{\eps}_k+\tau_i$ and $\underline{\eps}_k-\tau_i$, respectively. For all $k > N$ and all $k \leq N$ for which $\overline{\eps}_k =
1$, let $\hat{\mathfrak{m}}^{+,i(1)}_{k,m} = \check{\mathfrak{m}}^{+,i}_{k,m}$, for all $m = k, \ldots, M$. For all other $k$'s, let $\hat{\mathfrak{m}}^{+,i(1)}_{k,k} = \check{\mathfrak{m}}^{+,i}_{k,k}$; then, verify sequentially for $m = k, \ldots, M-1$ whether the condition
	\begin{equation}
    \label{m(1)}
	\check{\mathfrak{m}}^{+,i}_{k,m+1}(\{\overline{\eps}_k+\tau_i\}) - (1-(\overline{\eps}_k+\tau_i)) \; \hat{\mathfrak{m}}^{+,i(1)}_{k,m}(\{\overline{\eps}_k+\tau_i\}) = 0
	\end{equation}
is satisfied; if yes, let $\hat{\mathfrak{m}}^{+,i(1)}_{k,m+1} = \check{\mathfrak{m}}^{+,i}_{k,m+1}$, otherwise trim $\check{\mathfrak{m}}^{+,i}_{k,m+1}(\{\overline{\eps}_k+\tau_i\})$ to the value $(1-(\overline{\eps}_k+\tau_i)) \; \hat{\mathfrak{m}}^{+,i(1)}_{k,m}(\{\overline{\eps}_k+\tau_i\})$ and define $\hat{\mathfrak{m}}^{+,i(1)}_{k,m+1}$ as the trimmed version of $\check{\mathfrak{m}}^{+,i}_{k,m+1}$.\footnote{Note that, if \eqref{m(1)} is violated, its left-hand side is necessarily greater than zero (so that the ``trimming'' operation achieves its intended goal). Reason is that the initial masses in $(\overline{\eps}_k-\tau_i,\overline{\eps}_k]$ are balanced (i.e., $\breve{\mathfrak{m}}^{+,i}_{k,m+1}(\overline{\eps}_k-\tau_i,\overline{\eps}_k] - \int_{(\overline{\eps}_k-\tau_i,\overline{\eps}_k]} (1 - \alpha) \dd \breve{\mathfrak{m}}^{+,i}_{k,m} = 0$) and the shift to $\overline{\eps}_k+\tau_i$ reduces the coefficient $(1 - \alpha)$ in the second term, which is the negative one.} Likewise, we need to restore validity of condition \eqref{eq:primal_M2 inequality_constr} in $\underline{\eps}_k-\tau_i$. Since we again want to trim -- i.e., reducing rather than raising -- measures (for reasons that will become clear in Step 3) and in this case the mass shift is leftward, we are well-advised to scan the values of $m$ from bottom to top. For all $k > N$ and all $k \leq N$ for which $\underline{\eps}_k = 0$, let $\hat{\mathfrak{m}}^{+,i}_{k,m} = \hat{\mathfrak{m}}^{+,i(1)}_{k,m}$, for all $m = k, \ldots, M$. For all other $k$'s, let $\hat{\mathfrak{m}}^{+,i}_{k,M} = \hat{\mathfrak{m}}^{+,i(1)}_{k,M}$; then, verify sequentially for $m = M-1, M-2 \ldots, k$ whether the condition
$$
\hat{\mathfrak{m}}^{+,i}_{k,m+1}(\{\underline{\eps}_k-\tau_i\}) - (1-(\underline{\eps}_k-\tau_i)) \; \hat{\mathfrak{m}}^{+,i(1)}_{k,m}(\{\underline{\eps}_k-\tau_i\}) = 0
$$
is satisfied; if yes, let $\hat{\mathfrak{m}}^{+,i}_{k,m} = \hat{\mathfrak{m}}^{+,i(1)}_{k,m}$, otherwise trim $\hat{\mathfrak{m}}^{+,i(1)}_{k,m}(\{\underline{\eps}_k-\tau_i\})$ to the value $\hat{\mathfrak{m}}^{+,i}_{k,m+1}(\{\underline{\eps}_k-\tau_i\})/(1-(\underline{\eps}_k-\tau_i)) \;
$ and define $\hat{\mathfrak{m}}^{+,i}_{k,m}$ as the trimmed version of $\hat{\mathfrak{m}}^{+,i(1)}_{k,m}$.
\item[Step 3.] [construction of $\{ \tilde{\mathfrak{m}}^{+,i}_{k,m} \}$] The trimming operation in Step 2 may have unbalanced some equalities in \eqref{eq:primal_M2_equality_constr}, i.e., it may be that
	$$
	\sum_{k=0}^m {m \choose k} \int_{[0,1]} \dd \hat{\mathfrak{m}}^{+,i}_{k,m} < 1
	$$
for some $m$. If so, re-gain balance by adding to measure $\hat{\mathfrak{m}}^{+,i}_{m,m}$ a suitable probabilistic mass concentrated in $\alpha=1$, while leaving other measures $\hat{\mathfrak{m}}^{+,i}_{k,m}$, $k \neq m$, unaltered. The so-obtained measures are $\tilde{\mathfrak{m}}^{+,i}_{k,m}$. Note that this operation preserves the validity of condition $\tilde{\mathfrak{m}}^{+,i}_{m,m+1} - (1-\alpha) \; \tilde{\mathfrak{m}}^{+,i}_{m,m} = 0$ (since we have altered measures $\hat{\mathfrak{m}}^{+,i}_{m,m}$ only in $\alpha = 1$ where coefficient $(1 - \alpha)$ is null),
so that $\{ \tilde{\mathfrak{m}}^{+,i}_{k,m} \}$ satisfies \eqref{eq:primal_M2 inequality_constr} besides \eqref{eq:primal_M2_equality_constr}.
\end{itemize}
Since the mass shift in Step 1 has only moved masses into points where $\varphi_{k,\tau_i}(\alpha)$ is bigger, this mass shift can only increase $\sum_{k=0}^N {N \choose k} \int_{[0,1]} \varphi_{k,\tau_i}(\alpha) \; \dd \breve{\mathfrak{m}}^{+,i}_{k,N}$; moreover, any trimming and re-balancing in Steps 2 and 3 involve vanishing masses as $\tau_i \to 0$. Therefore,
\begin{equation} \label{eq:lim_cost_phi_m<=lim_cost_phi_m_tilde-2}
\lim_{i\to \infty} \sum_{k=0}^N {N \choose k} \int_{[0,1]} \varphi_{k,\tau_i}(\alpha) \; \dd \breve{\mathfrak{m}}^{+,i}_{k,N} \leq \lim_{i\to \infty} \sum_{k=0}^N {N \choose k} \int_{[0,1]} \varphi_{k,\tau_i}(\alpha) \; \dd \tilde{\mathfrak{m}}^{+,i}_{k,N}.
\end{equation}
On the other hand, by construction, $\varphi_{k,\tau_i}(\alpha) = \One{\alpha \in [0,\underline{\eps}_k) \cup (\overline{\eps}_k,1]}$ if $\underline{\eps}_k = 0$ and $\overline{\eps}_k = 1$, while, for $\underline{\eps}_k \neq 0$ and/or $\overline{\eps}_k \neq 1$, $\varphi_{k,\tau_i}(\alpha) \neq \One{\alpha \in [0,\underline{\eps}_k) \cup (\overline{\eps}_k,1]}$ only occurs where $\tilde{\mathfrak{m}}^{+,i}_{k,N}$ is null. Hence,
$$
\sum_{k=0}^N {N \choose k} \int_{[0,1]} \varphi_{k,\tau_i}(\alpha) \; \dd \tilde{\mathfrak{m}}^{+,i}_{k,N} = \sum_{k=0}^N {N \choose k} \int_{[0,1]} \One{\alpha \in [0,\underline{\eps}_k)\cup(\overline{\eps}_k,1]} \; \dd \tilde{\mathfrak{m}}^{+,i}_{k,N},
$$
which, substituted in \eqref{eq:lim_cost_phi_m<=lim_cost_phi_m_tilde-2}, gives \eqref{eq:lim_cost_phi_m<=lim_cost_One_m_tilde-2}.
This concludes the proof of (B). \qed
\item[] \textbf{Proof of (C) in \eqref{equality-duality-Theorem2}}: The proof of point (C) follows, \emph{mutatis mutandis}, that of Theorem \ref{th:compression_1}. Here, the Lagrangian can be re-written as
\begin{eqnarray*} 
\mathfrak{L} & = & \sum_{m=0}^M \lambda_m + \sum_{k=0}^M \sum_{m=k}^M \int_{[0,1]} \Bigg[ {m \choose k} \One{\alpha \in [0,\underline{\eps}_k) \cup (\overline{\eps}_k,1]} \One{m=N} + (1-\alpha) \mu_{k,m}(\alpha) \One{m \neq M} \nonumber \\
&& - \lambda_m {m \choose k} - \mu_{k,m-1}(\alpha) \One{m \neq k} \Bigg] \; \dd \mathfrak{m}^+_{k,m},
\end{eqnarray*}
and the argument is closed as in Theorem \ref{th:compression_1}, in which derivation one has only to substitute $\mu^+_{k,m}$ with $\mu_{k,m}$. \qed
\end{itemize}
Next we want to evaluate $\gamma^\ast_M$ for problem \eqref{eq:dual_M-Theorem2}. \\
\\
In the present context, the constraints can be rewritten more explicitly as in \eqref{spataffiata} with the only change that all functions $\mu^+_{k,m}$, for any index $m,k$, need not be positive, so that the superscript ``+'' must be dropped everywhere and, moreover, the indicator
function $\One{\alpha \in (\eps_k,1]}$ now becomes $\One{\alpha \in [0,\underline{\eps}_k) \cup (\overline{\eps}_k,1]}$. The discussion that follows \eqref{spataffiata} remains the same with only a major difference: the discussion leading up to the conclusion that one can add the constraints $\lambda_m = 0 \quad \text{for } m = N+1,\ldots, M$ ceases to be valid here because it was grounded on the fact that functions $\mu^+_{k,m}$ were positive, while here
functions $\mu_{k,m}$ do not undergo this requirement. Hence, one comes to the following problem:
\begin{subequations} \label{eq:dual_simple-2}
\begin{eqnarray}
\inf_{\lambda_m, \; m=0,\ldots,M} & & \sum_{m=0}^{M}  \lambda_m  \label{eq:dual_simple_cost-2} \\
\textrm{subject to:} & & {N \choose k} (1-\alpha)^{N-k} \One{\alpha \in [0,\underline{\eps}_k) \cup (\overline{\eps}_k,1]} \leq  \sum_{m=k}^M \lambda_m {m \choose k} (1-\alpha)^{m-k}, \nonumber \\
& & \forall \alpha \in [0,1], \; k = 0,\ldots, N \label{eq:dual_simple_constr_N-2}
\\
& & 0 \leq \sum_{m=k}^M \lambda_m {m \choose k} (1-\alpha)^{m-k}, \quad
\forall \alpha \in [0,1], \nonumber \\
& & k = N+1,\ldots,M  \label{eq:dual_simple_constr_M-2}
\end{eqnarray}
\end{subequations}
and the claim is that it returns the same optimal value $\gamma^\ast_M$ as \eqref{eq:dual_M-Theorem2}. The fact that the optimal value of \eqref{eq:dual_simple-2} is not bigger than the optimal value of \eqref{eq:dual_M-Theorem2} is obvious. The converse result that the optimal value of \eqref{eq:dual_M-Theorem2} is not bigger than the optimal value of \eqref{eq:dual_simple-2} is proven by showing that for any feasible point of \eqref{eq:dual_simple-2} one can find a feasible point of \eqref{eq:dual_M-Theorem2} that attains the same value, which is shown in the following.
\begin{itemize}
\item[] Consider a feasible point of \eqref{eq:dual_simple-2}.  To find the sought feasible point of \eqref{eq:dual_M-Theorem2}, consider the same $\lambda_m$ as those for the feasible point of \eqref{eq:dual_simple-2} and augment them with the functions $\mu_{k,m}$ defined as follows. Consider the inequalities in \eqref{spataffiata} where the $\mu^+_{k,m}$'s are replaced by the $\mu_{k,m}$'s as it must be in the present context. The inequality for $k=M$ and $m=M$ is satisfied in view of \eqref{eq:dual_simple_constr_M-2} for $k=M$. Next, satisfy the inequalities corresponding to $k=0,\ldots,M-1$, $m = \max\{k,N\}+1,\ldots, M$ with equality starting
from bottom and then proceeding upward. This gives:
    \begin{eqnarray}
    \label{eq:moose-2-1}
    \mu_{k,M-1}(\alpha) & = & -\lambda_M {M \choose k}, \nonumber \\
    \mu_{k,M-2}(\alpha) & = & -\lambda_{M-1} {M-1 \choose k} - \lambda_M {M \choose k}(1-\alpha) \nonumber \\
    & \vdots & \\
    \mu_{k,\max\{k,N\}}(\alpha) & = & -\sum_{j=\max\{k,N\}+1}^M \lambda_j {j \choose k}(1-\alpha)^{j-\max\{k,N\}-1}. \nonumber
    \end{eqnarray}
    Note that for $k=N+1,\ldots,M-1$, this choice also satisfies the inequalities for $m=k$ in view of \eqref{eq:dual_simple_constr_M-2}. The expression of $\mu_{k,m}$ over $[0,1)$ for the remaining indexes are instead defined as in Theorem \ref{th:compression_1} by equations \eqref{eq:moose}. We show that choices \eqref{eq:moose-2-1} and \eqref{eq:moose} satisfy over $[0,1)$ the remaining inequalities (those for $k=0,\ldots,N$ and $m = N$). Substituting $\mu_{k,N-1}(\alpha) = \sum_{j=k}^{N-1}\frac{\lambda_j {j \choose k}}{(1-\alpha)^{N-j}}$ and $\mu_{k,N}(\alpha) = -\sum_{j=N+1}^M\lambda_j {j \choose k}(1-\alpha)^{j-N-1}$ in these inequalities gives
    \begin{equation} \label{eq:central_ineq_N-1-2}
    {N \choose k} \One{\alpha \in [0,\underline{\eps}_k) \cup (\overline{\eps}_k,1]} \leq \sum_{j=k}^{N} \lambda_j {j \choose k} \frac{1}{(1-\alpha)^{N-j}} + \sum_{j=N+1}^M\lambda_j {j \choose k}(1-\alpha)^{j-N}.
    \end{equation}
    Equation \eqref{eq:central_ineq_N-1-2} is satisfied because it coincides with \eqref{eq:dual_simple_constr_N-2}. As for $\alpha = 1$, note that functions $\mu_{k,m}$ defined in \eqref{eq:moose} tend to infinity when $\alpha \to 1$. This poses a problem of existence for $\alpha = 1$, which, however, can be easily circumvented as in Theorem \ref{th:compression_1} by truncating the functions $\mu_{k,m}$ in the interval $\alpha \in [1 \! - \! \rho,1]$ at the value $\mu_{k,m}(1 \! - \! \rho)$ to obtain
    $$
    \mu^{\rho}_{k,m}(\alpha) =
    \begin{cases}
    \mu_{k,m}(\alpha) & \alpha < 1-\rho \\
    \mu_{k,m}(1 \! - \! \rho) & \alpha \geq 1-\rho,
    \end{cases}
    $$
    and noting that all the inequalities are satisfied over $[0,1]$ if $\rho$ is chosen small enough.
\end{itemize}
Summarizing the results so far, we have
$$
\Pr \Big\{ \bphi_N < \underline{\eps}_\bk \text{ or } \bphi_N > \overline{\eps}_\bk \Big\} \stackrel{\eqref{eq:Pr_ch_comp<=gamma-2}}{\leq}\gamma \stackrel{\eqref{gammaM>gamma-Theorem2}}{\leq} \gamma_M \stackrel{\eqref{equality-duality-Theorem2}}{=} \gamma^\ast_M,
$$
where $\gamma^\ast_M$ is given by \eqref{eq:dual_simple-2}. Choose now $M =
4N$. The proof of the theorem is concluded by showing that $\gamma^\ast_{4N} \leq \delta$, which is what we do next. \\
\\
Take $\lambda_m = \frac{\delta}{2N}$ for $m=0,\ldots,N-1$, $\lambda_N
= 0$ and $\lambda_m = \frac{\delta}{6N}$ for $m=N+1,\ldots,4N$, so that $\sum_{m=0}^{4N} \lambda_m = \delta$. Inequalities \eqref{eq:dual_simple_constr_M-2}  are clearly satisfied because all $\lambda_m$ are non-negative. The inequalities in \eqref{eq:dual_simple_constr_N-2} for $\alpha = 1$ are satisfied because they become $0 \leq \lambda_k$ (for $k
< N$, the term $(1-\alpha)^{N-k}$ in the left-hand side of \eqref{eq:dual_simple_constr_N-2} annihilates, while for $k=N$ it is the indicator function that annihilates because $\overline{\eps}_N = 1$). For $\alpha \in [0,1)$, \eqref{eq:dual_simple_constr_N-2} can be rewritten as
\begin{eqnarray*}
\One{\alpha \in [0,\underline{\eps}_k) \cup (\overline{\eps}_k,1]} & \leq & \frac{\delta}{2N}\sum_{m=k}^{N-1} \frac{\binom{m}{k}}{\binom{N}{k}} (1-\alpha)^{-(N-m)} + \frac{\delta}{6N}\sum_{m=N+1}^{4N} \frac{\binom{m}{k}}{\binom{N}{k}} (1-\alpha)^{m-N}, \\
& & \quad k = 0,\ldots, N-1, \\
\One{\alpha \in [0,\underline{\eps}_N)} & \leq & \frac{\delta}{6N}\sum_{m=N+1}^{4N} \binom{m}{N} (1-\alpha)^{m-N}, \quad k = N,
\end{eqnarray*}
and are satisfied in view of the definition of $\underline{\eps}_k$ and $\overline{\eps}_k$, see \eqref{underline_epsilonk} and \eqref{overline_epsilonk}. This concludes the proof. \qed 

\begin{remark} \label{rmk:weaker_conditions_for_th2}
Theorem \ref{th:compression_2} preserves its validity under the following slightly weaker assumption than the \emph{non-concentrated mass} Property \ref{no-concentrated-mass} (while maintaining the \emph{preference} and \emph{non-associativity} Properties \ref{preference} and \ref{non-associativity}): \emph{with probability $1$, if for some $n$ an example $z$ appears in $\co(\bz_1,\ldots,\bz_n)$, then it appears in $\co(\bz_1,\ldots,\bz_n)$ as many times as it does in $\ms(\bz_1,\ldots,\bz_n)$.} 

Clearly, the \emph{non-concentrated mass} property implies the assumption stated here because the \emph{non-concentrated mass} property gives that any multiset does not have repetitions with probability $1$. To instead prove that Theorem \ref{th:compression_2} preserves its validity under this extension, one proceeds similarly to the last part of the proof of Theorem \ref{th:compression_1} (where the temporary condition of non-concentrated mass was removed). Precisely, define $\bz'_i$, $\co'$, $\bphi'_N $, and $\bk'$ as it was done there. Then, the \emph{preference} property of $\co'$ holds as shown in the proof of Theorem \ref{th:compression_1}, while the \emph{non-associativity} property of $\co'$ follows from the \emph{non-associativity} of $\co$ along a similar argument. Moreover, $\bz'_i$ satisfies the \emph{non-concentrated mass} property by construction. Using Theorem \ref{th:compression_2} for $\bz'_i$, $\co'$, $\bphi'_N $, and $\bk'$ now gives
$$
\Pr \{ \underline{\eps}_{\bk'} \leq \bphi'_N \leq \overline{\eps}_{\bk'} \} \geq 1-\delta.
$$
Then, to prove Theorem \ref{th:compression_2} for the initial setting, one has to show that $\bk = \bk'$ and $\bphi_N = \bphi'_N$, with probability $1$. $\bk = \bk'$ is obviously true. As for $\bphi_N = \bphi'_N$, in the proof of Theorem \ref{th:compression_1}, it was already noted that $\co(\co(z_1,\ldots,z_N),z_{N+1}) \neq \co(z_1,\ldots,z_N)$ implies that $\co'(\co'(z'_1,\ldots,z'_N),z'_{N+1}) \neq \co'(z'_1,\ldots,z'_N)$. On the other hand, the assumption introduced in the present remark straightforwardly gives that $\co'(\co'(z'_1,\ldots,z'_N),z'_{N+1}) \neq \co'(z'_1,\ldots,z'_N)$ implies that  $\co(\co(z_1,\ldots,z_N),z_{N+1}) \neq \co(z_1,\ldots,z_N)$, except for at most a probability zero event. This gives $\bphi_N = \bphi'_N$, and we therefore have
$$
\Pr \{ \underline{\eps}_{\bk} \leq \bphi_N \leq \overline{\eps}_{\bk} \} = \Pr \{ \underline{\eps}_{\bk'} \leq \bphi'_N \leq \overline{\eps}_{\bk'} \} \geq 1-\delta.
$$
\qed
\end{remark}

\subsection{Proof of Proposition \ref{th:bounds4asympt}} \label{proof_bounds4asympt}

We first establish that $\frac{k}{N} \leq \eps_k \leq \overline{\eps}_k$. The result is obvious for $k=N$ since $\eps_k = 1 = \overline{\eps}_k$. For $k < N$, recall that $\eps_k$ is the unique solution to $\Psi_{k,\delta}(\alpha) = 1$, while $\overline{\eps}_k$ is the solution to $\tilde{\Psi}_{k,\delta}(\alpha) = 1$ bigger than $\frac{k}{N}$. Notice that $\tilde{\Psi}_{k,\delta}(\alpha) = \frac{1}{2} \Psi_{k,\delta}(\alpha) + \nu_{k,\delta}(\alpha)$, where $\nu_{k,\delta}(\alpha) = \frac{\delta}{6N} \sum_{m= N+1}^{4N} \frac{{m \choose k}}{{N \choose k}} (1-\alpha)^{m-N}$. Using the same argument given in Appendix \ref{Appendix_Psi_tilde} to show that $\tilde{\Psi}_{k,\delta}(\frac{k}{N}) < 1$, it is easy to prove that $\nu_{k,\delta}(\frac{k}{N}) \leq \frac{\delta}{2} < \frac{1}{2}$, which, along with the fact that $\nu_{k,\delta}(\alpha)$ is strictly decreasing over the interval of interest $(0,1)$, yields $\nu_{k,\delta}(\alpha) < \frac{1}{2}$ for $\alpha \in [\frac{k}{N},1)$. Since $\eps_k > \frac{k}{N}$ (remember that $\Psi_{k,\delta}(\alpha)$ is monotonically increasing and the argument in Appendix \ref{Appendix_Psi_tilde} can be used again to show that $\Psi_{k,\delta}(\frac{k}{N}) \leq \delta < 1$), we have that 
$$
\begin{array}{rcll}
	\tilde{\Psi}_{k,\delta}(\eps_k) & = & \frac{1}{2} \Psi_{k,\delta}(\eps_k) + \nu_{k,\delta}(\eps_k) & \\
	& = & \frac{1}{2} + \nu_{k,\delta}(\eps_k) & \quad (\text{since } \Psi_{k,\delta}(\eps_k) = 1) \\
	& < & 1. & \quad (\text{since } \nu_{k,\delta}(\eps_k) < \frac{1}{2})
\end{array}
$$
This, along with the fact that $\tilde{\Psi}_{k,\delta}(\alpha)$ is first decreasing and then increasing, gives $\eps_k \leq \overline{\eps}_k$. \\
\\
We next prove equations \eqref{eq:bounds4asympt_up} and \eqref{eq:bounds4asympt_low}. \\
\\
For $k=0,1,\ldots,N-1$, consider the equation (note that the left-hand side is not exactly equal to function $\tilde{\Psi}_{k,\delta}$ in \eqref{Psi-tilde} because a ``$2$'' in the first term has been substituted with a ``$6$'') 
\begin{equation} \label{eq:var_tilde_psi_eq}
	\frac{\delta}{6N} \sum_{m=k}^{N-1} \frac{{m \choose k}}{{N \choose k}} (1-\alpha)^{m-N} + \frac{\delta}{6N} \sum_{m=N+1}^{4N} \frac{{m \choose k}}{{N \choose k}} (1-\alpha)^{m-N} = 1.
\end{equation}
For every $\alpha$ in $(-\infty,1)$, the left-hand side is no bigger than $\tilde{\Psi}_{k,\delta}(\alpha)$ and, as a function of $\alpha$, it has a behavior similar to $\tilde{\Psi}_{k,\delta}$ (i.e., it is first decreasing and then increasing, diverging to $+\infty$ for both $\alpha \to -\infty$ and $\alpha \to 1$). Thus, \eqref{eq:var_tilde_psi_eq} admits exactly two solutions in $(-\infty,1)$, one smaller than $k/N$ and one bigger than $k/N$, and these solutions give a lower bound and an upper bound to $\underline{\eps}_k$ and $\overline{\eps}_k$, respectively. Since $(1-\alpha)^{N-k} \neq 0$ over $(-\infty,1)$, equation \eqref{eq:var_tilde_psi_eq} is equivalent to
\begin{equation} 
	\label{eq:var_tilde_psi_eq_2}
	\frac{\delta}{6N} \sum_{m=k}^{4N} {m \choose k} (1-\alpha)^{m-k} =  \left(1+\frac{\delta}{6N}\right) {N \choose k} (1-\alpha)^{N-k},
\end{equation}
which is obtained by multiplying both sides of \eqref{eq:var_tilde_psi_eq} by ${N \choose k} (1-\alpha)^{N-k}$ and then adding on the left and on the right the term $\frac{\delta}{6N} {N \choose k} (1-\alpha)^{N-k}$. Notice that \eqref{eq:var_tilde_psi_eq_2} is meaningful for $k=N$ too, and is indeed equivalent to the equation defining $\underline{\eps}_N$, i.e., $\tilde{\Psi}_{N,\delta}(\alpha) = 1$ (for $k=N$, \eqref{eq:var_tilde_psi_eq_2} admits only one solution, which coincides with $\underline{\eps}_N$). Thus, summarizing, the solution to \eqref{eq:var_tilde_psi_eq_2} smaller than $k/N$ provides a lower bound to $\underline{\eps}_k$ for all $k=0,1,\ldots,N$, while the solution greater than $k/N$, which exists for $k=0,1,\ldots,N-1$, provides an upper bound to $\overline{\eps}_k$. \\
\\
We now rewrite \eqref{eq:var_tilde_psi_eq_2} in a form that is better suited to obtain an explicit evaluation of its solutions. \\
\\
For $k=0$, the summation in the left-hand side of \eqref{eq:var_tilde_psi_eq_2} is 
$$
\sum_{m=0}^{4N} (1-\alpha)^m = \frac{1-(1-\alpha)^{4N+1}}{\alpha},
$$
and, noticing that for a generic $k$ it holds that
$$
\sum_{m=k}^{4N} {m \choose k} (1-\alpha)^{m-k} = \frac{(-1)^k}{k!} \cdot \frac{\dd^k}{\dd \alpha^k} \left[ \sum_{m=0}^{4N} (1-\alpha)^m \right],
$$
a cumbersome, but straightforward, computation gives the following general formula for the left-hand side of \eqref{eq:var_tilde_psi_eq_2}
\begin{equation} \label{eq:finite_series_via_beta}
\frac{\delta}{6N} \sum_{m=k}^{4N} {m \choose k} (1-\alpha)^{m-k} = \frac{\delta}{6N} \frac{1-\sum_{i=0}^k {4N+1 \choose i} \alpha^i (1-\alpha)^{4N+1-i}}{\alpha^{k+1}}.
\end{equation}
Substituting in \eqref{eq:var_tilde_psi_eq_2} and multiplying on the left and on the right by $N \alpha^{k+1}$, we then obtain
\begin{equation} \label{eq:var_tilde_psi_eq_3}
	\frac{\delta}{6} \left( 1-\sum_{i=0}^k {4N+1 \choose i} \alpha^i (1-\alpha)^{4N+1-i} \right) =  \left(1+\frac{\delta}{6N}\right) N {N \choose k} \alpha^{k+1} (1-\alpha)^{N-k},
\end{equation}
which, in view of the multiplication by $N \alpha^{k+1}$, is equivalent to \eqref{eq:var_tilde_psi_eq_2} for $\alpha \neq 0$. In what follows, the usage of equation \eqref{eq:var_tilde_psi_eq_3} to investigate the solutions to \eqref{eq:var_tilde_psi_eq_2} will be limited to the interval $(0,1)$ (and, hence, the condition $\alpha \neq 0$ becomes irrelevant). As is clear, this is always the case for the solution greater than $k/N$, which upper bounds $\overline{\eps}_k$, while for the solution lower than $k/N$, which lower bounds $\underline{\eps}_k$, a sufficient condition for this solution to be in $(0,1)$ is that 
\begin{equation}
\label{k>log}
k > \ln(3/\delta), 
\end{equation}
as shown in the following derivation.
\begin{itemize}
	\item[] Start by noticing that
	$$
	k > \ln(3/\delta) \Rightarrow \frac{\delta}{6} \ee^{k+1} > 1 \Rightarrow \frac{\delta}{6} \cdot \frac{4^{k+1}}{k+1} > 1, 
	$$ 
	where the latter implication follows from $4^x/x > \ee^x$ for $x \geq 0$. Since $\frac{4N+1}{N+\frac{\delta}{6}} > 4$ and $\frac{4N-i}{N-i} \geq 4$ for $i=0,1,\ldots,k-1$, the previous inequality implies that 
	$$
	\frac{\delta}{6} \cdot \frac{4N+1}{N+\frac{\delta}{6}} \cdot \frac{4N}{N} \cdots \frac{4N+1-k}{N+1-k} \cdot \frac{1}{k+1} > 1,
	$$
	which, re-arranging the terms and dividing the left-hand and right-hand sides by $k!$, yields
	$$
	\frac{\delta}{6N} \frac{(4N+1) \cdots (4N+1-k)}{(k+1)\cdot k!} > \left(1+\frac{\delta}{6N}\right) \frac{N \cdots (N+1-k)}{k!},
	$$
	or, in a more compact form, $\frac{\delta}{6N} {4N+1 \choose k+1}  > \left(1+\frac{\delta}{6N}\right) {N \choose k}$. Using in this last expression the hockey-stick identity (which asserts that ${4N+1 \choose k+1} = \sum_{m=k}^{4N} {m \choose k}$), one concludes that 
	$$
	\frac{\delta}{6N} \sum_{m=k}^{4N} {m \choose k}  > \left(1+\frac{\delta}{6N}\right) {N \choose k}.
	$$
	This means that the left-hand side of \eqref{eq:var_tilde_psi_eq_2} evaluated at $\alpha = 0$ is larger than the right-hand side evaluated at $\alpha = 0$, and this implies that the two solutions of \eqref{eq:var_tilde_psi_eq_2} (and, thereby, those of \eqref{eq:var_tilde_psi_eq_3}) are both within the interval $(0,1)$.
\end{itemize}
Apart from the factor $\frac{\delta}{6}$, the left-hand side of \eqref{eq:var_tilde_psi_eq_3} is a so-called Beta($k+1$,$4N+1-k$) cumulative distribution function and, as such, it takes value $0$ for $\alpha = 0$ and is strictly increasing in $(0,1)$ converging to the value $1$. 
\begin{figure}[t]
	\centering
	\includegraphics[width=0.9\columnwidth]{./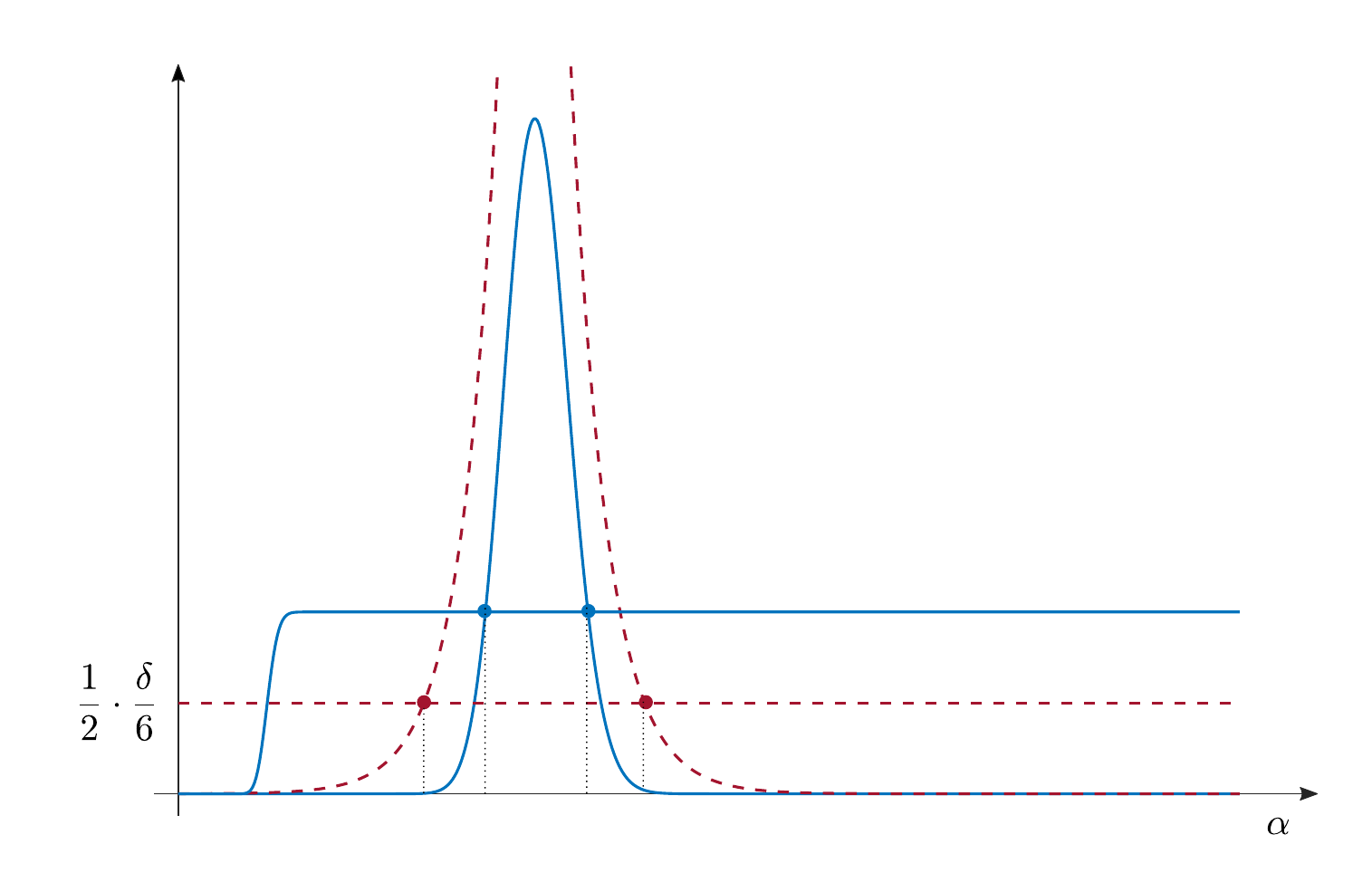}
	\caption{Graph of the left-hand and right-hand sides of \eqref{eq:var_tilde_psi_eq_3} (solid blue lines) and of the curves that are used to obtain suitable lower and upper bounds to the solutions to \eqref{eq:var_tilde_psi_eq_3} (dashed red lines).}
	\label{fig:epsLUbounds_graph}
\end{figure}
The right-hand side, instead, for $k=0,\ldots,N-1$ takes value $0$ for both $\alpha = 0$ and $\alpha = 1$ and is first increasing and then decreasing, while for $k=N$ is $0$ for $\alpha = 0$ only and it is increasing. A graphical illustration of the typical trend for the left-hand and right-hand sides of \eqref{eq:var_tilde_psi_eq_3} for $\ln(3/\delta) < k \leq N -1$ is given in Figure \ref{fig:epsLUbounds_graph} (solid blue lines). Given this state of things, it is clear that if the left-hand side of \eqref{eq:var_tilde_psi_eq_3}, call it $L(\alpha)$, is replaced by a function that lies below $L(\alpha)$, while the right-hand side, call it $R(\alpha)$, is replaced by a monotonically decreasing function that stays above $R(\alpha)$, the so-obtained equation has the property that any solution to it upper bounds the solution to \eqref{eq:var_tilde_psi_eq_3} bigger than $k/N$, which in turn upper bounds $\overline{\eps}_k$. Similarly, keeping the same replacement for the left-hand side of \eqref{eq:var_tilde_psi_eq_3}, but this time substituting the right-hand side $R(\alpha)$ with a monotonically increasing function that stays above $R(\alpha)$, an equation is obtained whose solutions provide lower-bounds to $\underline{\eps}_k$. In the following, the sought upper bound to $\overline{\eps}_k$ and lower bound to $\underline{\eps}_k$ will be obtained by replacing $L(\alpha)$ with a suitable constant function and the right-hand side $R(\alpha)$ with a decreasing exponential and an increasing exponential function, respectively. See again Figure \ref{fig:epsLUbounds_graph} for a graphical illustration (dashed red lines). \\
\\
$\diamond$ \emph{Upper bound to $\overline{\eps}_k$} \\
\\
Consider \eqref{eq:var_tilde_psi_eq_3} for $k=0,1,\ldots,N-1$. Start with the left-hand side and notice that   
\begin{equation} \label{eq:below_func}
	\frac{\delta}{12} \leq \frac{\delta}{6} \left( 1-\sum_{i=0}^k {4N+1 \choose i} \alpha^i (1-\alpha)^{4N+1-i} \right) 
\end{equation}
for $\alpha \geq \frac{k+1}{4N+2}$. As a matter of fact, $\frac{k+1}{4N+2}$ is the mean of the Beta distribution having cumulative distribution function $1-\sum_{i=0}^k {4N+1 \choose k} \alpha^i (1-\alpha)^{4N+1-i}$ and the mean of this Beta distribution is greater than its median, see \cite{PaytonYoungYoung1989}. \\
As for the right-hand side of \eqref{eq:var_tilde_psi_eq_3}, pick any number $c > 1$ and notice that\footnote{This computation is insipred by \cite{Alamoetal2015}.} 
\begin{eqnarray*}
	\lefteqn{\left(1+\frac{\delta}{6N}\right) N {N \choose k} \alpha^{k+1} (1-\alpha)^{N-k}} \\
	& \leq & \left(1+\frac{\delta}{6N}\right) (k+1){N+1 \choose k+1} \alpha^{k+1}(1-\alpha)^{N+1-(k+1)} \\
	& \leq & \frac{7}{6}(k+1)\sum_{i=0}^{k+1} {N+1 \choose i}\alpha^i(1-\alpha)^{N+1-i} \\
	& \leq & \frac{7}{6}(k+1) c^{k+1} \sum_{i=0}^{k+1} {N+1 \choose i} \left(\frac{\alpha}{c}\right)^i(1-\alpha)^{N+1-i} \\
	& \leq & \frac{7}{6}(k+1) c^{k+1} \sum_{i=0}^{N+1} {N+1 \choose i} \left(\frac{\alpha}{c}\right)^i(1-\alpha)^{N+1-i} \\
	& = & \frac{7}{6}(k+1) (1-(1-c))^{k+1} \left(1-\alpha \frac{c-1}{c} \right)^{N+1} \\
	& \leq & \frac{7}{6}(k+1) \ee^{-(1-c)(k+1)} \ee^{-\alpha \frac{c-1}{c} (N+1)} \\
	& \leq & \frac{7}{6}(k+1) \ee^{-(1-c)(k+1)} \ee^{- \alpha \frac{c-1}{c} N},
\end{eqnarray*}
where the second-last inequality derives from relation $1-x \leq \ee^{-x}$. Taking now 
$$
c = 1+\frac{\sqrt{ \ln(k+1)+\ln \frac{14}{\delta} } }{\sqrt{k+1}},
$$
one obtains
\begin{eqnarray} 
	\lefteqn{ \left(1+\frac{\delta}{6N}\right) N {N \choose k} \alpha^{k+1} (1-\alpha)^{N-k} } \nonumber \\
	& \leq & \frac{7}{6}(k+1) \ee^{\sqrt{ \ln(k+1)+\ln \frac{14}{\delta} } \sqrt{k+1}} \ee^{-\alpha \frac{\sqrt{ \ln(k+1)+\ln \frac{14}{\delta} } \cdot N}{\sqrt{k+1}+\sqrt{ \ln(k+1)+\ln \frac{14}{\delta} } } }, \label{eq:dec_above_func}
\end{eqnarray}
where the right-hand side is a monotonically decreasing function of $\alpha$. Using \eqref{eq:below_func} and \eqref{eq:dec_above_func} together, in view of the argument given after \eqref{eq:var_tilde_psi_eq_3} we have that the solution to 
\begin{equation} \label{eq:below_dec_above}
	\frac{\delta}{12} = \frac{7}{6}(k+1) \ee^{\sqrt{ \ln(k+1)+\ln \frac{14}{\delta} } \sqrt{k+1}} \ee^{-\alpha \frac{\sqrt{ \ln(k+1)+\ln \frac{14}{\delta} } \cdot N}{\sqrt{k+1}+\sqrt{ \ln(k+1)+\ln \frac{14}{\delta} } } }
\end{equation}
upper bounds $\overline{\eps}_k$ as long as the solution turns out to be no smaller than $\frac{k+1}{4N+2}$ (which is the condition to ensure that \eqref{eq:below_func} is indeed true). Solving \eqref{eq:below_dec_above} for $\alpha$ gives
$$
\overline{\alpha} = \frac{k+1}{N} + 2 \frac{\sqrt{k+1}\sqrt{ \ln(k+1)+\ln \frac{14}{\delta} } }{N} + \frac{ \ln(k+1)+\ln \frac{14}{\delta} }{N},
$$
which is always greater than $\frac{k+1}{4N+2}$. Thus, it follows that, for $k=0,1,\ldots,N-1$,
$$
\overline{\eps}_k \leq \frac{k+1}{N} + 2 \frac{\sqrt{k+1}\sqrt{ \ln(k+1)+\ln \frac{14}{\delta} } }{N} + \frac{ \ln(k+1)+\ln \frac{14}{\delta} }{N}.
$$
Moreover, the bound turns out to be valid for $k=N$ too, since $\overline{\eps}_N = 1$, while the right-hand side of the previous inequality is greater than $1$ for $k=N$. Using the fact that $\sqrt{ \ln(k+1) + \ln \frac{14}{\delta} } \leq \sqrt{\ln(k+1)} + \sqrt{\ln(14)} + \sqrt{\ln \frac{1}{\delta}} $ in the previous expression, and re-arranging the terms, one obtains: 
$$
\overline{\eps}_k \leq \frac{k}{N} + \frac{2\sqrt{k+1} \left(\sqrt{\ln(k+1)} + \sqrt{\ln(14)} \right) + \ln(k+1) + \ln(14) + 1 }{N} + 2\frac{\sqrt{k+1}\sqrt{\ln \frac{1}{\delta}}}{N} + \frac{\ln \frac{1}{\delta}}{N}.
$$
It is easy to verify that $\ln(k+1) + \ln(14) + 1  < 4 \sqrt{k+1}$; using this fact in the numerator of the second term together with $\sqrt{\ln(14)} + 2 < 4$ yields
$$
\overline{\eps}_k \leq \frac{k}{N} + 2\frac{\sqrt{k+1} }{N}\left(\sqrt{\ln(k+1)} + 4 \right) + 2\frac{\sqrt{k+1}\sqrt{\ln \frac{1}{\delta}}}{N} + \frac{\ln \frac{1}{\delta}}{N},
$$
which is the bound in \eqref{eq:bounds4asympt_up}. \\
\\
$\diamond$ \emph{Lower bound to $\underline{\eps}_k$} \\
\\
Consider \eqref{eq:var_tilde_psi_eq_3} again, this time for $k > \ln(\frac{3}{\delta})$ as given in \eqref{k>log} (which, as we have seen, ensures that the solution lower than $k/N$ takes value in the interval $(0,1)$). The right-hand side of \eqref{eq:var_tilde_psi_eq_3} is upper bounded as follows ($c$ is any number bigger than $1$):
\begin{eqnarray*}
	\lefteqn{\left(1+\frac{\delta}{6N}\right) N {N \choose k} \alpha^{k+1} (1-\alpha)^{N-k}} \\
	& \leq & \left(1+\frac{\delta}{6N}\right)(k+1) {N+1 \choose k+1}\alpha^{k+1}(1-\alpha)^{N+1-(k+1)} \\
	& \leq & \frac{7}{6}(k+1)\sum_{i=k+1}^{N+1} {N+1 \choose i}\alpha^i(1-\alpha)^{N+1-i} \\
	& \leq & \frac{7}{6}(k+1) \frac{1}{c^{k+1}} \sum_{i=k+1}^{N+1} {N+1 \choose i} (\alpha c)^i(1-\alpha)^{N+1-i} \\
	& \leq & \frac{7}{6}(k+1) \frac{1}{c^{k+1}} \sum_{i=0}^{N+1} {N+1 \choose i} (\alpha c)^i(1-\alpha)^{N+1-i} \\	
	& = & \frac{7}{6}(k+1) \frac{\big(1+\alpha(c-1)\big)^{N+1}}{\big(1+(c-1)\big)^{k+1}} \\
	& \leq & \frac{7}{6}(k+1) \ee^{\alpha(c-1)(N+1)} \ee^{-\left(c-1-\frac{(c-1)^2}{2}\right)(k+1)},
\end{eqnarray*}
where the last inequality derives from $\ee^{x-\frac{x^2}{2}} \leq 1+x \leq \ee^{x}$ for $x \geq 0$. With the choice 
$$
c = 1+\frac{\sqrt{\ln(k+1)+\ln \frac{14}{\delta}}}{\sqrt{k+1}},
$$
we now obtain
\begin{eqnarray} 
	\lefteqn{ \left(1+\frac{\delta}{6N}\right) N {N \choose k} \alpha^{k+1} (1-\alpha)^{N-k} } \nonumber \\
	& \leq & \frac{7}{6} (k+1) \ee^{\alpha \frac{\sqrt{\ln(k+1)+\ln \frac{14}{\delta}} \cdot (N+1)}{\sqrt{k+1}}} \ee^{-\sqrt{\ln(k+1)+\ln \frac{14}{\delta}}\sqrt{k+1}+\frac{1}{2}\left(\ln(k+1)+\ln \frac{14}{\delta}\right)} , \label{eq:inc_above_func}
\end{eqnarray}
where the right-hand side is an increasing function of $\alpha$. Using \eqref{eq:below_func} and \eqref{eq:inc_above_func} together, in view of the argument given after \eqref{eq:var_tilde_psi_eq_3} we have that the solution to 
\begin{equation} \label{eq:below_inc_above}
	\frac{\delta}{12} = \frac{7}{6} (k+1) \ee^{\alpha \frac{\sqrt{\ln(k+1)+\ln \frac{14}{\delta}} \cdot (N+1)}{\sqrt{k+1}}} \ee^{-\sqrt{\ln(k+1)+\ln \frac{14}{\delta}}\sqrt{k+1}+\frac{1}{2}\left(\ln(k+1)+\ln \frac{14}{\delta}\right)} 
\end{equation}
lower bounds $\underline{\eps}_k$ as long as the solution turns out to be no smaller than $\frac{k+1}{4N+2}$ (which is the condition to ensure that \eqref{eq:below_func} is indeed true). Solving \eqref{eq:below_inc_above} for $\alpha$ gives
\begin{equation} 
\label{underline-alpha}
\underline{\alpha} = \frac{k+1}{N+1} - \frac{3}{2}\frac{\sqrt{k+1}}{N+1} \sqrt{\ln(k+1)+\ln \frac{14}{\delta}}.
\end{equation}
Hence, for those $k$ for which $\underline{\alpha} \geq \frac{k+1}{4N+2}$, we have that $\underline{\eps}_k \geq \underline{\alpha}$, which also clearly gives the looser inequality 
\begin{equation}
\label{eq:epsL_k_bound_aux}
\underline{\eps}_k \geq \frac{k+1}{N+1} - 3\frac{\sqrt{k+1}}{N+1} \sqrt{\ln(k+1)+\ln \frac{14}{\delta}}. 
\end{equation}
On the other hand, if $k$ is such that $\underline{\alpha} < \frac{k+1}{4N+2}$, then, substituting the expression for $\underline{\alpha}$ given in \eqref{underline-alpha} in relation $\underline{\alpha} < \frac{k+1}{4N+2}$ yields 
$$
\frac{3}{2}\frac{\sqrt{k+1}}{N+1} \sqrt{\ln(k+1)+\ln \frac{14}{\delta}} > \frac{k+1}{N+1} - \frac{k+1}{4N+2} > \frac{1}{2}\cdot\frac{k+1}{N+1},
$$
from which
$$
\frac{k+1}{N+1} - 3\frac{\sqrt{k+1}}{N+1} \sqrt{\ln(k+1)+\ln \frac{14}{\delta}} < 0,
$$
and, since $\underline{\eps}_k \geq 0$ by definition, we conclude that \eqref{eq:epsL_k_bound_aux} remains valid in this case as well. \\
\\
Go now back to re-consider condition $k > \ln(\frac{3}{\delta})$ introduced at the beginning of this part about lower bounding $\underline{\eps}_k$, and consider the opposite case that $k \leq \ln(\frac{3}{\delta})$. If so, it also holds that $k+1 \leq 3 \sqrt{k+1} \sqrt{\ln(k+1)+\ln \frac{14}{\delta}}$, which in turn implies that the right-hand side of \eqref{eq:epsL_k_bound_aux} is no bigger than $0$. This, in view of relation $\underline{\eps}_k \geq 0$, proves that \eqref{eq:epsL_k_bound_aux} is valid in this case as well. Hence, we conclude that \eqref{eq:epsL_k_bound_aux} is valid for all $k=0,1,\ldots,N$. Since $\frac{k+1}{N+1} \geq \frac{k}{N}$, $\frac{1}{N+1} < \frac{1}{N}$, and $\sqrt{\ln(k+1)+\ln \frac{14}{\delta}} \leq \sqrt{\ln(k+1)}+\sqrt{\ln(14)} + \sqrt{ \ln \frac{1}{\delta}}$, from \eqref{eq:epsL_k_bound_aux} we also obtain
$$
\underline{\eps}_k \geq \frac{k}{N} - 3\frac{\sqrt{k+1}}{N} \left(\sqrt{\ln(k+1)} + \sqrt{\ln(14)} \right) -3\frac{\sqrt{k+1} \sqrt{\ln \frac{1}{\delta}}}{N} .
$$
The bound in \eqref{eq:bounds4asympt_low} is finally obtained by noticing that $\sqrt{\ln(14)}<2$. \\
\\
This concludes the proof. \qed

\acks{The authors would like to thank professor Nicolò Cesa-Bianchi for insightful and inspiring discussions on various parts of this manuscript. They also express their gratitude to the editor and to anonimous reviewers for generously providing comments that helped improve the paper. \\
	
The research presented in this article has been partly supported by MUR under the PRIN 2022 project ``The Scenario Approach for Control and Non-Convex Design'' (project number D53D23001440006) and by FAIR (Future Artificial Intelligence Research) project, funded by the NextGenerationEU program within the PNRR-PE-AI scheme (M4C2, Investment 1.3, Line on Artificial Intelligence).
}

\newpage

\appendix

\section{A study of the graph of $\tilde{\Psi}_{k,\delta}$} \label{Appendix_Psi_tilde}

For $k = 0,1,\ldots,N-1$, the derivative of $\tilde{\Psi}_{k,\delta}$, i.e.,
$$
\frac{\dd  \tilde{\Psi}_{k,\delta}}{\dd \alpha}(\alpha) = \frac{\delta}{2N} \sum_{m=k}^{N-1} \frac{{m \choose k}}{{N \choose k}} (N-m) (1-\alpha)^{-(N-m+1)} - \frac{\delta}{6N} \sum_{m=N+1}^{4N} \frac{{m \choose k}}{{N \choose k}} (m-N) (1-\alpha)^{m-N-1},
$$
is a strictly increasing function with limit $-\infty$ for $\alpha \to -\infty$ and limit $+\infty$ for $\alpha \to 1$. Hence, over the interval $(-\infty,1)$, $\tilde{\Psi}_{k,\delta}$ is first decreasing and then increasing, with a unique minimum. Since $\tilde{\Psi}_{k,\delta} \to +\infty$ for both $\alpha \to -\infty$ and $\alpha \to 1$, to prove that the equation $\tilde{\Psi}_{k,\delta}(\alpha) = 1$ has indeed two solutions, it is enough to exhibit an $\bar{\alpha} \in (-\infty,1)$ for which $\tilde{\Psi}_{k,\delta}(\bar{\alpha}) < 1$. To this aim, consider $\bar{\alpha} = \frac{k}{N}$ and notice that
\begin{equation} \label{eq:psi_tilde_in_k/N}
\tilde{\Psi}_{k,\delta}(k/N) = \frac{\delta}{2} \left[ \frac{1}{N} \sum_{m=k}^{N-1} \frac{{m \choose k}}{{N \choose k}} \frac{N^{N-m}}{(N-k)^{N-m}} + \frac{1}{3N} \sum_{m=N+1}^{4N} \frac{{m \choose k}}{{N \choose k}} \frac{(N-k)^{m-N}}{N^{m-N}} \right].
\end{equation}
For $m = k, \ldots, N-1$, we have
$$
\frac{{m \choose k}}{{N \choose k}} \frac{N^{N-m}}{(N-k)^{N-m}} = \prod_{i=0}^{N-m-1} \left( \frac{N-k-i}{N-i} \cdot \frac{N}{N-k} \right) \leq 1,
$$
where the inequality is satisfied since each term $\frac{N-k-i}{N-i} \cdot \frac{N}{N-k}$ in the product is no bigger than $1$. Similarly, for $m = N+1, \ldots, 4N$ we have
$$
\frac{{m \choose k}}{{N \choose k}} \frac{(N-k)^{m-N}}{N^{m-N}} = \prod_{i=1}^{m-N} \left( \frac{N+i}{N-k+i} \cdot \frac{N-k}{N} \right) \leq 1
$$
(again it is straightforward to verify that each term $\frac{N+i}{N-k+i} \cdot \frac{N-k}{N}$ is no bigger than $1$). We therefore see that both sums in the square brackets in the right-hand side of \eqref{eq:psi_tilde_in_k/N} are arithmetic means of quantities no bigger than $1$, and therefore they are no bigger than $1$ as well. This gives $\tilde{\Psi}_{k,\delta}(k/N) \leq \delta < 1$, and it also shows that $\underline{\alpha}_k < \frac{k}{N} < \overline{\alpha}_k$. 

\section{\textsf{MATLAB} code} \label{appendix:MATLAB_code}

In this Appendix, we provide efficient bisection algorithms in the \textsf{MATLAB} computing environment for the evaluation of $\eps_k$, $\underline{\eps}_k$ and $\overline{\eps}_k$. The algorithms take advantage of some reformulations of the equations $\Psi_{k,\delta}(\alpha) = 1$ and $\tilde{\Psi}_{k,\delta}(\alpha) = 1$ as discussed in the beginning of the proof of Proposition \ref{th:bounds4asympt} in Section \ref{proof_bounds4asympt}. A summary of these reformulations are provided before the \textsf{MATLAB} codes. 

\subsection{Bisection algorithm for the computation of $\eps_k$}
\label{appendix-bisection-algo1}

Equation $\Psi_{k,\delta}(\alpha) = 1$ for $k=0,\ldots,N-1$ can be rewritten as
$$
\frac{\delta}{N} \sum_{m=k}^{N-1} {m \choose k} (1-\alpha)^{m-k} = {N \choose k} (1-\alpha)^{N-k},
$$
and, using a formula analogous to \eqref{eq:finite_series_via_beta} in Section \ref{proof_bounds4asympt}, but with $N-1$ in place of $4N$, this equation is shown to be equivalent over the interval $(0,1)$ to
$$
\delta \left( 1-\sum_{i=0}^k {N \choose k} \alpha^i (1-\alpha)^{N-i} \right) =  \alpha N  {N \choose k} \alpha^k (1-\alpha)^{N-k}.
$$
The only solution to this equation in $(0,1)$ is $\eps_k$ and it can be computed by bisection starting from $0$ and $1$ as initial extremes. Apart from the coefficient $\delta$, the left-hand side is an incomplete Beta function with parameters $k+1$ and $N-k$, which can be efficiently and accurately evaluated with the $\texttt{betainc}$ function of \textsf{MATLAB}. Similarly, apart from the term $\alpha N$, the right-hand side can be computed as the difference of two incomplete beta functions. The following code provides a ready-to-use implementation of the bisection algorithm in the \textsf{MATLAB} environment.  

\begin{verbatim}
function eps = find_eps(k,N,delta)

if k==N
eps = 1;
else    
t1 = 0;
t2 = 1;
while t2-t1 > 1e-10
t = (t1+t2)/2;
left = delta*betainc(t,k+1,N-k);
right = t*N*(betainc(t,k,N-k+1)-betainc(t,k+1,N-k));
if left > right
t2=t;
else
t1=t;
end        
end
eps = t2;
end

end
\end{verbatim}

\subsection{Bisection algorithm for the computation of $\underline{\eps}_k$ and $\overline{\eps}_k$}
\label{appendix-bisection-algo2}

Following the same argument given in the proof of Proposition \ref{th:bounds4asympt} to derive \eqref{eq:var_tilde_psi_eq_2} (see Section \ref{proof_bounds4asympt}), it is easy to see that equation $\tilde{\Psi}_{k,\delta}(\alpha) = 1$ can be reformulated as
$$
\frac{\delta}{3N} \sum_{m=k}^{N-1} {m \choose k} (1-\alpha)^{m-k} + \frac{\delta}{6N} \sum_{m=k}^{4N} {m \choose k} (1-\alpha)^{m-k} = \left(1 + \frac{\delta}{6N} \right) {N \choose k} (1-\alpha)^{N-k}.
$$
Using again formula \eqref{eq:finite_series_via_beta} for the second term in the left-hand side and its variant with $N-1$ in place of $4N$ for the first term, one obtains that $\tilde{\Psi}_{k,\delta}(\alpha) = 1$ is equivalent over the interval $(0,1)$ to the equation
\begin{align*}
& \frac{\delta}{3} \left( 1-\sum_{i=0}^k {N \choose k} \alpha^i (1-\alpha)^{N-i} \right) + \frac{\delta}{6} \left( 1-\sum_{i=0}^k {4N+1 \choose k} \alpha^i (1-\alpha)^{4N+1-i} \right)  \\
& \quad = \left(1+\frac{\delta}{6N}\right) \alpha N {N \choose k} \alpha^{k} (1-\alpha)^{N-k},
\end{align*}
where, again, the left- and the right-hand sides can be conveniently computed via the incomplete Beta function. A bisection algorithm with $\frac{k}{N}$ and $1$ as extremes can be used to compute $\overline{\alpha}_k = \overline{\eps}_k$ for $k=0,\ldots,N-1$; instead, for $k=0,\ldots,N$, using $0$ and $\frac{k}{N}$ as extremes, the bisection algorithm converges to $\underline{\alpha}_k = \underline{\eps}_k$ when $\underline{\alpha}_k > 0$ and to $0 = \underline{\eps}_k$ when $\underline{\alpha}_k \leq 0$. The following code provides an implementation in \textsf{MATLAB}.

\begin{verbatim}
function [epsL, epsU] = find_epsLU(k,N,delta)

t1 = 0;
t2 = k/N;
while t2-t1 > 1e-10
t = (t1+t2)/2;
left = delta/3*betainc(t,k+1,N-k)+delta/6*betainc(t,k+1,4*N+1-k);
right = (1+delta/6/N)*t*N*(betainc(t,k,N-k+1)-betainc(t,k+1,N-k));
if left > right
t1=t;
else
t2=t;
end
end
epsL = t1;

if k==N
epsU = 1;
else
t1 = k/N;
t2 = 1;
while t2-t1 > 1e-10
t = (t1+t2)/2;
left = (delta/2-delta/6)*betainc(t,k+1,N-k)+delta/6*betainc(t,k+1,4*N+1-k);
right = (1+delta/6/N)*t*N*(betainc(t,k,N-k+1)-betainc(t,k+1,N-k));
if left > right
t2=t;
else
t1=t;
end
end
epsU = t2;
end 

end
\end{verbatim}

\section{Proof of (B) in \eqref{equality-duality}}
\label{Appendix strong duality}

Let $\tau > 0$ be a number smaller than $1-\eps_k$ for all $k$'s for which $\eps_k \neq 1$. Matters of convenience (as shown later) suggest to introduce a modified Lagrangian that corresponds to a continuous cost function as follows
\begin{eqnarray*}
\mathfrak{L}_\tau & = & \sum_{k=0}^N {N \choose k} \int_{[0,1]} \varphi_{k,\tau}(\alpha) \; \dd \mathfrak{m}^+_{k,N}
- \sum_{m=0}^M \lambda_m \left( \sum_{k=0}^m {m \choose k} \int_{[0,1]} \dd \mathfrak{m}^+_{k,m} - 1 \right) \nonumber \\
& & - \sum_{m=0}^{M-1} \sum_{k=0}^m \int_{[0,1]} \mu^+_{k,m}(\alpha) \; \dd \! \left[  \mathfrak{m}^+_{k,m+1} - (1-\alpha) \mathfrak{m}^+_{k,m} \right],
\end{eqnarray*}
where: for all $k$ for which $\eps_k \neq 1$, $\varphi_{k,\tau}(\alpha)$ is a continuous function equal to $0$ for $\alpha \in [0,\eps_k - \tau]$,
equal to $1$ for $\alpha \in [\eps_k, 1]$, and with a linear slope connecting $0$ to $1$ in between; while $\varphi_{k,\tau}(\alpha)$ is identically zero when $\eps_k = 1$. We show below the validity of the following relations: 
{ \everymath={\displaystyle}
\begin{equation}
\label{fundamnetal relations}
\begin{array}{ccc}
\sup_{\{\mathfrak{m}^+_{k,m}\}} \inf_{\substack{\{\lambda_m\} \\ \{\mu^+_{k,m}\}}} \mathfrak{L}_\tau &
= & \inf_{\substack{\{\lambda_m\} \\ \{\mu^+_{k,m}\}}} \sup_{\{\mathfrak{m}^+_{k,m}\}} \mathfrak{L}_\tau \\
\downarrow_{\tau \downarrow 0} & & \rotgeq \\
\sup_{\{\mathfrak{m}^+_{k,m}\}} \inf_{\substack{\{\lambda_m\}\\ \{\mu^+_{k,m}\}}} \mathfrak{L} &
\leq  & \inf_{\substack{\{\lambda_m\} \\ \{\mu^+_{k,m}\}}} \sup_{\{\mathfrak{m}^+_{k,m}\}} \mathfrak{L}.
\end{array}
\end{equation}
}
Notice that the above relations imply the sought result that
$$
\sup_{\{\mathfrak{m}^+_{k,m}\}} \inf_{\substack{\{\lambda_m\} \\ \{\mu^+_{k,m}\}}} \mathfrak{L} = \inf_{\substack{\{\lambda_m\} \\ \{\mu^+_{k,m}\}}} \sup_{\{\mathfrak{m}^+_{k,m}\}} \mathfrak{L}
$$
because 
$$
\inf_{\substack{\{\lambda_m\} \\ \{\mu^+_{k,m}\}}} \sup_{\{\mathfrak{m}^+_{k,m}\}} \mathfrak{L}
$$
is in sandwich between
$$
\sup_{\{\mathfrak{m}^+_{k,m}\}} \inf_{\substack{\{\lambda_m\} \\ \{\mu^+_{k,m}\}}} \mathfrak{L}
$$
and
$$
\inf_{\substack{\{\lambda_m\} \\ \{\mu^+_{k,m}\}}} \sup_{\{\mathfrak{m}^+_{k,m}\}} \mathfrak{L}_\tau,
$$
two quantities that converge one onto the other as $\tau \downarrow 0$. \\
\\
The two inequalities in \eqref{fundamnetal relations} are justified as follows: the $\leq$ at the bottom of \eqref{fundamnetal relations} is valid because relation ``$\sup \inf \leq \inf \sup$'' is always true, while the $\rotgeq$ on the right follows from the fact that $\varphi_{k,\tau}(\alpha)$ in $\mathfrak{L}_\tau$ is greater than or equal to $\One{\alpha \in (\eps_k,1]}$ in $\mathfrak{L}$. \\
\\
What remains to show is thus the $=$ at the top of \eqref{fundamnetal relations} and the convergence $\downarrow_{\tau \downarrow 0}$ on the left. \\
\\
We first show that
\begin{equation} \label{eq:supinf=infsup_tau}
\sup_{\{\mathfrak{m}^+_{k,m}\}} \inf_{\substack{\{\lambda_m\} \\ \{\mu^+_{k,m}\}}} \mathfrak{L}_\tau = \inf_{\substack{\{\lambda_m\} \\ \{\mu^+_{k,m}\}}} \sup_{\{\mathfrak{m}^+_{k,m}\}} \mathfrak{L}_\tau,
\end{equation}
for which purpose we need to introduce a proper topological vector space,
\cite{Rudin_FA}, as specified in the following.
\begin{itemize}
\item[]
Consider the vector space $\mathcal{M}$ of finite signed measures $\mathfrak{m}$ with support on $[0,1]$. Moreover, let $\mathcal{LF}$ be the vector space of linear functionals on $\mathcal{M}$ of the form $\int_{[0,1]} \mu(\alpha) \; \dd \mathfrak{m}$, where $\mu$ is a  continuous function ($\mu \in \textsf{C}^0[0,1]$). In $\mathcal{M}$, introduce the weak topology induced by $\mathcal{LF}$, see Section 3.8 in \cite{Rudin_FA}. This weak topology makes $\mathcal{M}$ into a locally convex topological vector space whose dual space coincides with $\mathcal{LF}$, see Theorem 3.10 in \cite{Rudin_FA}.\footnote{For the applicability of Theorem 3.10, one needs that $\mathcal{LF}$ ``separates'' $\mathcal{M}$, a fact that follows from Footnote \ref{key_footnote}.} By also considering the standard topology of $\R$ generated by open intervals, the ambient space in which we are going to work is the topological vector space given by $\R \times \R^{M+1} \times \mathcal{M}^{\frac{(M+1)M}{2}} =: \mathcal{S}$ equipped with the product topology. \\
\\
The interpretation of $\mathcal{S}$ is that it is the codomain of an operator that maps $\mathfrak{m}^+_{k,m}$, $m=0,1,\ldots,M$, $k=,0,\ldots,m$ into an element of $\mathcal{S}$ according to the rule:
\begin{eqnarray*}
\big\{ \mathfrak{m}^+_{k,m} \big\}_{\begin{subarray}{l}m=0,1,\ldots,M \\ k=0,\ldots,m \end{subarray}} & \to & \left\{
\begin{array}{ll}
\sum_{k=0}^N {N \choose k} \int_{[0,1]} \varphi_{k,\tau}(\alpha) \ \dd \mathfrak{m}^+_{k,N} & \quad (\in \R) \\
\left\{ \sum_{k=0}^m {m \choose k} \int_{[0,1]} \dd \mathfrak{m}^+_{k,m} - 1 \right\}_{m=0,1,\ldots,M}  & \quad (\in \R^{M+1}) \\
\big\{ \mathfrak{m}^+_{k,m+1} - (1-\alpha) \; \mathfrak{m}^+_{k,m} \big\}_{\begin{subarray}{l}m=0,1,\ldots,M-1 \\ k=0,\ldots,m \end{subarray}} & \quad (\in \mathcal{M}^{\frac{(M+1)M}{2}})
\end{array}
\right.
\end{eqnarray*}
(note that this operator returns various terms that are found in $\mathfrak{L}_\tau$). We next consider the image of this operator, that is, the range of points
in $\mathcal{S}$ that are reached as $\{\mathfrak{m}^+_{k,m}\}$ varies in
its domain $\left(\mathcal{M}^+\right)^{\frac{(M+2)(M+1)}{2}}$. To this image, we further add an arbitrary positive measure $\mathfrak{p}^+_{k,m}$
to each term  $\mathfrak{m}^+_{k,m+1} - (1-\alpha) \; \mathfrak{m}^+_{k,m}$ (the reason for this will become clear shortly). The final set that is
obtained as $\{\mathfrak{m}^+_{k,m}\}$ and $\{\mathfrak{p}^+_{k,m}\}$ vary over their domains is denoted by $H$:
\begin{eqnarray}
H & := & \Big\{(v,\{r_m\},\big\{ \mathfrak{q}_{k,m} \big\}) \in \R \times \R^{M+1} \times \mathcal{M}^{\frac{(M+1)M}{2}} : \nonumber \\
& & \quad \quad \quad v = \sum_{k=0}^N {N \choose k} \int_{[0,1]} \varphi_{k,\tau}(\alpha) \ \dd \mathfrak{m}^+_{k,N}, \nonumber \\
& & \quad \quad \quad \{r_m\} = \left\{ \sum_{k=0}^m {m \choose k} \int_{[0,1]} \dd \mathfrak{m}^+_{k,m} - 1 \right\}, \nonumber \\
& & \quad \quad \quad \big\{ \mathfrak{q}_{k,m} \big\} = \big\{\mathfrak{m}^+_{k,m+1} - (1-\alpha) \; \mathfrak{m}^+_{k,m} + \mathfrak{p}^+_{k,m} \big\}, \nonumber \\
& & \quad \mbox{where, for all } m \mbox{ and } k, \ \ \mathfrak{m}^+_{k,m} \in \mathcal{M}^+, \mathfrak{p}^+_{k,m} \in \mathcal{M}^+ \Big\}. \label{definition H}
\end{eqnarray}
The closure of $H$ in the topology of $\mathcal{S}$ is denoted by $\bar{H}$.\footnote{The closure $\bar{H}$ is formed by all contact points of $H$, where a point is of contact if any neighborhood of the point contains at least one point in $H$; clearly, any point $h \in H$ also belongs to $\bar{H}$.} The following definitions refer to the restrictions of $H$ and $\bar{H}$ to the line where all $r_m$ and $\mathfrak{q}_{k,m}$ are set to
$0$ (i.e., the zero element in $\R$ and $\mathcal{M}$, respectively): quantities
\begin{eqnarray*}
V & := & \sup \Big\{ v : \; (v,\{r_m = 0 \},\big\{ \mathfrak{q}_{k,m}
= 0 \big\}) \in H \Big\} \\
\bar{V} & := & \sup \Big\{ v : \; (v,\{ r_m = 0 \}, \big\{ \mathfrak{q}_{k,m} = 0 \big\}) \in \bar{H} \Big\}
\end{eqnarray*}
are called \emph{value} and \emph{supervalue}, respectively.\footnote{Note that, in the definition of $V$, $\sup$ is taken over a nonempty set. As
a matter of fact, owing to \eqref{condition(i)} and \eqref{condition(ii)}, any compression scheme satisfying the \emph{preference} property gives rise to
measures $\mathfrak{m}^+_{k,m}$ such that $r_m = 0$ for all $m$ and $\mathfrak{q}_{k,m} = 0$ for all $m$ and $k$ by choosing $\mathfrak{p}^+_{k,m} = -( \mathfrak{m}^+_{k,m+1} - (1-\alpha) \; \mathfrak{m}^+_{k,m})$. It is also worth noticing that $\bar{V}$ (and hence $V$ too) is finite and no bigger than $1$. In fact, by the definition of $H$, every point in $H$ satisfies $v \leq r_N+1$. On the other hand, if it were that $\bar{V} > 1$, then there would exist a contact point of $H$ such that $v > 1$ and
$r_N = 0$, which is in contradiction with the fact that $v \leq r_N+1$ for all points in $H$.
}
With this notation, we have
$$
\sup_{\{\mathfrak{m}^+_{k,m}\}} \inf_{\substack{\{\lambda_m\} \\ \{\mu^+_{k,m}\}}} \mathfrak{L}_\tau = V
$$
(this fact easily follows from an argument similar to the proof of equality (A) in \eqref{equality-duality} after noting that $V$ in the present context plays the same role as $\gamma_M$ in left-hand side of\eqref{equality-duality}). On the other hand, we also have
\begin{equation}
\label{tbp-Hahn-Banach}
\inf_{\substack{\{\lambda_m\} \\ \{\mu^+_{k,m}\}}} \sup_{\{\mathfrak{m}^+_{k,m}\}} \mathfrak{L}_\tau = \bar{V},
\end{equation}
which requires the proof given below, based on Hahn-Banach theorem. After showing this, the proof of \eqref{eq:supinf=infsup_tau} is concluded by proving that $V = \bar{V}$.
\begin{itemize}
\item[]
To prove \eqref{tbp-Hahn-Banach}, note that $\bar{H}$ is convex and closed and, for any $\varepsilon > 0$, point $s^\varepsilon := (\bar{V} + \varepsilon, \{ r_m = 0 \}, \{ \mathfrak{q}_{k,m} = 0 \}) \notin \bar{H}$. By an application of Hahn-Banach theorem (see Theorem 3.4 in \citealp{Rudin_FA}), one can therefore find a linear continuous functional defined over $\mathcal{S}$ that ``separates'' $\bar{H}$ from $s^\varepsilon$ in such
a way that the functional computed at any point of $\bar{H}$ is strictly smaller than the functional computed at $s^\varepsilon$.

A generic linear continuous functional defined over $\mathcal{S}$ is written as
\begin{equation}
\label{functionalHB}
a \cdot v - \sum_{m=0}^M \lambda_m r_m - \sum_{m=0}^{M-1} \sum_{k=0}^m \int_{[0,1]} \mu_{k,m}(\alpha) \; \dd \mathfrak{q}_{k,m},
\end{equation}
where $a,\lambda_m \in \R^{}$ and $\mu_{k,m} \in \textsf{C}^0[0,1]$, and hence the separation condition yields
\begin{equation} \label{inequality-HB}
\begin{array}{r}
a^\varepsilon \cdot v - \sum_{m=0}^M \lambda_m^\varepsilon r_m - \sum_{m=0}^{M-1} \sum_{k=0}^m \int_{[0,1]} \mu_{k,m}^\varepsilon(\alpha) \; \dd \mathfrak{q}_{k,m} < a^\varepsilon \cdot (\bar{V} + \varepsilon), \\
\forall (v, \{ r_m\}, \{ \mathfrak{q}_{k,m} \}) \in \bar{H},
\end{array}
\end{equation}
where $a^\varepsilon, \lambda_m^\varepsilon, \mu_{k,m}^\varepsilon(\alpha)$ are specific choices of $a, \lambda_m, \mu_{k,m}(\alpha)$ in \eqref{functionalHB}. Specializing \eqref{inequality-HB} to a point in $\bar{H}$ with $\{r_m = 0\}$ and $\{ \mathfrak{q}_{k,m} = 0 \}$ yields $a^\varepsilon \cdot v < a^\varepsilon \cdot (\bar{V} + \varepsilon)$, which implies $a^\varepsilon > 0$. Moreover, noting that $\mathfrak{q}_{k,m}$ contains $\mathfrak{p}^+_{k,m}$, which is positive and arbitrarily large, one concludes that $\mu_{k,m}^\varepsilon$ must be non-negative for the inequality to hold over the whole $\bar{H}$. To take notice of this fact, in subsequent derivations we write $\mu_{k,m}^{\varepsilon, +}$ in place of $\mu_{k,m}^\varepsilon$. Dividing by $a^\varepsilon$, inequality \eqref{inequality-HB} now gives
$$
v - \sum_{m=0}^M \frac{\lambda_m^\varepsilon}{a^\varepsilon} r_m - \sum_{m=0}^{M-1} \sum_{k=0}^m \int_{[0,1]} \frac{\mu_{k,m}^{\varepsilon,+}(\alpha)}{a^\varepsilon } \; \dd \mathfrak{q}_{k,m}
<
\bar{V} + \varepsilon, \ \ \ \forall (v, \{ r_m\}, \{ \mathfrak{q}_{k,m} \}) \in \bar{H}.
$$
Given the arbitrariness of $\varepsilon$ and restricting attention to $H \subseteq \bar{H}$, one concludes that
\begin{equation} \label{eq:inf_sup<=Vbar_1}
\inf_{\substack{\{\lambda_m\} \\ \{\mu^+_{k,m}\}}} \sup_{(v, \{ r_m\}, \{
\mathfrak{q}_{k,m} \}) \in H} \;
\left\{ v - \sum_{m=0}^M \lambda_m r_m - \sum_{m=0}^{M-1} \sum_{k=0}^m \int_{[0,1]} \mu_{k,m}^+(\alpha) \; \dd \mathfrak{q}_{k,m} \right\}
\leq
\bar{V}.
\end{equation}
On the other hand, recalling the expression of $v$, $r_m$, $\mathfrak{q}_{k,m}$ in the definition of $H$ given in \eqref{definition H}, the left-hand side of \eqref{eq:inf_sup<=Vbar_1} can be rewritten as
$$
\inf_{\substack{\{\lambda_m\} \\ \{\mu^+_{k,m}\}}} \sup_{\{\mathfrak{m}^+_{k,m}\} , \{\mathfrak{p}^+_{k,m}\}} \;
\left\{
\mathfrak{L}_\tau - \sum_{m=0}^{M-1} \sum_{k=0}^m \int_{[0,1]} \mu_{k,m}^+(\alpha) \; \dd \mathfrak{p}^+_{k,m}
\right\},
$$
which further becomes
\begin{equation}  \label{eq:inf_sup_rewritten}
\inf_{\substack{\{\lambda_m\} \\ \{\mu^+_{k,m}\}}} \left\{ \sup_{\{\mathfrak{m}^+_{k,m}\}} \; \mathfrak{L}_\tau + \sup_{\{\mathfrak{p}^+_{k,m}\}} \left\{ - \sum_{m=0}^{M-1} \sum_{k=0}^m \int_{[0,1]} \mu_{k,m}^+(\alpha) \; \dd \mathfrak{p}^+_{k,m} \right\} \right\}
= \inf_{\substack{\{\lambda_m\} \\ \{\mu^+_{k,m}\}}} \sup_{\{\mathfrak{m}^+_{k,m}\}} \mathfrak{L}_\tau,
\end{equation}
where in the last equality the second term has been suppressed because $\sup_{\{\mathfrak{p}^+_{k,m}\}}$ is taken over non-positive quantities and
$\mathfrak{p}^+_{k,m} = 0$ is admissible. Altogether, \eqref{eq:inf_sup<=Vbar_1} and \eqref{eq:inf_sup_rewritten} give the relation
\begin{equation}
\label{infsupL<V}
\inf_{\substack{\{\lambda_m\} \\ \{\mu^+_{k,m}\}}} \sup_{\{\mathfrak{m}^+_{k,m}\}}
\; \mathfrak{L}_\tau \leq \bar{V}.
\end{equation}
To prove \eqref{tbp-Hahn-Banach}, we show that strict inequality in \eqref{infsupL<V} cannot hold. Indeed, in the opposite, there would exist a linear continuous functional of the form \eqref{functionalHB} that separates $H$ from $\bar{P} := (\bar{V}, \{ r_m = 0 \}, \{ \mathfrak{q}_{k,m} = 0 \})$. If we now consider the open set $A$ obtained as counter-image of the reals greater than the value taken by this functional at
$\bar{P}$ minus a small enough margin, then $A$ contains $\bar{P}$, while
$A$ leaves out all $H$, contradicting the fact that $\bar{P}$ is a contact point of $H$.\footnote{\label{footnote-contact point}$\bar{P}$ is a contact point of $H$ because $\bar{V}$ is defined via a $\sup$ operation over contact points and, therefore, any neighborhood of $\bar{P}$ is also a neighborhood of a contact point $(v,\{r_m = 0 \},\big\{ \mathfrak{q}_{k,m} = 0 \big\})$ with $v$ close enough to $\bar{V}$, so that the neighborhood must contain a point of $H$.}
\end{itemize}

We now show that $V = \bar{V}$, so closing the proof of \eqref{eq:supinf=infsup_tau}. We start by constructing a sequence of neighborhoods of $\bar{P} = (\bar{V}, \{ r_m = 0 \}, \{ \mathfrak{q}_{k,m} = 0 \})$ that exhibit asymptotic properties of interest. Consider a countable set of continuous functions $g_1, g_2, \ldots$ dense in $\textsf{C}^0[0,1]$ with respect to the $\sup$ norm (e.g., polynomials with rational coefficients, see
Theorem 7.26 in \citealp{Rudin_PMA}). For $i=1,2,\ldots$, the neighborhoods of $\bar{P}$ are defined as follows:
\begin{eqnarray*}
O_i & := & \Big\{(v,\{r_m\},\{ \mathfrak{q}_{k,m} \}) \mbox{ with } |v - \bar{V}| < 1/i; \ |r_m| < 1/i, m=0,1,\ldots,M; \mbox{ and } \\
& &  \max_{j=1,\ldots,i}\left| \int_{[0,1]} g_j(\alpha) \; \dd \mathfrak{q}_{k,m} \right| < 1/i, m=0,1,\ldots,M-1 \mbox{ and } k=0,\ldots,m \Big\}.
\end{eqnarray*}
Further, for any $m=0,1,\ldots,M$ and $k=0,1,\ldots,m$ consider sequences $\mathfrak{m}^{+,i}_{k,m}$ and $\mathfrak{p}^{+,i}_{k,m}$ indexed in $i$ such that, for each $i=1,2,\ldots$, the pair $(\{\mathfrak{m}^{+,i}_{k,m} \},\{\mathfrak{p}^{+,i}_{k,m}\})$  maps into a point of $H$ that is also in $O_i$
(such sequences certainly exist since $\bar{P}$ is a contact point of $H$, see Footnote \ref{footnote-contact point}). For these sequences we have
\begin{align}
& \lim_{i \to \infty} \sum_{k=0}^N {N \choose k} \int_{[0,1]} \varphi_{k,\tau}(\alpha) \; \dd \mathfrak{m}^{+,i}_{k,N} = \bar{V}; \label{eq:cost=V_i=infty} \\
& \lim_{i \to \infty} \left[ \sum_{k=0}^m {m \choose k} \int_{[0,1]} \dd \mathfrak{m}^{+,i}_{k,m} - 1 \right] = 0, \ \ \  m=0,1,\ldots,M; \label{eq:vinc_int=0_i=infty} \\
& \lim_{i \to \infty} \int_{[0,1]} g_j(\alpha) \; \dd [\mathfrak{m}^{+,i}_{k,m+1} - (1-\alpha) \; \mathfrak{m}^{+,i}_{k,m} + \mathfrak{p}^{+,i}_{k,m}] = 0, \nonumber \\
& \quad \forall g_j(\alpha), \; j=1,2,\ldots, \ \ \ m=0,1,\ldots,M-1,
\ \ \ k=0,\ldots,m. \label{eq:weak_conv=0_i=infty}
\end{align}
In view of \eqref{eq:vinc_int=0_i=infty}, for a given $m$ and $k$, measures $\mathfrak{m}^{+,i}_{k,m}$ are uniformly bounded in $i$ (i.e., $\mathfrak{m}^{+,i}_{k,m} ( [0,1] ) \leq C$, $\forall i$, for some positive constant $C < +\infty$). Since measures $\mathfrak{m}^{+,i}_{k,m}$ are supported in $[0,1]$, by Prokhorov's theorem (see \citealp[Theorem 1, Section 2, Chapter III]{Shiryaev}),\footnote{In fact, a straightforward extended version of Prokhorov's theorem for positive and uniformly bounded  measures.} we then conclude that there exists a sub-sequence of indexes $i_h$ such that $\mathfrak{m}^{+,i_h}_{k,m}$ has weak limit $\bar{\mathfrak{m}}^+_{k,m} \in \mathcal{M}^+$. By repeating the same reasoning in a nested manner, we can further find a sub-sequence of the indexes $i_h$ such that weak convergence holds for a new choice of $m$ and $k$. Proceeding the same way for all choices of $m$ and $k$, we conclude that there exists a sub-sequence of indexes (which, with a little abuse of notation, we still indicate as $i_h$) for which
\begin{equation} \label{eq:Prokhorov_weak_conv}
\int_{[0,1]} f(\alpha) \; \dd \bar{\mathfrak{m}}^+_{k,m} = \lim_{i_h \to \infty} \int_{[0,1]} f(\alpha) \; \dd \mathfrak{m}^{+,i_h}_{k,m}, \quad
\forall \; m,k,
\end{equation}
holds for any continuous function $f \in \textsf{C}^0[0,1]$. \\
\\
Since $\varphi_{k,\tau}$, as well as the constant function equal to $1$, are continuous, \eqref{eq:Prokhorov_weak_conv} together with \eqref{eq:cost=V_i=infty} and \eqref{eq:vinc_int=0_i=infty} yield
\begin{equation} \label{eq:m_bar_attains_cost}
\sum_{k=0}^N {N \choose k} \int_{[0,1]} \varphi_{k,\tau}(\alpha) \; \dd
\bar{\mathfrak{m}}^+_{k,N} = \bar{V}
\end{equation}
and
\begin{equation} \label{eq:m_bar_attains_=_constr}
\sum_{k=0}^m {m \choose k} \int_{[0,1]} \dd \bar{\mathfrak{m}}^+_{k,m} - 1 = 0, \quad m=0,1,\ldots,M.
\end{equation}
Turn now to consider \eqref{eq:weak_conv=0_i=infty}, from which we have
\begin{equation} \label{eq:2_limits}
\lim_{i_h \to \infty} \int_{[0,1]} g_j(\alpha) \; \dd [\mathfrak{m}^{+,i_h}_{k,m+1} - (1-\alpha) \; \mathfrak{m}^{+,i_h}_{k,m}] = - \lim_{i_h \to \infty} \int_{[0,1]} g_j(\alpha) \; \dd \mathfrak{p}^{+,i_h}_{k,m},
\end{equation}
where the limit in \eqref{eq:weak_conv=0_i=infty} restricted to the sub-sequence $i_h$ can be broken up in the two limits in \eqref{eq:2_limits} because the left-hand side of \eqref{eq:2_limits} exists due to the weak convergence of measures $\mathfrak{m}^{+,i_h}_{k,m}$ (note that $g_j(\alpha)$ and $g_j(\alpha)(1-\alpha)$ are continuous functions). For the functions $g_j$ that are non-negative (i.e., $g_j(\alpha) \geq 0$, $\forall
\alpha$), which we henceforth write as $g^+_j$ to help interpretation, \eqref{eq:2_limits} gives
\begin{equation} \label{eq:nonpositivity_of_integral_seq}
\lim_{i_h \to \infty} \int_{[0,1]} g^+_j(\alpha) \; \dd [\mathfrak{m}^{+,i_h}_{k,m+1} - (1-\alpha) \; \mathfrak{m}^{+,i_h}_{k,m}] \leq 0.
\end{equation}
Taking now any non-negative function $f^+$ in $\textsf{C}^0[0,1]$ and noting that $f^+$ can be arbitrarily approximated in the $\sup$ norm by a function $g^+_j$,\footnote{Note that function $f^+(\alpha)$ can be zero for
some $\alpha$, so that an approximant, however close, might as well take negative values, against the requirement that the approximant is a non-negative $g^+_j$. Nonetheless, any $\varepsilon$-close approximant of $f^+(\alpha)+\varepsilon$ is non-negative and it is also a $2\varepsilon$-close approximant of $f^+(\alpha)$.} weak convergence of $\mathfrak{m}^{+,i_h}_{k,m}$ to $\bar{\mathfrak{m}}^+_{k,m}$ used in \eqref{eq:nonpositivity_of_integral_seq} yields
$$
\int_{[0,1]} f^+(\alpha) \; \dd [\bar{\mathfrak{m}}^+_{k,m+1} - (1-\alpha) \; \bar{\mathfrak{m}}^+_{k,m}] \leq 0,
$$
from which $\bar{\mathfrak{m}}^+_{k,m+1} - (1-\alpha) \; \bar{\mathfrak{m}}^+_{k,m}$ is a negative measure (recall Footnote \ref{key_footnote}). \\
\\
If we now choose $\bar{\mathfrak{p}}^+_{k,m} = -[\bar{\mathfrak{m}}^+_{k,m+1} - (1-\alpha) \; \bar{\mathfrak{m}}^+_{k,m}]$ (which is in $\mathcal{M}^+$), then in the light of \eqref{eq:m_bar_attains_cost} and \eqref{eq:m_bar_attains_=_constr} one sees that $(\{\bar{\mathfrak{m}}^+_{k,m} \},\{\bar{\mathfrak{p}}^+_{k,m}\})$ maps into the point $(\bar{V},\{r_m =
0 \},\big\{ \mathfrak{q}_{k,m} = 0 \big\})$, which proves that this point is in $H$. Hence, it holds that $V = \bar{V}$ and equation \eqref{eq:supinf=infsup_tau} remains proven.
\end{itemize}
We next show that
\begin{equation} \label{eq:limsupinfLtau=supinfL}
\lim_{\tau \to 0} \; \sup_{\{\mathfrak{m}^+_{k,m}\}} \inf_{\substack{\{\lambda_m\} \\ \{\mu^+_{k,m}\}}} \mathfrak{L}_\tau = \sup_{\{\mathfrak{m}^+_{k,m}\}} \inf_{\substack{\{\lambda_m\} \\ \{\mu^+_{k,m}\}}} \mathfrak{L},
\end{equation}
which is the only relation in \eqref{fundamnetal relations} that is still unproven, so concluding the proof.
\begin{itemize}
\item[] Notice that, in both sides of \eqref{eq:limsupinfLtau=supinfL},
the $\inf$ operator sends the value to $-\infty$ whenever the constraints
in \eqref{eq:primal_M_equality_constr} or \eqref{eq:primal_M_inequality_constr} are not satisfied by $\{ \mathfrak{m}^+_{k,m}\}$: hence, \eqref{eq:primal_M_equality_constr} and \eqref{eq:primal_M_inequality_constr} must
be satisfied and are always assumed from now on. Under \eqref{eq:primal_M_equality_constr} and \eqref{eq:primal_M_inequality_constr}, $\inf$ is attained for $\lambda_m = 0$ and $\mu^+_{k,m} = 0$ for all $m$ and $k$,
and \eqref{eq:limsupinfLtau=supinfL} is therefore rewritten as
\begin{equation} \label{lim_sup_cost_tau=lim_sup_cost}
\lim_{\tau \to 0} \; \sup_{\{\mathfrak{m}^+_{k,m}\}}
\sum_{k=0}^N {N \choose k} \int_{[0,1]} \varphi_{k,\tau}(\alpha) \; \dd
\mathfrak{m}^+_{k,N} = \sup_{\{\mathfrak{m}^+_{k,m}\}}
\sum_{k=0}^N {N \choose k} \int_{[0,1]} \One{\alpha \in (\eps_k,1]} \; \dd \mathfrak{m}^+_{k,N}.
\end{equation}
To show the validity of \eqref{lim_sup_cost_tau=lim_sup_cost}, we discretize $\tau$ into $\tau_i$, $i=1,2,\ldots$, $\tau_i \to 0$, and consider a sequence $\{ \breve{\mathfrak{m}}^{+,i}_{k,m} \}$, $i=1,2,\ldots$ (where measures $\breve{\mathfrak{m}}^{+,i}_{k,m}$ satisfy \eqref{eq:primal_M_equality_constr} and \eqref{eq:primal_M_inequality_constr} for any $i$), such that
$$
\lim_{i\to \infty} \sum_{k=0}^N {N \choose k} \int_{[0,1]} \varphi_{k,\tau_i}(\alpha) \; \dd \breve{\mathfrak{m}}^{+,i}_{k,N}
$$
equals the left-hand side of \eqref{lim_sup_cost_tau=lim_sup_cost} (for this to hold, $\breve{\mathfrak{m}}^{+,i}_{k,m}$ must achieve a progressively closer and closer approximation of $\sup_{\{\mathfrak{m}^+_{k,m}\}}$ in the left-hand side of \eqref{lim_sup_cost_tau=lim_sup_cost} as $i$ increases); then, we construct from $\{ \breve{\mathfrak{m}}^{+,i}_{k,m} \}$ a new
sequence $\{ \tilde{\mathfrak{m}}^{+,i}_{k,m} \}$, $i=1,2,\ldots$ (still satisfying \eqref{eq:primal_M_equality_constr} and \eqref{eq:primal_M_inequality_constr}), such that
\begin{equation} \label{eq:lim_cost_phi_m<=lim_cost_One_m_tilde}
\lim_{i\to \infty} \sum_{k=0}^N {N \choose k} \int_{[0,1]} \varphi_{k,\tau_i}(\alpha) \; \dd \breve{\mathfrak{m}}^{+,i}_{k,N} \leq \lim_{i\to \infty} \sum_{k=0}^N {N \choose k} \int_{[0,1]} \One{\alpha \in (\eps_k,1]} \; \dd \tilde{\mathfrak{m}}^{+,i}_{k,N},
\end{equation}
which shows that the left-hand side of \eqref{lim_sup_cost_tau=lim_sup_cost}
is upper-bounded by a value that is no bigger than the right-hand side of
\eqref{lim_sup_cost_tau=lim_sup_cost}. Since, on the other hand, the left-hand side of \eqref{lim_sup_cost_tau=lim_sup_cost} cannot be smaller than the right-hand side of \eqref{lim_sup_cost_tau=lim_sup_cost} because $\varphi_{k,\tau}(\alpha) \geq \One{\alpha \in (\eps_k,1]}$, \eqref{lim_sup_cost_tau=lim_sup_cost} remains proven. \\
\\
The construction of $\{ \tilde{\mathfrak{m}}^{+,i}_{k,m} \}$ is in three steps:
\begin{itemize}
\item[Step 1.] [construction of $\{ \check{\mathfrak{m}}^{+,i}_{k,m} \}$] For all $k \leq N$ for which $\eps_k \neq 1$ and for all $m$, move the probabilistic mass of $\breve{\mathfrak{m}}^{+,i}_{k,m}$ contained in the interval $(\eps_k-\tau_i,\eps_k]$ into a concentrated mass in point $\eps_k+\tau_i$; let $\check{\mathfrak{m}}^{+,i}_{k,m}$ be the corresponding measures.
\item[Step 2.] [construction of $\{ \hat{\mathfrak{m}}^{+,i}_{k,m} \}$]
The mass shift in Step 1 can lead to measures $\check{\mathfrak{m}}^{+,i}_{k,m}$ that violate condition \eqref{eq:primal_M_inequality_constr} in $\eps_k+\tau_i$; the new measures $\hat{\mathfrak{m}}^{+,i}_{k,m}$ restore
the validity of this condition. For all $k > N$ and all $k \leq N$ for which $\eps_k = 1$ (so that no mass shift has been performed in Step 1), let $\hat{\mathfrak{m}}^{+,i}_{k,m} = \check{\mathfrak{m}}^{+,i}_{k,m}$, for all $m = k, \ldots, M$. For all other $k$'s, let $\hat{\mathfrak{m}}^{+,i}_{k,k} = \check{\mathfrak{m}}^{+,i}_{k,k}$; then, verify sequentially for $m = k, \ldots, M-1$ whether the condition
	$$
	\check{\mathfrak{m}}^{+,i}_{k,m+1}(\{\eps_k+\tau_i\}) - (1-(\eps_k+\tau_i)) \; \hat{\mathfrak{m}}^{+,i}_{k,m}(\{\eps_k+\tau_i\}) \leq 0
	$$
is satisfied; if yes, let $\hat{\mathfrak{m}}^{+,i}_{k,m+1} = \check{\mathfrak{m}}^{+,i}_{k,m+1}$, otherwise trim $\check{\mathfrak{m}}^{+,i}_{k,m+1}(\{\eps_k+\tau_i\})$ to the value $(1-(\eps_k+\tau_i)) \; \hat{\mathfrak{m}}^{+,i}_{k,m}(\{\eps_k+\tau_i\})$ and define $\hat{\mathfrak{m}}^{+,i}_{k,m+1}$ as the trimmed version of $\check{\mathfrak{m}}^{+,i}_{k,m+1}$.
\item[Step 3.] [construction of $\{ \tilde{\mathfrak{m}}^{+,i}_{k,m} \}$] The trimming operation in Step 2 may have unbalanced some equalities in \eqref{eq:primal_M_equality_constr}, i.e., it may be that
	$$
	\sum_{k=0}^m {m \choose k} \int_{[0,1]} \dd \hat{\mathfrak{m}}^{+,i}_{k,m} < 1
	$$
for some $m$. If so, re-gain balance by adding to measure $\hat{\mathfrak{m}}^{+,i}_{m,m}$ a suitable probabilistic mass (e.g., concentrated in $\alpha=1$), while leaving other measures $\hat{\mathfrak{m}}^{+,i}_{k,m}$, $k \neq m$, unaltered. The so-obtained measures are $\tilde{\mathfrak{m}}^{+,i}_{k,m}$. Note that this operation preserves the validity of condition $\tilde{\mathfrak{m}}^{+,i}_{m,m+1} - (1-\alpha) \; \tilde{\mathfrak{m}}^{+,i}_{m,m} \in \mathcal{M}^-$, so that $\{ \tilde{\mathfrak{m}}^{+,i}_{k,m} \}$ satisfies \eqref{eq:primal_M_inequality_constr} besides \eqref{eq:primal_M_equality_constr}.
\end{itemize}
Since $\varphi_{k,\tau_i}(\alpha)$ is non-decreasing in $\alpha$, the mass shift in Step 1 can only increase $\sum_{k=0}^N {N \choose k} \int_{[0,1]} \varphi_{k,\tau_i}(\alpha) \; \dd \breve{\mathfrak{m}}^{+,i}_{k,N}$; moreover, any trimming and re-balancing in Steps 2 and 3 involve vanishing masses as $\tau_i \to 0$. Therefore,
\begin{equation} \label{eq:lim_cost_phi_m<=lim_cost_phi_m_tilde}
\lim_{i\to \infty} \sum_{k=0}^N {N \choose k} \int_{[0,1]} \varphi_{k,\tau_i}(\alpha) \; \dd \breve{\mathfrak{m}}^{+,i}_{k,N} \leq \lim_{i\to \infty} \sum_{k=0}^N {N \choose k} \int_{[0,1]} \varphi_{k,\tau_i}(\alpha) \; \dd \tilde{\mathfrak{m}}^{+,i}_{k,N}.
\end{equation}
On the other hand, by construction, $\varphi_{k,\tau_i}(\alpha) = \One{\alpha \in (\eps_k,1]}$ if $\eps_k = 1$, while, for $\eps_k \neq 1$, $\varphi_{k,\tau_i}(\alpha) \neq \One{\alpha \in (\eps_k,1]}$ only occurs on the interval $(\eps_k-\tau_i,\eps_k]$ where $\tilde{\mathfrak{m}}^{+,i}_{k,N}$ is null. Hence,
$$
\sum_{k=0}^N {N \choose k} \int_{[0,1]} \varphi_{k,\tau_i}(\alpha) \; \dd \tilde{\mathfrak{m}}^{+,i}_{k,N} = \sum_{k=0}^N {N \choose k} \int_{[0,1]} \One{\alpha \in (\eps_k,1]} \; \dd \tilde{\mathfrak{m}}^{+,i}_{k,N},
$$
which, substituted in \eqref{eq:lim_cost_phi_m<=lim_cost_phi_m_tilde}, gives \eqref{eq:lim_cost_phi_m<=lim_cost_One_m_tilde}.
\end{itemize}
This concludes the proof. \qed

\vskip 0.2in
\bibliography{compression_theory_new} 

\end{document}